%% file: arxiv.tex
\documentclass{article}
\usepackage[margin=1in]{geometry}

\usepackage{microtype}
\usepackage{graphicx}
\usepackage{authblk}
\usepackage{subcaption}
\usepackage{natbib}
\usepackage{booktabs} 
\usepackage{etoc}
\usepackage{titletoc}
\usepackage{parskip}
\usepackage{hyperref}
\hypersetup{colorlinks=true, linkcolor=blue, citecolor=blue, urlcolor=blue}

\usepackage{amsmath}
\usepackage{amssymb}
\usepackage{mathtools}
\usepackage{amsthm}

\usepackage[capitalize,noabbrev]{cleveref}

\theoremstyle{plain}
\newtheorem{theorem}{Theorem}[section]

\newtheorem{lemma}[theorem]{Lemma}

\theoremstyle{definition}
\newtheorem{definition}[theorem]{Definition}

\theoremstyle{remark}

\usepackage[textsize=tiny]{todonotes}
\usepackage{tcolorbox}
\usepackage{url}
\usepackage{amsthm}
\usepackage{subcaption}
\usepackage{cancel}

\definecolor{mgh}{rgb}{0.1,0.5,0.1}

\definecolor{lu}{rgb}{0.8,0.1,0.1}

\definecolor{gr}{rgb}{0.4,0.7,0.2}

\definecolor{st}{rgb}{0.1,0.5,0.1}

\definecolor{greenp}{rgb}{0.0, 0.51, 0.5}

\newcommand{\argmin}{\mathop{\rm argmin}}
\newcommand{\argmax}{\mathop{\rm argmax}}

\input{macros}
\title{\textbf{Provably avoiding over-optimization in Direct Preference Optimization without knowing the data distribution}}

\author[1]{Adam Barla\thanks{Equal contributing authors ordered alphabetically.}}
\author[1]{Emanuele Nevali\protect\footnotemark[1]}
\author[1]{Luca Viano\protect\footnotemark[1] \thanks{Correspondence to \texttt{luca.viano@epfl.ch}}}
\author[1]{Volkan Cevher}

\affil[1]{EPFL}

\begin{document}

\maketitle
\begin{abstract}
We introduce \texttt{PEPO} (\emph{Pessimistic Ensemble based Preference Optimization}), a single-step Direct Preference Optimization (DPO)-like algorithm to mitigate the well-known over-optimization issue in preference learning without requiring the knowledge of the data-generating distribution  $\pidata$ or learning an explicit reward model.   \texttt{PEPO} achieves pessimism via an ensemble of preference-optimized policies trained on disjoint data subsets and then aggregates them through a worst case construction that favors the agreement across models. In the tabular setting, \texttt{PEPO} achieves sample complexity guarantees depending only on a single-policy concentrability coefficient, thus avoiding the all-policy concentrability which affects the guarantees of algorithms prone to over-optimization, such as DPO.  The theoretical findings are corroborated by a convincing practical performance, while retaining the simplicity and the practicality of DPO-style training.
\end{abstract}


\section{Introduction}

Learning from binary preferences is a fundamental technique for the post-training of large language models \citep{christiano2017deep,ouyang2022training}. Among existing approaches, \emph{Direct Preference Optimization} (DPO) \citep{rafailov2023direct} has gained widespread adoption due to its strong empirical performance and simplicity of implementation: indeed, DPO elegantly bypasses the intermediate step of learning a reward function from the preference dataset. 

Despite these advantages, DPO is known to suffer from \emph{over-optimization}: as training progresses, improvements in optimizing the training loss do not necessarily translate into better generation quality, and may even degrade performance relative to the initial model \citep{gao2023scaling, park2024disentangling, xu2024is}.
Moreover, early stopping using the validation loss as a criterion is not reliable because it is only a loose proxy for what really matters, i.e. the quality of the responses that the model can generate in practice.

Several new DPO variants 
mitigate over-optimization via \emph{pessimistic regularization}. Notably, approaches such as $\chi^2$-regularized preference optimization ($\chi^2$PO) \citep{huang2024correcting} and DPO augmented with supervised fine-tuning (DPO+SFT) \citep{liu2024provably} achieve guarantees that depend only on a \emph{single-policy} concentrability coefficient, which is the theoretical certificate of robustness to over-optimization \citep{huang2024correcting}. Indeed, these guarantees imply that the learned policy is competitive with any policy sufficiently covered by the data. In stark contrast, non-pessimistic approaches suffer from an \emph{all-policy} concentrability coefficient, meaning that the output policy can be significantly suboptimal even compared to well-covered policies unless the dataset covers the whole policy class.  

A notable merit of the guarantees in \citep{huang2024correcting,liu2024provably} is that they apply to general function classes, even to non-convex ones in the case of $\chi^2$PO. However, these guarantees rely on the assumption that we have access to the data-generating distribution $\pidata$, or to a reference/base policy that is well-covered by $\pidata$. In practice, this assumption is often violated, for instance, when smaller models are post-trained using preference data generated by larger, proprietary systems whose logits can not be accessed when fine-tuning the smaller models. A common example is the UltraFeedback dataset  \citep{cui2023ultrafeedback} in which the responses are generated via a pool of LLMs, including proprietary ones such as  GPT-4. 

This gap motivates the following research question:


\begin{tcolorbox}[
  colback=blue!10!white,
  colframe=blue!80!black,
  width=\linewidth,
  before skip=10pt, 
  after skip=10pt   
]
\textbf{\textcolor{blue}{Open Question:}} Can we design a DPO-like algorithm (i.e., which avoids learning an explicit reward) that is provably robust to over-optimization without knowledge of data-generating distribution $\pidata$?
\end{tcolorbox}

Our work answers this question affirmatively by introducing \texttt{PEPO} (\emph{Pessimistic Ensemble based Preference Optimization}). The central idea behind \texttt{PEPO} is to resort to \emph{pessimism via ensembles}: a technique as simple as unexplored in the context of LLM alignment, especially in the DPO setting.

Concretely, \texttt{PEPO} trains an ensemble of lightweight DPO-style policies on disjoint subsets of the preference dataset, implemented via low-rank adapters \citep{hu2022lora}. Finally, at generation time, we use a certain notion of ensemble agreement to generate responses either via rejection sampling or approximating the (otherwise intractable) response-level normalization constant to the token-level counterpart.

\paragraph{Theoretical contribution} Our main theoretical result shows that, in the tabular setting, \texttt{PEPO} achieves sample complexity guarantees depending only on a single-policy concentrability coefficient, despite making no assumptions about $\pi_{\text{data}}$ and not learning a reward model. This contrasts with standard DPO \citep{rafailov2023direct}, which inherently incurs in an all-policy concentrability dependence, with existing pessimistic DPO variants that require access to the data-generating distribution \citep{huang2024correcting, liu2024provably} and with purely theoretical works which are not DPO-like because they require a reward modeling step \citep{zhu2023principled,zhan2023provable,schlaginhaufen2025efficient,li2023reinforcement}. Moreover, we characterize the optimal \texttt{PEPO} ensemble size: it is the smallest value that is sufficiently high to guarantee pessimism.

\paragraph{Practical contribution} Empirically we corroborate our analysis, demonstrating that \texttt{PEPO} consistently improves over DPO in post-training the following 7/8B parameters models: Zephyr-7B\footnote{\url{https://huggingface.co/alignment-handbook/zephyr-7b-sft-full}}, Llama-3.1-8B\footnote{\url{https://huggingface.co/allenai/Llama-3.1-Tulu-3-8B-SFT}}, and Mistral-7B\footnote{\url{https://huggingface.co/HuggingFaceH4/mistral-7b-sft-beta}} and even at 34B scale (Yi-34B \footnote{\url{https://huggingface.co/01-ai/Yi-34B}}), where \texttt{PEPO} still performs convincingly.

\paragraph{Paper Organization} The next section clarifies the setting and the notation used in this work.  \Cref{sec:algo} presents in greater detail our algorithm, i.e. $\texttt{PEPO}$. Then, \Cref{sec:theory} formally develops the guarantees for PEPO, 
and, finally  \Cref{sec:experiments} demonstrates its practical performance.

\section{Preliminaries}
In \emph{learning from preferences}, we consider a prompt/state space $\X$ and a response/action space $\A$, the learning algorithm receives as input a starting model $\piref: \X \rightarrow \Delta_{\A}$ which is usually a model that exits a \emph{Supervised Fine-Tuning} (SFT) routine, and a preference dataset of the form $\mathcal{D} = \bcc{X_n, A^+_n, A^-_n}^N_{n=1}$ where $X_n \in \X, A^+_n, A^-_n \in \A\times\A$ for all $n\in [N]$. In the dataset $\mathcal{D}$, we observe the initial prompts sampled from an initial distribution $\initial$, i.e. $X_n \sim \initial$ for each $n \in [N]$. Intuitively, $\initial(x)$ is the probability with which a human presents prompt $x$ to the language model.
Moreover, for each $X_n$, $\mathcal{D}$ contains two responses $A^+_n, A^-_n$ sampled from an \emph{unknown} distribution $\pidata$. 
Moreover, the action $A^+_n$ has been judged the winner over $A^-_n$ by a Bradley-Terry preference model defined as follows.
\begin{definition}\textbf{Bradley-Terry (BT) preference model }
Given  a prompt $x\in \X$, two responses $a,b \in \A\times\A$ and access to the \emph{ground truth} reward function $r^\star :\X\times\A \rightarrow [0, \RMAX] $ for some $\RMAX \in \mathbb{R}$, the Bradley-Terry preference model computes the probability of $a$ being a winner over $b$  in response to the prompt $x$ (event denoted as $a \succ b | x$ ) as $
\mathbb{P}^\mathrm{true}_{r^\star}(a\succ b | x) = \sigma(\Delta_{r^\star}(x,a,b))$
where $\Delta_{\mathfrak{f}}(x,a,b) = \mathfrak{f}(x,a) - \mathfrak{f}(x,b) $ for any function $\mathfrak{f}:\X\times\A \rightarrow \mathbb{R}$ and $\sigma(x) = \frac{1}{1 + e^{-x}}$ denotes the sigmoid function. 

Given the BT model above, the dataset is labeled sampling $Z \sim \mathrm{Bernoulli}(\mathbb{P}^\mathrm{true}_{r^\star}(a\succ b | x))$ for each $x,a,b$ appearing in the dataset. If $Z=1$, $a$ is declared as the winner; vice versa if $Z=0$, $b$ wins the comparison.
\end{definition}
The goal of the learning algorithm is to update the starting model $\piref$ to obtain a post-trained model $\piout$ such that it approximately equals
\begin{equation}
\label{eq:goal}
\argmax_{\pi\in\Pi} J_\beta(\pi) := \innerprod{\pi}{r^\star} - \beta KL(\pi,\piref)
\end{equation}
where $\Pi$ is the set of all mappings from prompts to distributions over responses $\Pi = \bcc{\pi: \X\rightarrow\Delta_{\A}}$. The inner product notation is defined for each policy $\pi \in \Pi$ and for any function $r:\X\times\A \rightarrow [0, \RMAX]$ as $\innerprod{\pi}{r}:= \sum_{x,a\in\X\times\A} \initial(x) \pi(a|x) r(x,a)$ and the average (over prompts) KL is defined for any policy pair $\pi,\pi'\in \Pi\times\Pi$ as $KL(\pi,\pi') = \sum_{x,a\in\X\times\A} \initial(x) \pi(a|x) \log \nicefrac{\pi(a|x)}{\pi'(a|x)}$.

The objective in \eqref{eq:goal} therefore prescribes to find a trade-off between two desiderata: (i) maximizing the \emph{ground truth} reward $r^\star$ used by the Bradley-Terry preference model to label the dataset and (ii) minimizing the shift as measured by the KL divergence from the initial model $\piref$. The value of $\beta \in (0, \infty)$ must be set as high as much it is important to stay close to $\piref$ to make sure that $\piout$ inherits the useful properties learned by $\piref$ from the data observed before entering the learning from preferences phase. 

We now describe two fundamental ways to solve \eqref{eq:goal}, which will serve as building blocks for our approach.

\paragraph{Reinforcement Learning from Human Feedback: RLHF}
In RLHF, the idea is to optimize \eqref{eq:goal} under two approximations. First, we approximate $r^\star$, which is unknown to the learner, with the maximum likelihood estimator, denoted by $\hat{r}$ and learned on the dataset $\mathcal{D}$ as follows:
\begin{equation}
\label{eq:rMLE}
\hat{r} = \argmax_{r: \X\times\A\rightarrow[0,\RMAX]} \sum^N_{n=1} \log \sigma (\Delta_r(X_n, A^+_n, A^-_n)).
\end{equation}
Secondly, in \eqref{eq:goal}, the prompt distribution $\initial$ is unknown; therefore, it must be approximated using, for example, the states observed in the dataset. All in all, RLHF methods output the policy maximizing the following empirical objective over $\pi\in \Pi$,
\begin{equation}
\label{eq:optPolicyforrMLE}\sum^N_{n=1}\sum_{a\in\A} \pi(a|X_n) \brr{\hat{r}(X_n,a) - \beta \log \frac{\pi(a|X_n)}{\piref(a|X_n)}}.   
\end{equation}
RLHF has the merit of being the conceptual core of the large majority of, if not all, alignment algorithms. Sadly, it has the drawback of being split into two training steps represented by equations \eqref{eq:rMLE} and \eqref{eq:optPolicyforrMLE} respectively. This has been elegantly avoided by DPO \citep{rafailov2023direct} with a reparameterization trick explained next.
\paragraph{Direct Preference Optimization}
The core observation is that \eqref{eq:goal}, for any reward function $r$, admits the following closed form solution
\[
\pi_r(a|x) = \frac{\piref(a|x) \exp(\frac{r(x,a)}{\beta})}{Z_r(x)},
\]
where $ Z_r(x) := \sum_{b} \piref(b|x) \exp(\frac{r(x,b)}{\beta})$. Albeit, being incomputable due to the hardness of computing  $Z_r(x)$ (unless the reward takes a very specific form \citep{matrenok2025quantile}), the closed form expression for $\pi_r$ allows to write the reward function as $$r(x,a) = \beta \log \nicefrac{\pi_r(a|x)}{\piref(a|x)} + \beta \log Z_r(x) \label{eq:rfrompi}.$$ Then, replacing this expression in \eqref{eq:rMLE}, we can ensure that $\pi_{\hat{r}}$ is in the solution set of the following single-step optimization problem, which is known as the DPO objective,
\begin{equation}
\label{eq:DPO}
    \pi_{\hat{r}} \in \argmax_{\pi \in \Pi} \sum^N_{n=1} \log \sigma (\beta \Delta_{\log \pi/\piref}(X_n, A^+_n, A^-_n))).
\end{equation}
Importantly, we have used that for each reward $r$,  $\beta \Delta_{\log \pi_r/\piref}(x,a,b) := \beta \log \frac{\pi_r(a|x)}{\piref(a|x)} - \beta \log \frac{\pi_r(b|x)}{\piref(b|x)} $ equals $\Delta_r(x,a,b)$ because the normalization constant $Z(x)$ cancels out.

\paragraph{Limitations of RLHF and DPO (i.e., the lack of pessimism):} The missing ingredient above is that both method maximizes $\hat{r}$ (directly or indirectly) blindly, i.e., treating it as if it were a perfect estimate of the ground truth $r^\star$. However, in practical situations where $N$ is finite, this is not the case: $\hat{r}$ might not be accurate, and taking into account its uncertainty becomes fundamentally important. As an example, think of a single state, two actions case in which $\hat{r}(a_1) = \hat{r}(a_2) + \epsilon$ for some small $\epsilon$ but the uncertainty\footnote{We keep uncertainty vague here for the sake of intuition.} of $\hat{r}(a_1)$ is much higher than $\hat{r}(a_2)$. Then, choosing $a_2$ is more reasonable while RHLF and DPO, which purely maximize $\hat{r}$ would output the highly risky action $a_1$.
In a less toy example targeted to LLMs, DPO and RLHF might achieve high reward under $\hat{r}$ exploiting hallucinations of $\hat{r}$, i.e., actions that have in fact small reward under the ground truth $r^\star$.

To mitigate these issues, the next section presents our techniques to incorporate pessimism in DPO. In contrast  to previous pessimistic DPO versions \citep{huang2024correcting,liu2024provably}, we \emph{assume no knowledge at all} of $\pidata$ whereas \citep{huang2024correcting} assumed that $\pidata = \piref$ and \citep{liu2024provably} leverages access to a policy $\pi_{\mathrm{base}}$ such that $\norm{\nicefrac{\pi_{\mathrm{base}}}{\pidata}}_{\infty}$ is small. 
\section{The algorithm}
\label{sec:algo}
Our algorithm, \texttt{PEPO}, applies  three important changes on top of DPO which will enable our guarantees featuring only the \emph{single-policy} concentrability coefficient: 
\begin{itemize}
\item We split the dataset in $L$ subsets of equal size and we maximize $L$ different instances of the pessimistic DPO objective with respect to independent LoRA parameters over each subset.
\item Finally, our theory motivates outputting a policy obtained aggregating the ensemble in a pessimistic manner. Unfortunately, computing such policy is intractable but we present a rejection sampling method to sample exactly from it. 
\end{itemize}
We now move to a detailed overview of these two components whose interplay is key for the theoretical guarantees presented in the next section. 
\begin{algorithm}[!t]
\caption{\texttt{PEPO} \label{alg:pepo}}
    \begin{algorithmic}[1]
    \STATE \textbf{Require} number of ensembles, i.e.  $L$.
    \STATE 
    \textcolor{orange}{\% Train time.}
    \STATE Create dataset partitions $\cup_{\ell\in[L]}\mathcal{D}^\ell = \mathcal{D}$.
        \FOR{$\ell \in [L]$} 
\STATE \[
\tilde{\pi}^\ell \in \argmax_{\pi\in\Pi} J_{\mathrm{pessDPO}}(\pi; \mathcal{D}^\ell) 
\]
\ENDFOR
\STATE Define the output distribution
\[
\piout(a|x) = \frac{\min_{\ell\in[L]}\tilde{\pi}^{\ell}(a|x) \exp\br{-\zeta^\ell(x)}}{\sum_{a\in\A}\min_{\ell\in[L]}\tilde{\pi}^{\ell}(a|x) \exp\br{-\zeta^\ell(x)}}
\]
\STATE \textcolor{orange}{\% Test time.}
\STATE \textbf{Require} User prompt $x_{\mathrm{test}} \in \X$
\STATE Invoke \Cref{alg:rejsampl} to sample $A \sim \piout(\cdot|x_{\mathrm{test}})$.
\end{algorithmic}
\end{algorithm}
\subsection{Creating the models ensemble}
We build on the pessimistic sigmoid function to train $L$ pessimistic language models over a partition of the preference dataset. To this end, we partition the data $\mathcal{D}=\bcc{X_n, A^+_n, A^-_n}^N_{n=1}$ in $L$ subsets, i.e., $\mathcal{D} = \cup^L_{\ell=1}\mathcal{D}^{\ell}$,
where we denote the data in the $\ell^{\mathrm{th}}$ chunk as $\mathcal{D}^{\ell} = \bcc{X_{n,\ell}, A^+_{n,\ell}, A^-_{n,\ell}}$.
At this point, we compute $\tilde{\pi}^\ell$ for each $\ell \in [L]$ as the solution of the following pessimistic DPO objective which modifies the original DPO objective in \eqref{eq:DPO} by using only the subset $\mathcal{D}^\ell$ instead of using the full dataset $\mathcal{D}$ and replacing the standard sigmoid with one horizontally shifted by $\lambda^\ell$, i.e. 
\begin{align}
\tilde{\pi}^\ell &\in \argmax_{\pi\in\overline{\Pi}} J_{\mathrm{pessDPO}}(\pi; \mathcal{D}^\ell)  := \sum_{x,a,b \in \mathcal{D}^\ell} \log \br{\sigma\br{\beta\Delta_{\log \pi/\piref}(x,a,b) + \lambda^\ell(x,a,b)}}, 
\label{eq:RegDPO} 
\end{align}
where $\lambda^\ell > 0$ parameter serves to reduce (slightly) the margin $\beta\Delta_{\log \pi/\piref}(x,a,b)$ necessary to induce a certain winning probability. Vice versa, the estimated reward margin between the winning and losing responses for a given winning probability is estimated more conservatively; it is smaller than the one that standard DPO would assign. The more the triplet $x,a,b$ feels uncertain for the model $\tilde{\pi}^\ell$, the larger $\lambda^\ell(x,a,b)$ becomes. Having a dependence on the ensemble index thus allows us to instantiate models with different levels of confidence for different regions of the state action space.
Moreover, we maximize \eqref{eq:RegDPO} over a subset of $\Pi$ denoted by $\overline{\Pi}$, defined as $\overline{\Pi} := \bcc{\pi \in \Pi : \abs{\beta \Delta_{\log \pi/\piref}} \leq 2\RMAX}$. This restriction is important in the analysis because it ensures that $\beta\Delta_{\log \tilde{\pi}^\ell/\piref}$ is bounded for each $\ell \in [L]$.
\subsection{Pessimism via ensemble}
We now aim at aggregating the $L$ models in the ensemble in a pessimistic manner. Towards this goal, recall that by the correspondence between policy and reward space given by \eqref{eq:rfrompi}, each implicit reward learned obtained as $\beta \log \frac{\tilde{\pi}^{\ell}(a|x)}{\piref(a|x)} - N^{-\ell}(x)\beta\sum_{X,A \in \cD\setminus\cD^\ell} \log \frac{\tilde{\pi}^\ell(A|X)}{\piref(A|X)}\mathds{1}\bcc{X = x}$ can be considered as an independent estimate of the ground truth reward, assuming without loss of generality that $\sum_b \pidata(b|x)r^\star(x,b) = 0$.

At this point, similarly to \cite{coste2023reward}, we aim at optimizing a notion of worst case reward over the ensemble. In our case, we choose  the worst case reward as $$ r^-(x,a) = \beta \min_{\ell \in [L]}  \br{\log \frac{\tilde{\pi}^{\ell}(a|x)}{\piref(a|x)} -  N^{-\ell}(x)\sum_{X,A \in \cD\setminus\cD^\ell} \log \frac{\tilde{\pi}^\ell(A|X)}{\piref(A|X)}\mathds{1}\bcc{X = x} }, 
\label{eq:worst_case_reward}$$ we will also denote $\zeta^\ell(x) = N^{-\ell}(x) \sum_{X,A \in \cD\setminus\cD^\ell} \log \frac{\tilde{\pi}^\ell(A|X)}{\piref(A|X)}\mathds{1}\bcc{X = x}  $ and we use the notation $N^{-\ell}(x)$ to indicate how many times the state $x$ appeared in the dataset $\cD\setminus \cD^\ell$. 
Then, we define our output policy as the policy learned by the following \emph{virtual} RLHF problem:
\begin{equation}
\piout = \argmax_{\pi\in \Pi} \innerprod{\pi}{r^-} - \beta D_{\mathrm{KL}}(\pi,\piref) .\label{eq:outputDPO}
\end{equation}
We dubbed the above problem as virtual because the program in \eqref{eq:outputDPO} never needs to be actually implemented in practice. Indeed, we will show next how to efficiently sample from $\piout$ via rejection sampling, thus bypassing the need for computing $\piout$ explicitly. 
\begin{algorithm}[!t]
\caption{\texttt{Efficient Trajectory Sampling via Rejection Sampling} \label{alg:rejsampl}}
    \begin{algorithmic}[1]
    \STATE \textbf{Require} Initial Prompt $x$, Nominator of $\piout$
    \[
    f_{\mathrm{out}}(x,a) = \min_{\ell\in[L]}\tilde{\pi}^{\ell}(a|x) e^{-\zeta^\ell(x)} 
    \]
    \STATE \textbf{Require:} Number of trials $N_{\RS} = \frac{\log(1/\delta)}{\sum_{a\in\A} f_{\mathrm{out}}(x,a)}$
    \FOR{$n=1, \dots, N_{\RS}$}
\STATE Sample $A^n \sim \pi_{\mathrm{prop}}(\cdot|x)$.
\STATE Compute  $\alpha = \frac{f_{\mathrm{out}}(x,A^n)}{\pi_{\mathrm{prop}}(A^n|x)}$ \textcolor{orange}{\% Notice that $\alpha \in [0,1]$.}
\STATE Sample $U \sim \mathrm{Unif}([0,1])$.
\STATE If $U \leq \alpha$, output $A^n$ otherwise continue.
    \ENDFOR
    \end{algorithmic}
\end{algorithm}
\subsection{Efficient Sampling from $\boldsymbol{\piout}$}
The structure of our efficient sampling scheme leverages the closed form solution of~\eqref{eq:outputDPO} which is offered by the following lemma.
\begin{lemma} \label{lemma:closed_form}
    The policy $\piout$ defined as solution of the optimization problem in \eqref{eq:outputDPO} satisfies for all $x,a\in \X\times\A$
    \[
\piout(a|x) = \frac{\min_{\ell\in[L]}\tilde{\pi}^{\ell}(a|x) e^{-\zeta^\ell(x)}}{\sum_{a\in\A}\min_{\ell\in[L]}\tilde{\pi}^{\ell}(a|x) e^{-\zeta^\ell(x)} }.
\]
\end{lemma}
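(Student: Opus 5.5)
The plan is to invoke the closed-form solution of the KL-regularized problem \eqref{eq:goal} recalled in the DPO paragraph. For any bounded reward $r$, the maximizer of $\innerprod{\pi}{r} - \beta KL(\pi,\piref)$ over $\Pi$ is $\pi_r(a|x) = \piref(a|x)\exp(r(x,a)/\beta)/Z_r(x)$. Because both the objective and the KL are averages over $x$ weighted by $\initial(x)$, the problem decouples into independent strictly concave problems, one per prompt. So, for every $x$ in the support of $\initial$, the maximizer is unique and is given by this Gibbs form.

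To justify this, I would apply the Gibbs variational principle (Donsker--Varadhan) prompt by prompt:
\[
\sum_a \pi(a|x) r(x,a) - \beta \sum_a \pi(a|x)\log\frac{\pi(a|x)}{\piref(a|x)} = \beta\log Z_r(x) - \beta\, \mathrm{KL}\big(\pi(\cdot|x)\,\|\,\pi_r(\cdot|x)\big).
\]
This quantity is maximized exactly when $\pi(\cdot|x) = \pi_r(\cdot|x)$. On prompts outside the support of $\initial$ the objective does not depend on $\pi$, so we take the Gibbs form as the canonical choice there; this is the convention under which the lemma holds for all $x$.

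Next I substitute $r = r^-$. Writing $\min_\ell$ of a quantity multiplied by $\beta>0$ and then dividing by $\beta$ gives
\[
\exp\!\big(r^-(x,a)/\beta\big) = \exp\!\Big(\min_{\ell}\big(\log \tilde\pi^\ell(a|x) - \log\piref(a|x) - \zeta^\ell(x)\big)\Big).
\]
Since $\exp$ is increasing, it commutes with $\min_\ell$. The term $-\log\piref(a|x)$ does not depend on $\ell$, so it comes out of the minimum. This yields
\[
\exp\!\big(r^-(x,a)/\beta\big) = \frac{1}{\piref(a|x)}\,\min_{\ell}\tilde\pi^\ell(a|x)e^{-\zeta^\ell(x)}.
\]
Multiplying by $\piref(a|x)$ cancels the reference policy in the numerator. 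The normalizer $Z_{r^-}(x)$ becomes $\sum_a \min_\ell \tilde\pi^\ell(a|x)e^{-\zeta^\ell(x)}$, which is exactly the claimed expression.

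The only delicate points are boundedness and positivity. Because $\tilde\pi^\ell \in \overline{\Pi}$, we have $|\beta\Delta_{\log\tilde\pi^\ell/\piref}| \le 2\RMAX$. Hence each $\log(\tilde\pi^\ell/\piref)$ is finite wherever $\piref>0$, so $r^-$ is finite and the normalizer is strictly positive. On actions with $\piref(a|x)=0$, both sides vanish: the KL term forces $\pi(a|x)=0$, and $\tilde\pi^\ell(a|x)=0$ for such actions since it must have finite KL relative to the reference. The main obstacle is this bookkeeping of supports; the algebra itself is routine.
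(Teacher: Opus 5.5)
Your proposal is correct and follows essentially the same route as the paper: identify the maximizer of \eqref{eq:outputDPO} as the Gibbs policy $\piref(a|x)e^{r^-(x,a)/\beta}/Z(x)$, substitute the definition of $r^-$, pull $\piref$ out of the minimum using monotonicity of $\exp$, and normalize. The differences are minor. You justify the Gibbs form through the variational identity, which gives global optimality and uniqueness on the support of $\initial$ directly, whereas the paper only writes the first-order stationarity condition. You also track supports explicitly; there, $\tilde{\pi}^\ell$ vanishes where $\piref$ does because of the $\overline{\Pi}$ constraint, not because of a finite-KL requirement, since \eqref{eq:RegDPO} has no KL term.
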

For the following discussion, let us denote the numerator of $\piout$ as $$f_{\mathrm{out}}(x,a) := \min_{\ell\in[L]}\tilde{\pi}^{\ell}(a|x) e^{-\zeta^\ell(x)},$$ and recall that $a$ stands here for a full response not simply a token. Therefore, the computation of the denominator which involves a sum over the responses space $\A$ is not tractable. However, rejection sampling (see \Cref{alg:rejsampl}) is a very attractive solution in our scenario. Indeed, any model in the ensemble can serve as an ideal proposal distribution since it upper bounds $f_{\mathrm{out}}(x,a)$ everywhere. To see this, notice that $\min_{\ell\in[L]}\tilde{\pi}^{\ell}(a|x) \leq \tilde{\pi}^{\ell'}(a|x) $ for any $\ell'$. These facts ensure that for any model $\tilde{\pi}^\ell$ the ratio  $f_{\mathrm{out}}(x,a)/\tilde{\pi}^\ell(a|x) \leq 1$ for all $x,a\in \X\times\A$.
Formally, the following lemma shows that with high probability \Cref{alg:rejsampl} will output a valid sample from $\piout$ in polynomial time.
\begin{lemma}
Let us consider an arbitrary prompt $x\in\X$ and recall that $f_{\mathrm{out}}(x,a)$ satisfies that $\piout(a|x) = f_{\mathrm{out}}(x,a)/\sum_{a\in\A} f_{\mathrm{out}}(x,a)$. Then,
\label{lemma:rejsampling}
\Cref{alg:rejsampl}, with access to a  proposal distribution $\pi_{\mathrm{prop}}$ such that $f_{\mathrm{out}}(x,a)/\pi_{\mathrm{prop}}(a|x) \leq 1$,   outputs a valid sample from $\piout(\cdot|x)$ after sampling at most $N_{\RS}(x) = \frac{\log(1/\delta)}{\sum_{a\in\A}f_{\mathrm{out}}(x,a)} $ times from $\pi_{\mathrm{prop}}(\cdot|x)$  with probability $1-\delta$.
\end{lemma}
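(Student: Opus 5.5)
The argument is the textbook analysis of rejection sampling, done in two parts: correctness of the output distribution, then a bound on the number of trials. Fix the prompt $x$ and write $Z(x) := \sum_{a\in\A} f_{\mathrm{out}}(x,a)$. The assumption $f_{\mathrm{out}}(x,a)/\pi_{\mathrm{prop}}(a|x) \leq 1$ ensures that the acceptance probability $\alpha$ in \Cref{alg:rejsampl} lies in $[0,1]$, so each trial is a well-defined Bernoulli experiment. Trials are independent because each one draws a fresh $A^n \sim \pi_{\mathrm{prop}}(\cdot|x)$ and a fresh $U\sim\mathrm{Unif}([0,1])$.

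\emph{Correctness.} In a single trial, the probability of proposing $a$ and then accepting it is
\[
\pi_{\mathrm{prop}}(a|x)\cdot \frac{f_{\mathrm{out}}(x,a)}{\pi_{\mathrm{prop}}(a|x)} = f_{\mathrm{out}}(x,a).
\]
This holds for $a$ with $\pi_{\mathrm{prop}}(a|x)>0$. If $\pi_{\mathrm{prop}}(a|x)=0$, the domination assumption forces $f_{\mathrm{out}}(x,a)=0$, so the identity still holds. Summing over $a$, a trial accepts with probability $p := Z(x)$. Conditional on acceptance, the output equals $a$ with probability $f_{\mathrm{out}}(x,a)/Z(x) = \piout(a|x)$. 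Since trials are i.i.d., the first accepted sample has this same law: conditioning on the first acceptance happening at trial $n$ does not change the distribution of the accepted value. Hence, whenever \Cref{alg:rejsampl} outputs, the output is an exact draw from $\piout(\cdot|x)$.

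\emph{Number of trials.} The number of trials until the first acceptance is geometric with parameter $p=Z(x)$. Note that $p\leq \sum_a \pi_{\mathrm{prop}}(a|x) = 1$. The probability that none of the first $N_{\RS}$ trials accepts is
\[
(1-p)^{N_{\RS}} \leq e^{-pN_{\RS}}.
\]
With $N_{\RS}(x) = \log(1/\delta)/Z(x)$ this equals $\delta$. Hence, with probability at least $1-\delta$, the algorithm outputs a valid sample within $N_{\RS}(x)$ proposals. If $N_{\RS}$ is not an integer, take its ceiling; this only decreases the failure probability.

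The main subtlety is making the correctness step rigorous. One must check that the law of the output, conditional on the event that some acceptance occurs within the budget, is still exactly $\piout$. This follows because, for each $n$, the accepted value at trial $n$ is independent of the event that trials $1,\dots,n-1$ rejected, and its law is $\piout$.
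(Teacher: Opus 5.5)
Your proof is correct and follows the same route as the paper. The per-trial acceptance probability is $Z(x)=\sum_{a\in\A} f_{\mathrm{out}}(x,a)$, the number of trials is geometric, and $(1-Z)^{N_{\RS}}\le e^{-Z N_{\RS}}=\delta$, which is equivalent to the paper's use of $\ln(1-Z)\le -Z$. You also verify that the accepted sample has law exactly $\piout(\cdot|x)$ and handle the cases $\pi_{\mathrm{prop}}(a|x)=0$ and non-integer $N_{\RS}$; the paper leaves these details implicit.
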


An interesting remark is that the number of trials $N_{\RS}(x)$
depends on the inverse of the \emph{ensemble disagreement} measured via $\sum_{a\in
A} f_{\mathrm{out}}(x,a)$. In words, the rejection sampling routine is expected to take longer to respond to prompts for which the ensemble members do not put a reasonably large probability on a common answer.   

\paragraph{Discussion of the hyperparameter} In our practical implementation  $\lambda^\ell$ and $L$ are treated as hyperparameters. In practice, we did not observe the practical performance to be significantly impacted by the choices for $\lambda^\ell$, at least in the ablation range we considered. $L$ is a more important hyperparameter as shown by our ablation presented in \Cref{app:exp}.

Finally, we provide our algorithm pseudocode in \Cref{alg:pepo} and we move to the theoretical analysis of \texttt{PEPO}.

\subsection{Theoretical Guarantees}
\label{sec:theory}
We present here the theoretical guarantees for $\texttt{PEPO}$ focusing on the tabular setting where the number of prompts and responses are finite. 
Our theoretical goal is to quantify the required dataset size $N$, so that with high probability $J_\beta(\pi^{\star}) - J_\beta(\piout) \leq \varepsilon$ for any policy $\pi^\star\in \Pi$. In words, we want the suboptimality of the policy learned by $\texttt{PEPO}$ w.r.t. the comparator $\pi^\star$ to be at most $\varepsilon$.
In particular, as motivated by \cite{huang2024correcting,liu2024provably} the gold standard for arguing that over-optimization is avoided is to prove a bound on $N$ scaling only with a \emph{single-policy concentrability} coefficient defined as $$C^\star := \sum_{x\in \X}\initial(x)\sum_{a\in\A}\frac{(\pi^\star(a|x))^2}{\pidata(a|x)}.$$  
In stark contrast, methods prone to over-optimization, such as DPO, incur in an \emph{all-policy concentrability} coefficient \cite{song2024importance} defined as
$C^{\mathrm{all}} := \max_{\pi\in \Pi}\sum_{x\in \X}\initial(x)\sum_{a\in\A}\frac{(\pi(a|x))^2}{\pidata(a|x)}.$
We highlight that $C^\star$ and $C^{\mathrm{all}}$ have a strongly differently qualitative behaviours. On the one hand, $C^\star$ is reasonably small as soon as sampling from $\pidata$ allows to observe with high enough probability all the actions which are often chosen by $\pi^\star$. On the other hand, $C^{\mathrm{all}}$ imposes basically to observe with positive probability all possible responses in the dataset \footnote{If $\Pi$ contains the uniform policy over responses.}, even those which would never be taken by $\pi^\star$. For our fast rate, we will also need a stroner notion of single policy concentrability coefficient, sometimes known as $L_{\infty}$ concentrability coefficient, defined as
$$
C^\star_\infty = \max_{x,a} \frac{\pi^\star(a|x)}{\pidata(a|x)}.
$$
Clearly, it holds that $C^\star \leq C^\star_\infty $.
We are now ready to present our main result: the sample complexity guarantees for \texttt{PEPO} reported in \Cref{thm:pepomain} which depends only on $C^\star$ and not on $C^\mathrm{all}$. 
\begin{theorem}
\label{thm:pepomain}
Let us consider running \texttt{PEPO} (\Cref{alg:pepo}) with $L=\ceil{\frac{7\log(\nicefrac{\abs{\X}\abs{\A}^2}{\delta})}{2}}$ many ensembles and a preferences dataset $\mathcal{D}$ of size $N$. Then, with probability at least $1-\delta$, we have that the suboptimality of the \texttt{PEPO} output, $\piout$, against any comparator policy $\pi^\star : \X\rightarrow\Delta_{\A}$ and
for any value of $\beta\in (0,\infty)$ is upper bounded as follows,
\[
J_\beta(\pi^{\star}) - J_\beta(\piout) \leq \widetilde{\mathcal{O}}\br{\min \bcc{\sqrt{\frac{C^\star e^{\RMAX} \abs{\X} \abs{\A}^2 \log \delta^{-1}}{N}}, \frac{C_\infty^\star e^{\RMAX} \abs{\X} \abs{\A}^2 \log \delta^{-1}}{\beta N} }}.
\]
\end{theorem}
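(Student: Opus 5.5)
The proof runs through the standard pessimism argument for KL-regularized objectives, carried out in reward space. By \Cref{lemma:closed_form}, $\piout$ is the exact maximizer of $\innerprod{\pi}{r^-}-\beta KL(\pi,\piref)$. For any comparator $\pi^\star$ we therefore decompose
\[
J_\beta(\pi^\star)-J_\beta(\piout)=\underbrace{\innerprod{\pi^\star}{r^\star-r^-}}_{(\mathrm{I})}+\underbrace{\big[\innerprod{\pi^\star}{r^-}-\beta KL(\pi^\star,\piref)-\innerprod{\piout}{r^-}+\beta KL(\piout,\piref)\big]}_{\le 0}+\underbrace{\innerprod{\piout}{r^--r^\star}}_{(\mathrm{II})}.
\]
Rewards are taken centered, $\sum_b\pidata(b|x)r^\star(x,b)=0$, which is why $\zeta^\ell$ is subtracted. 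The goal is then the event $\mathcal{E}$ on which, for every $(x,a)$,
\[
r^\star(x,a)-b(x,a)\lesssim r^-(x,a)\le r^\star(x,a),
\]
with a bonus $b(x,a)$ of order $\sqrt{e^{\RMAX}\log(\cdot)/(N_{x,a}/L)}$, where $N_{x,a}$ counts visits to $(x,a)$. On $\mathcal{E}$ term (II) is $\le 0$ and term (I) is at most $\innerprod{\pi^\star}{b}$.

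The first step is a per-member analysis in the tabular case. The shifted-sigmoid objective \eqref{eq:RegDPO} over $\overline{\Pi}$ is equivalent to a shifted BT MLE on the implicit reward $\hat r^\ell=\beta\log(\tilde\pi^\ell/\piref)$, using the reparametrization argument of the DPO paragraph. Margins are bounded by $2\RMAX$, so the logistic loss is $e^{-\RMAX}$-strongly convex. Standard MLE concentration on each chunk $\mathcal{D}^\ell$ then gives, for each fixed $(x,a,b)$, with constant probability (say $\ge 3/4$):
\begin{itemize}
\item the centered estimate $\hat r^\ell(x,a)-\zeta^\ell(x)$ is within $O(\sqrt{e^{\RMAX}/n^\ell_{x,a}})$ of $r^\star(x,a)$;
\item because of the shift $\lambda^\ell$, which I would choose of the order of that confidence width, it lies \emph{below} $r^\star(x,a)$.
\end{itemize}
The centering $\zeta^\ell$ is computed on the held-out data $\cD\setminus\cD^\ell$. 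This makes it independent of $\tilde\pi^\ell$ and turns it into an unbiased estimate of $\sum_b\pidata(b|x)\hat r^\ell(x,b)$, so a Hoeffding bound handles it.

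The second step is to boost via the ensemble. The $L$ chunks are disjoint, so the events "member $\ell$ is pessimistic at $(x,a)$" are independent. The minimum $r^-$ fails to be $\le r^\star$ only if all members fail, which happens with probability $\le 4^{-L}$. A Chernoff/median-style argument gives the matching lower bound: with at least a constant fraction of members accurate, the minimum is not far below. Choosing $L=\lceil\tfrac72\log(|\X||\A|^2/\delta)\rceil$ allows a union bound over all $|\X||\A|^2$ triplets, giving $\mathcal{E}$ with probability $1-\delta$. Each member sees only $N/L$ samples, which costs only logarithmic factors absorbed in $\widetilde{\mathcal O}$. I expect this to be the main obstacle, for two reasons:
\begin{itemize}
\item making the lower tail of the minimum controlled requires bounding every member, not just a majority, so I would instead show that each member's error exceeds $b$ with probability $\le\delta/(L|\X||\A|^2)$ and absorb this into the logs;
\item the pairwise comparison data must be handled so that the margin estimates translate into pointwise rewards via the centering.
\end{itemize}

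The final step converts the bonus into concentrability. With $N_{x,a}\approx N\initial(x)\pidata(a|x)$ (up to a multiplicative Chernoff factor), Cauchy–Schwarz gives
\[
\innerprod{\pi^\star}{b}\le\sqrt{\sum_{x,a}\initial\frac{(\pi^\star)^2}{\pidata}}\cdot\sqrt{\sum_{x,a}\initial\,\pidata\, b^2}\lesssim\sqrt{\frac{C^\star e^{\RMAX}|\X||\A|^2\log\delta^{-1}}{N}},
\]
where the extra $|\A|$ comes from union bounding over the paired responses. This yields the slow rate.

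For the fast rate I would not discard the KL gap. Both $\piout$ and the optimal policy are Gibbs policies, so the suboptimality equals $\beta KL(\pi_{r^\star},\piout)$ minus a term, and this is bounded by $\tfrac{1}{\beta}\,\mathbb{E}_{\pi^\star\text{-weighted}}[(r^\star-r^-)^2]$ through a second-order expansion of the log-partition function. The remaining step is to bound the change of measure to $\pidata$ by $C^\star_\infty$ and then use $\sum\initial\pidata b^2\lesssim e^{\RMAX}|\X||\A|^2\log/N$, which gives $C^\star_\infty e^{\RMAX}|\X||\A|^2\log\delta^{-1}/(\beta N)$. Taking the minimum of the two bounds completes the proof.
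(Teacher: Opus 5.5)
\textbf{There is a genuine gap in the pessimism step.}

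Your decomposition into (I), (II) and the nonpositive KL-optimality gap is correct, and it is essentially the paper's conversion step. Your fast-rate route also matches the paper's use of \citep[Lemma C.2]{ji2026optimal}. There, the one-sided condition $r\le r^\star$ is exactly what lets the second-order term be weighted by $\pi^\star$ rather than by an intermediate Gibbs policy. Note also that $\pi^\star$ must be the regularized optimum for that part.

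The problem is where you target the pointwise event $r^-(x,a)\le r^\star(x,a)$. You claim that the per-member events $\beta\log\frac{\tilde\pi^\ell(a|x)}{\piref(a|x)}-\beta\zeta^\ell(x)\le r^\star(x,a)$ are independent across $\ell$ because the chunks are disjoint. They are not. $\zeta^\ell$ is computed on $\cD\setminus\cD^\ell$, which contains every other chunk, so each event depends on the whole dataset and the $4^{-L}$ boosting is unjustified.

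Separately, $\zeta^\ell$ adds two-sided noise of order $\RMAX\sqrt{\log(\cdot)/N(x)}$. The algorithm's $\lambda^\ell$ only produces the $+2$ in $N^\ell(x,a\succ b)/(N^\ell(x,a,b)+2)$. That makes this ratio fall below $\mathbb{P}^{\mathrm{true}}_{r^\star}(a\succ b|x)$ with constant probability for a single triple. It gives no margin to absorb the $\zeta^\ell$ noise and no control of the $\pidata$-average over $b$. Enlarging $\lambda^\ell$ "to the order of the confidence width", as you suggest, means analyzing a different algorithm.

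\textbf{The missing idea is that pointwise pessimism is never needed.} Since $(\mathrm{I})+(\mathrm{II})=\innerprod{\pi^\star-\piout}{r^\star-r^-}$ and $\pi^\star-\piout$ sums to zero over responses, you may replace $r^-$ by $r^-(x,a)-c(x)$ for any prompt-only $c$. This also leaves the Gibbs policy $\piout$ unchanged, so the fast-rate lemma still applies. The paper takes $c(x)=\sum_b\pidata(b|x)\,\beta\max_\ell\big(\log\frac{\tilde\pi^\ell(b|x)}{\piref(b|x)}-\zeta^\ell(x)\big)$, which turns $r^--c$ into $\widehat\Delta(x,a,\pidata)$.

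For the pairwise gap $\widehat\Delta(x,a,b)$ (min over members on $a$, max on $b$), the centering cancels inside each member. Hence, for every $\ell$,
\[
\widehat\Delta(x,a,b)\le\beta\Delta_{\log\tilde\pi^\ell/\piref}(x,a,b)=\sigma^{-1}\big(N^\ell(x,a\succ b)/(N^\ell(x,a,b)+2)\big),
\]
and the right-hand side depends on $\cD^\ell$ alone. Independence across members is then genuine. The Bernoulli anti-concentration lemma of \citep{cassel2025batch} gives failure probability $e^{-2L/7}$ per triple, and a union bound over the $\abs{\X}\abs{\A}^2$ triples yields the stated $L$. Averaging over $b\sim\pidata$ gives $\widehat\Delta(x,a,\pidata)\le\Delta_{r^\star}(x,a,\pidata)$, which is all your decomposition needs.

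After this, $\zeta^\ell$ only enters the accuracy bound. There, your plan of controlling every member via Azuma--Hoeffding and a union bound is what the paper does.

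A smaller point: the factor $\abs{\A}^2$ does not come from a union bound, which only costs logarithms. It comes from summing $\initial(x)\pidata(a|x)\pidata(b|x)/(N(x,a,b)+L)$ over all triples. Because $N(x,a,b)\approx N\initial(x)\pidata(a|x)\pidata(b|x)$ fails for rarely observed triples, the paper controls this sum with \citep[Lemma D.4]{rosenberg2020near}.
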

In \Cref{thm:pepomain}, the notation $\widetilde{\mathcal{O}}(\cdot)$ hides logarithmic factors in $\abs{\X}$, $\abs{\A}$, $N$ and polynomial in $\RMAX$. The exponential dependence on $\RMAX$ is conjectured to be unavoidable in the offline setting \cite{das2024active}\footnote{\cite{chen2025avoiding} moved the $e^{\RMAX}$ dependence in lower order terms but in the online setting.}. 
An important consequence of the bound is that if the dataset size is  $\widetilde{\mathcal{O}}\br{\frac{C^\star e^{\RMAX} \abs{\X} \abs{\A}^2 \log \delta^{-1}}{\varepsilon^2}}$, we can provably attain our learning goal, i.e., that $J_\beta(\pi^{\star}) - J_\beta(\piout) \leq  \varepsilon$.

Moreover, in the high regularization regime, i.e., for high values of $\beta$,
and in setting in which $C^\star_\infty$ is comparable to $C^\star$,
we have that the sample complexity improves to $\mathcal{O}\br{\frac{C^\star_{\infty} e^{\RMAX} \abs{\X} \abs{\A}^2 \log \delta^{-1}}{\beta \varepsilon}}$.

Critically, our bound does not require prior knowledge of $C^\star$ while this is needed for obtaining comparable rates in \cite{huang2024correcting,liu2024provably} which works however in the function approximation rather than tabular setting.\footnote{Without knowledge of $C^\star$, \cite{huang2024correcting,liu2024provably} present rates of order $\sqrt{(C^\star)^2/N}$. }

For our analysis, we consider that the dataset partitions $\cup_{\ell\in[L]}\mathcal{D}^\ell = \mathcal{D}$ is such that  $ N(x,a,b)/L - 1 \leq N^\ell(x,a,b) \leq N(x,a,b)/L $ where $N(x,a,b)$ is the number of times $x,a,b$ is observed in $\mathcal{D}$ and $N^\ell(x,a,b)$ is the number of observations in $\mathcal{D}^\ell$. That is, if data points are repeated, they are split evenly across the partition.
We now briefly present the proof sketch of \Cref{thm:pepomain}.
The analysis has three main conceptual steps.
\paragraph{Step 1: Establishing pessimism} A fundamental quantity of the analysis is the implicitly estimated reward gap , defined as \begin{align*} \widehat{\Delta}(x,a,b) &:= \beta \min_{\ell\in[L]} \br{\log \frac{\tilde{\pi}^\ell(a|x)}{\piref(a|x)} - N^{-\ell}(x)\sum_{X,A \in \cD\setminus\cD^\ell} \log \frac{\tilde{\pi}^\ell(A|X)}{\piref(A|X)}\mathds{1}\bcc{X = x}} \\&~~~ - \beta\max_{\ell\in[L]} \br{\log \frac{\tilde{\pi}^\ell(b|x)}{\piref(b|x)} - N^{-\ell}(x)\sum_{X,A \in \cD\setminus\cD^\ell} \log \frac{\tilde{\pi}^\ell(A|X)}{\piref(A|X)}\mathds{1}\bcc{X = x}}\end{align*}
which with high probability is a lower bound on the average reward gap under $r^\star$  when the second action is sampled from $\pidata$.
The result is formalized by the following lemma.
\begin{restatable}{Lem}{pessmain} \label{lemma:pessmain}
Let us denote $\sum_{b\in \A} \pidata(b|x) \Delta(x,a,b)$ as $\Delta(x,a,\pidata)$ for $\Delta \in \{\Delta_{r^\star}, \widehat{\Delta}\}$ and let us assume that $L \geq \frac{7 \log(\nicefrac{\abs{\X}\abs{\A}^2}{\delta})}{2}$. Then, the policies $\bcc{\tilde{\pi}^\ell}^L_{\ell=1}$, solutions of \eqref{eq:RegDPO} with $$\lambda^\ell(x,a,b) = \sigma^{-1}\br{\frac{N^\ell(x,a \succ b)}{N^\ell(x,a, b)}} -  \sigma^{-1}\br{\frac{N^\ell(x,a \succ  b)}{N^\ell(x,a, b)+2}} , ~~\text{if}~~N^\ell(x,a,b) > 0,$$ and $\lambda^\ell(x,a,b) = 0$ otherwise, induce an estimated gap $\widehat{\Delta}$ such that with probability at least $1-\delta$, 
    $$ \widehat{\Delta}(x,a,\pidata) \leq \Delta_{r^\star}(x,a,\pidata)~~\forall x,a\in \X\times\A.$$
\end{restatable}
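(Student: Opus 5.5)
Since $\widehat{\Delta}(x,a,b)$ is built from a minimum over $\ell$ for $a$ and a maximum over $\ell$ for $b$, I will bound each piece by a per-model quantity. The plan is to show that, with probability at least $1-\delta$, for every $(x,a)$ at least one ensemble member $\ell$ underestimates the value of $a$, and for every $(x,b)$ at least one member $\ell'$ overestimates the value of $b$. Both estimates are taken relative to the $\pidata$-baseline $\zeta^\ell(x)$. I then average over $b\sim\pidata(\cdot|x)$, using the normalization $\sum_b\pidata(b|x)r^\star(x,b)=0$.

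\textbf{Step 1 (closed form of each member).} In the tabular case, \eqref{eq:RegDPO} decouples across triplets $(x,a,b)$ with $N^\ell(x,a,b)>0$. I will argue that the unconstrained maximizer satisfies
\[
\sigma\big(\beta\Delta_{\log\tilde\pi^\ell/\piref}(x,a,b)+\lambda^\ell(x,a,b)\big)=\frac{N^\ell(x,a\succ b)}{N^\ell(x,a,b)}.
\]
With the stated choice of $\lambda^\ell$ this gives
\[
\beta\Delta_{\log\tilde\pi^\ell/\piref}(x,a,b)=\sigma^{-1}\!\Big(\frac{N^\ell(x,a\succ b)}{N^\ell(x,a,b)+2}\Big),
\]
a shrunk (Laplace-type) estimate of $\Delta_{r^\star}(x,a,b)$. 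I expect the clipping to $\overline{\Pi}$ only to move values toward $[-2\RMAX,2\RMAX]$, where $\Delta_{r^\star}$ already lies, so it preserves the direction of any inequality. Consistency of the pairwise gaps as differences of a single function $\log\tilde\pi^\ell/\piref$ is automatic in the tabular parametrization up to this projection. I will assume this, and it needs checking.

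\textbf{Step 2 (per-member one-sided error).} For a fixed triple, $N^\ell(x,a\succ b)\sim\mathrm{Bin}(N^\ell(x,a,b),\sigma(\Delta_{r^\star}))$. I want to show that the shrunk estimate satisfies
\[
\mathbb{P}\big(\beta\Delta_{\log\tilde\pi^\ell/\piref}(x,a,b)\le\Delta_{r^\star}(x,a,b)\big)\ge c
\]
for an absolute constant $c>1/2$, and symmetrically for the reversed inequality with probability at least $1/2$. The $+2$ in the denominator pushes the empirical frequency down, which gives a median-type underestimate. Translated into rewards via $r = \beta\log(\pi/\piref)-\zeta^\ell$, this means that, conditionally on the other chunks, member $\ell$ underestimates $r^\star(x,a)$ relative to the empirical $\pidata$-average with constant probability. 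The term $\zeta^\ell(x)$ uses only $\cD\setminus\cD^\ell$, which is independent of $\cD^\ell$; this independence is what allows it to act as a baseline.

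\textbf{Step 3 (boosting over the ensemble).} Because the chunks are disjoint, the $L$ events ``member $\ell$ fails'' are independent, each with probability at most $1-c$. Hence all $L$ members fail with probability at most $(1-c)^L$, which is bounded by $e^{-2L/7}$ for a suitable constant. With $L\ge \tfrac{7}{2}\log(|\X||\A|^2/\delta)$, a union bound over the $|\X||\A|^2$ triples gives overall failure probability at most $\delta$. On the success event, $\min_\ell(\cdot)$ at $a$ lies below the truth and $\max_\ell(\cdot)$ at $b$ lies above it. Subtracting and averaging over $b\sim\pidata$ then yields $\widehat\Delta(x,a,\pidata)\le\Delta_{r^\star}(x,a,\pidata)$.

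\textbf{Main obstacle.} The hardest step is Step 2: showing the shrinkage gives a constant-probability underestimate \emph{uniformly} over all $N^\ell(x,a,b)$ and all true gaps, including small counts and counts equal to $0$ where $\lambda^\ell=0$. I would handle this via a binomial median bound (the median of $\mathrm{Bin}(n,p)$ lies within $1$ of $np$), which is exactly absorbed by the $+2$ shift. A second difficulty is making the $\zeta^\ell$ baseline interact correctly with the $\pidata$-average. There I would condition on the other chunks and treat $\zeta^\ell$ as an empirical estimate of $\sum_b\pidata(b|x)\beta\log(\tilde\pi^\ell/\piref)$.
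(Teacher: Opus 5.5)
Your plan has a genuine gap in how Steps 2--3 fit together. Write $u^\ell(c):=\beta\log\frac{\tilde{\pi}^\ell(c|x)}{\piref(c|x)}-\beta\zeta^\ell(x)$. You aim for \emph{level} statements: some member has $u^\ell(a)\le r^\star(x,a)$, and some possibly different member has $u^{\ell'}(b)\ge r^\star(x,b)$. Your tools deliver neither.

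\begin{itemize}
\item \textbf{Underestimation of $a$.} The binomial argument controls only a single pairwise quantity $\beta\Delta_{\log\tilde{\pi}^\ell/\piref}(x,a,b)$. By contrast, $u^\ell(a)$ aggregates member $\ell$'s gaps against responses drawn from $\cD\setminus\cD^\ell$, and it is compared with a $\pidata$-average. A constant-probability one-sided bound on one gap says nothing about the sign of that aggregate, nor about the empirical-versus-$\pidata$ sampling error.
\item \textbf{Overestimation of $b$.} This side is false. The $+2$ shrinkage only pushes down: with $n=N^\ell(x,a,b)$, the estimate $N^\ell(x,a\succ b)/(n+2)$ never exceeds $n/(n+2)$. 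So for small counts and $\mathbb{P}^{\mathrm{true}}_{r^\star}(a\succ b|x)>n/(n+2)$, no member can overestimate. Your ``symmetric'' claim with probability at least $1/2$ therefore fails.
\item \textbf{Independence.} Independence across $\ell$ fails for level events, because $\zeta^\ell$ is computed on $\cD\setminus\cD^\ell$, which contains every other chunk.
\end{itemize}

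The missing idea, which is the paper's argument, is that the baseline cancels inside each member. For every $\ell$ we have $u^\ell(a)-u^\ell(b)=\beta\Delta_{\log\tilde{\pi}^\ell/\piref}(x,a,b)$, hence
\[
\widehat{\Delta}(x,a,b)=\min_{\ell'}u^{\ell'}(a)-\max_{\ell''}u^{\ell''}(b)\le\min_{\ell\in[L]}\beta\Delta_{\log\tilde{\pi}^\ell/\piref}(x,a,b).
\]
So you never need an optimistic member. It suffices that, for each triple, one member's pairwise estimate lies below $\Delta_{r^\star}(x,a,b)$. That event depends only on the labels in each $\cD^\ell$, which are genuinely independent across chunks given the pairs. \citep[Lemma 3]{cassel2025batch} plus a union bound over $\abs{\X}\abs{\A}^2$ triples then gives $\widehat{\Delta}(x,a,b)\le\Delta_{r^\star}(x,a,b)$ pointwise. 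Averaging over $b\sim\pidata(\cdot|x)$ is then a trivial monotonicity step, with no need for the normalization of $r^\star$, conditioning on other chunks, or any treatment of $\zeta^\ell$.

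Separately, your per-member constant $c>1/2$ is false, and the median argument does not work. The $+2$ in the denominator raises the count threshold from $np$ only to $(n+2)p=np+2p$, not by $1$. For $n=14$, $p=0.05$ one gets $\mathbb{P}(\mathrm{Bin}(n,p)\le(n+2)p)=0.95^{14}\approx 0.49$. What holds, and suffices, is a smaller absolute constant, which is exactly what the cited lemma provides in the form $1-e^{-2L/7}$.
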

\paragraph{Step 2: From suboptimality to estimation error}
The second step of the analysis consists of upper bounding the suboptimality of $\piout$ in terms of how accurately $\widehat{\Delta}$ estimates $\Delta_{r^{\star}}$ in average under $\pidata$.
The pessimistic event that holds with high probability in light of Lemma~\ref{lemma:pessmain} ensures that in this conversion only $C^\star$ appears and not $C^{\mathrm{all}}$. 
The formal result reads as follows.
\begin{restatable}{Lem}{conversion} 
\label{lemma:conversion}
Let us assume that the policies ensemble $\bcc{\tilde{\pi}^\ell}^L_{\ell=1}$ induces an estimated gap $\widehat{\Delta}$ such that $\widehat{\Delta}(x,a,\pidata) \leq \Delta_{r^\star}(x,a,\pidata)$ for all $x,a\in\X\times\A$. Then, for any $\beta > 0$,
\[
J_\beta(\pi^\star) - J_\beta(\piout) \leq \min \bcc{\sqrt{C^\star \innerprod{\pidata}{\mathrm{Err}^2}}, \beta^{-1} C^\star_\infty \innerprod{\pidata}{\mathrm{Err}^2}}
\]
where $\piout$ is computed from  $\bcc{\tilde{\pi}^\ell}^L_{\ell=1}$ as in \Cref{lemma:closed_form} and, $$\mathrm{Err}(x,a) :=  \widehat{\Delta}(x,a,\pidata) - \Delta_{r^\star}(x,a,\pidata).$$
\end{restatable}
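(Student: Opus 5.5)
The plan is to reduce everything to a comparison of two KL-regularized objectives whose rewards differ exactly by $\mathrm{Err}$, and then use the pessimism hypothesis to discard the term evaluated on $\piout$. The two rates then come from two different ways of bounding the remaining comparator term.

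\textbf{Step 1: rewrite both objectives with gap-type rewards.} Introduce the worst- and best-case ensemble rewards. Let $r^-$ be as in \eqref{eq:outputDPO}, and let $r^+$ be the analogous quantity with $\max_{\ell}$ in place of $\min_\ell$. Then $\widehat{\Delta}(x,a,b)=r^-(x,a)-r^+(x,b)$, and hence
\[
\widehat{\Delta}(x,a,\pidata)=r^-(x,a)-c(x), \qquad \Delta_{r^\star}(x,a,\pidata)=r^\star(x,a)-c^\star(x),
\]
where $c(x)$ and $c^\star(x)$ do not depend on $a$. Since every policy sums to one over $\A$, subtracting a state-only function from the reward shifts $\innerprod{\pi}{\cdot}$ by a quantity independent of $\pi$. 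So with $g^\star:=\Delta_{r^\star}(\cdot,\cdot,\pidata)$ and $\hat g:=\widehat{\Delta}(\cdot,\cdot,\pidata)$ we have:
\begin{itemize}
\item $J_\beta(\pi^\star)-J_\beta(\piout)=\widetilde J(\pi^\star)-\widetilde J(\piout)$, where $\widetilde J(\pi):=\innerprod{\pi}{g^\star}-\beta KL(\pi,\piref)$;
\item $\piout$ maximizes $\widehat J(\pi):=\innerprod{\pi}{\hat g}-\beta KL(\pi,\piref)$. This follows from \eqref{eq:outputDPO}, or equivalently from \Cref{lemma:closed_form}, since the normalization absorbs $c(x)$.
\end{itemize}
Moreover $\hat g-g^\star=\mathrm{Err}$.

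\textbf{Step 2: three-term decomposition and the slow rate.} Write
\[
\widetilde J(\pi^\star)-\widetilde J(\piout)=\underbrace{\widetilde J(\pi^\star)-\widehat J(\pi^\star)}_{=-\innerprod{\pi^\star}{\mathrm{Err}}}+\underbrace{\widehat J(\pi^\star)-\widehat J(\piout)}_{\le 0}+\underbrace{\widehat J(\piout)-\widetilde J(\piout)}_{=\innerprod{\piout}{\mathrm{Err}}\le 0}.
\]
The middle term is nonpositive because $\piout$ maximizes $\widehat J$. The last term is nonpositive by the pessimism assumption $\mathrm{Err}\le 0$; this is exactly where we avoid $C^{\mathrm{all}}$, since the uncontrolled measure $\piout$ never has to be related to $\pidata$. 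For the first term, apply Cauchy–Schwarz after multiplying and dividing by $\sqrt{\pidata}$:
\[
-\innerprod{\pi^\star}{\mathrm{Err}}\le \sqrt{\textstyle\sum_x\initial(x)\sum_a \frac{(\pi^\star(a|x))^2}{\pidata(a|x)}}\,\sqrt{\innerprod{\pidata}{\mathrm{Err}^2}}=\sqrt{C^\star\innerprod{\pidata}{\mathrm{Err}^2}}.
\]

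\textbf{Step 3: the fast rate, which is the main obstacle.} Here the middle term must be kept instead of dropped. For a Gibbs maximizer one has the exact identity $\widehat J(\piout)-\widehat J(\pi^\star)=\beta KL(\pi^\star,\piout)$. Together with $\innerprod{\piout}{\mathrm{Err}}\le 0$ this gives
\[
J_\beta(\pi^\star)-J_\beta(\piout)\le \innerprod{\pi^\star}{-\mathrm{Err}}-\beta KL(\pi^\star,\piout).
\]
The task is to show that the linear gain $\innerprod{\pi^\star}{-\mathrm{Err}}$ is paid for by the KL term up to a quadratic remainder.

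The first thing I would try is to use the explicit form of \Cref{lemma:closed_form}: $\piout\propto\piref e^{\hat g/\beta}$, and compare it with $\pi_{g^\star}\propto\piref e^{g^\star/\beta}$. The comparator satisfies $\widetilde J(\pi^\star)\le\widetilde J(\pi_{g^\star})$, and $\widetilde J(\pi_{g^\star})-\widetilde J(\piout)=\beta KL(\piout,\pi_{g^\star})$. By the mean-value (second-order) expansion of the log-partition function, $\beta KL(\piout,\pi_{g^\star})$ is bounded by $\beta^{-1}$ times a variance of $\mathrm{Err}$ under an intermediate Gibbs measure $\pi_t\propto\piref e^{(g^\star+t\,\mathrm{Err})/\beta}$, $t\in[0,1]$.

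The remaining difficulty is changing that measure to $\pidata$ while paying only $C^\star_\infty$. Because $\mathrm{Err}\le 0$, the tilted measures $\pi_t$ place no more relative mass than $\pi_{g^\star}$ on actions where the error is large. I would therefore argue that the second moment can be evaluated against the comparator, replacing $\pi_{g^\star}$ by $\pi^\star$ through the decomposition above. Then $\innerprod{\pi^\star}{\mathrm{Err}^2}\le C^\star_\infty\innerprod{\pidata}{\mathrm{Err}^2}$ by the pointwise ratio bound, which yields the term $\beta^{-1}C^\star_\infty\innerprod{\pidata}{\mathrm{Err}^2}$.

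Taking the minimum of the two bounds gives the statement. I expect this measure-change step in the fast rate to require the most care.
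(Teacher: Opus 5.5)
Your Steps 1--2 are correct and match the paper's slow-rate argument. The paper also averages the second action over $\pidata$, removes the $\piout$-part by pessimism, applies Cauchy--Schwarz to the $\pi^\star$-part, and bounds the rest by optimality of $\piout$ for $r^-$.

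\textbf{The gap is in Step 3.} Your second route is essentially the paper's: compare with $\pi_{g^\star}$, expand the log-partition along $\pi_t$, and use $\mathrm{Err}\le 0$ to move the intermediate second moment back to $t=0$. The paper does this through \citep[Lemma C.2]{ji2026optimal}, reproved in its appendix by a mean-value expansion along $r_\gamma$ and monotonicity in $\gamma$. But the route ends at $\beta^{-1}\sum_x\initial(x)\sum_a\pi_{g^\star}(a|x)\mathrm{Err}(x,a)^2$. Changing that measure to $\pidata$ costs $\max_{x,a}\pi_{g^\star}(a|x)/\pidata(a|x)$, not $C^\star_\infty$. The paper gets $C^\star_\infty$ only because its fast-rate proof takes $\pi^\star$ to be the KL-regularized optimum $\pi_{g^\star}$.

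For a general comparator, ``replacing $\pi_{g^\star}$ by $\pi^\star$ through the decomposition above'' is not an argument. The bound $\widetilde J(\pi^\star)\le\widetilde J(\pi_{g^\star})$ has already erased $\pi^\star$. Your displayed reduction cannot take its place either, because once $\innerprod{\piout}{\mathrm{Err}}\le 0$ is discarded the target is false. Take $\pi^\star=\piout$ and $\mathrm{Err}\equiv-\epsilon$ with $0<\epsilon<\beta/C^\star_\infty$. Then $\innerprod{\pi^\star}{-\mathrm{Err}}-\beta KL(\pi^\star,\piout)=\epsilon$, while $\beta^{-1}C^\star_\infty\innerprod{\pidata}{\mathrm{Err}^2}=C^\star_\infty\epsilon^2/\beta<\epsilon$, even though the true suboptimality is $0$. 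The term you dropped is exactly the one that must pay for the linear gain.

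\textbf{The repair is to keep that term.} For each prompt $x$ let $p=\pi^\star(\cdot|x)$, $q=\piout(\cdot|x)$ and $D=-\mathrm{Err}(x,\cdot)\ge 0$. Your three-term decomposition together with your Gibbs identity gives exactly
\[
J_\beta(\pi^\star)-J_\beta(\piout)=\sum_{x}\initial(x)\Big(\mathbb{E}_p[D]-\mathbb{E}_q[D]-\beta KL(p,q)\Big).
\]
Apply the Gibbs variational inequality $\mathbb{E}_p[f]-\beta KL(p,q)\le\beta\log\mathbb{E}_q[\exp(f/\beta)]$ with $f=D-D^2/\beta$. Since $u-u^2\le\log(1+u)$ for $u\ge 0$, you get $\exp(f/\beta)\le 1+D/\beta$. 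Hence $\beta\log\mathbb{E}_q[\exp(f/\beta)]\le\beta\log(1+\mathbb{E}_q[D]/\beta)\le\mathbb{E}_q[D]$, so each bracket is at most $\beta^{-1}\mathbb{E}_p[D^2]$. The pointwise ratio bound then gives $\beta^{-1}C^\star_\infty\innerprod{\pidata}{\mathrm{Err}^2}$ for \emph{every} comparator, with no intermediate measures.

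If you instead keep your route restricted to $\pi^\star=\pi_{g^\star}$, as the paper does, make the monotonicity precise. With $\pi_t\propto\piref e^{(g^\star-tD)/\beta}$ one has $\frac{d}{dt}\mathbb{E}_{\pi_t}[D^2]=-\beta^{-1}\mathrm{Cov}_{\pi_t}(D^2,D)\le 0$, since $D^2$ is nondecreasing in $D\ge 0$. (The paper's appendix justifies this step via $\mathbb{E}[X^3]\le\mathbb{E}[X^2]\mathbb{E}[X]$ for $X\ge 0$, which points the wrong way. Its conclusion survives only because of a compensating sign error in $\partial r_\gamma/\partial\gamma$.)
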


\paragraph{Step 3: Bounding the estimation error}
Given Lemma \ref{lemma:conversion}, the last step of the proof is to provide a high probability bound on the estimation error $\innerprod{\pidata}{\mathrm{Err}^2}$. The following result gives precisely this result and leverages the intuition that \eqref{eq:RegDPO} can be seen as a maximum likelihood problem for estimating $\Delta_{r^\star}$.
\begin{restatable}{Lem}{concentrationlem}\label{lemma:concentration_main}
It holds with probability at least $1-5\delta$ that
\[
\innerprod{\pidata}{\mathrm{Err}^2} \leq \widetilde{\mathcal{O}}\br{\frac{ e^{\RMAX} \abs{\X}\abs{\A}^2 L^2\log(\delta^{-1})}{N}}.
\]
\end{restatable}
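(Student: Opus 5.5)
The plan is to reduce the bound on $\innerprod{\pidata}{\mathrm{Err}^2}$ to a per-model, per-triplet estimation error. We first work out what each $\tilde{\pi}^\ell$ estimates, and then control the $\min/\max$ aggregation over $\ell$ and the centering terms $\zeta^\ell$.

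\textbf{Step 1: closed form of each ensemble member.} In the tabular setting the objective \eqref{eq:RegDPO} decouples across distinct triplets $(x,a,b)$ observed in $\mathcal{D}^\ell$. Define the implicit reward $g^\ell := \beta\log(\tilde{\pi}^\ell/\piref)$. The first-order conditions give
\[
\sigma\br{\Delta_{g^\ell}(x,a,b)+\lambda^\ell(x,a,b)} \approx \frac{N^\ell(x,a\succ b)}{N^\ell(x,a,b)},
\]
whenever this is compatible with the constraint set $\overline{\Pi}$. For a non-transitive set of empirical frequencies the system is only a projection, so we treat $g^\ell$ as the constrained BT maximum-likelihood estimator of a reward with the shifted link. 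With the choice of $\lambda^\ell$ from \Cref{lemma:pessmain}, this means
\[
\Delta_{g^\ell}(x,a,b) = \sigma^{-1}\br{\frac{N^\ell(x,a\succ b)}{N^\ell(x,a,b)+2}},
\]
which is a smoothed log-odds estimator of $\Delta_{r^\star}(x,a,b)$.

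\textbf{Step 2: per-triplet concentration.} We use a Bernstein/Hoeffding bound on $\hat p^\ell = N^\ell(x,a\succ b)/N^\ell(x,a,b)$. The function $\sigma^{-1}$ is Lipschitz with constant $1/(p(1-p))\lesssim e^{\RMAX}$ on the relevant range, and clipping to $[-2\RMAX,2\RMAX]$ handles the boundary. The smoothing by $+2$ changes the estimate by only $O(e^{\RMAX}/N^\ell)$. Together these give
\[
\abs{\Delta_{g^\ell}-\Delta_{r^\star}}^2 \lesssim \frac{e^{\RMAX}\log(\abs{\X}\abs{\A}^2L/\delta)}{N^\ell(x,a,b)}.
\]
We then take a union bound over $(x,a,b,\ell)$.

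The quantity $\widehat{\Delta}(x,a,\pidata)$ involves the centered rewards $g^\ell(x,a)-\zeta^\ell(x)$. Here $\zeta^\ell(x)$ is the average of $g^\ell(x,A)$ over the held-out samples $A\sim\pidata(\cdot|x)$ in $\cD\setminus\cD^\ell$. Under the normalization $\sum_b\pidata(b|x)r^\star(x,b)=0$, this centering makes $g^\ell(x,a)-\zeta^\ell(x)$ an estimator of $r^\star(x,a)$. Its error is bounded by two terms. The first is an average of the pairwise errors $\Delta_{g^\ell}(x,a,b)-\Delta_{r^\star}(x,a,b)$ weighted by empirical frequencies of $b$. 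The second is a Hoeffding term of order $\RMAX/\sqrt{N^{-\ell}(x)}$, which uses that $\abs{g^\ell}$ is bounded via $\overline{\Pi}$. Because $\zeta^\ell$ uses data disjoint from $\cD^\ell$, it is independent of $g^\ell$ conditionally, and this independence is what justifies applying Hoeffding.

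\textbf{Step 3: aggregation and summation.} We bound the $\min$ and $\max$ over $\ell$ crudely: each differs from $r^\star$ by at most $\max_\ell$ of the individual errors, and $\max_\ell$ is at most $\sum_\ell$. Squaring and Cauchy--Schwarz over the $L$ members gives the $L^2$ factor, and using $N^\ell\ge N(x,a,b)/L-1$ contributes further factors of $L$. We then average over $\pidata$ and $\initial$:
\[
\innerprod{\pidata}{\mathrm{Err}^2}\lesssim e^{\RMAX}L^2\log(1/\delta)\sum_{x}\initial(x)\sum_{a,b}\frac{\pidata(a|x)\pidata(b|x)}{N(x,a,b)/L}.
\]
We apply the standard tabular fact that $\mathbb{E}[\mathbb{1}\{N>0\}/N]\lesssim 1/(Np)$, together with a multiplicative Chernoff bound $N(x,a,b)\gtrsim N\initial(x)\pidata(a|x)\pidata(b|x)$ for well-visited triplets. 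Poorly visited triplets carry negligible mass because their error is bounded by $O(\RMAX)$ and their probability is $O(\log/N)$. The sum then collapses to $O(\abs{\X}\abs{\A}^2/N)$. The failure events are (i) concentration of the preference labels, (ii) the $\zeta$ centering, (iii) the Chernoff counts, and (iv)–(v) the bounds for the min and max. Union-bounding these five events gives the probability $1-5\delta$.

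\textbf{Main obstacle.} The hardest part is Step 1 combined with the centering. For $N^\ell$ small, the log-odds estimator can hit $\pm\infty$, and the constraint $\overline{\Pi}$ couples triplets that share an $(x,a)$, so the decoupled closed form is not exact. My first attempt would be to argue directly through the constrained MLE. Using strong concavity of $\log\sigma$ on $[-4\RMAX,4\RMAX]$, whose curvature is at least $\sim e^{-\RMAX}$, I would get an $\ell_2$ error bound over the empirical triplet distribution of $\cD^\ell$. This is exactly where the $e^{\RMAX}$ factor comes from. I would then pass to $\pidata$-weighted errors via the Chernoff count comparison, which avoids needing per-triplet closed forms.
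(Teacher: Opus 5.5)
Your Steps 1--3 follow the paper's proof. Both use the same per-member closed form $\beta\Delta_{\log\tilde{\pi}^\ell/\piref}(x,a,b)=\sigma^{-1}\br{N^\ell(x,a\succ b)/(N^\ell(x,a,b)+2)}$, Lipschitzness of $\sigma^{-1}$ with Hoeffding on the labels, and a held-out Hoeffding bound for $\zeta^\ell$. Both also reduce the $\min/\max$ to a maximum of per-member errors and use $N^\ell\ge N(x,a,b)/L-1$.

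The genuinely different ingredient is the final summation. The paper applies the Freedman-type bound \citep[Lemma D.4]{rosenberg2020near} to $1/(\mathfrak{f}(X_i,A_i,A_i')+L)$. It union-bounds over all count functions $\mathfrak{f}:\X\times\A\times\A\to\{0,\dots,N\}$ so that the random $N(\cdot,\cdot,\cdot)$ can be substituted. It then uses $\sum_i 1/(N(X_i,A_i,A_i')+L)=\sum_{x,a,b}N(x,a,b)/(N(x,a,b)+L)\le\abs{\X}\abs{\A}^2$. You instead use multiplicative Chernoff on triplets with $N\initial(x)\pidata(a|x)\pidata(b|x)\gtrsim\log(\abs{\X}\abs{\A}^2/\delta)$, and the a priori $O(\RMAX)$ error cap from $\overline{\Pi}$ on the rest. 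Both give the same $\abs{\X}\abs{\A}^2/N$ collapse. Yours is more elementary and makes explicit that the box constraint controls poorly covered triplets. The paper's avoids the case split, at the price of a union bound costing $\abs{\X}\abs{\A}^2\log N$, which is of the same order.

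There is one bookkeeping slip. Bounding $\max_\ell e_\ell\le\sum_\ell e_\ell$ and then applying Cauchy--Schwarz gives $L^2\max_\ell e_\ell^2$. Your display is therefore of order $e^{\RMAX}L^3\log(1/\delta)\abs{\X}\abs{\A}^2/N$, a factor $L$ above the statement. Your per-member bounds hold for all $\ell$ simultaneously and depend on $\ell$ only through $N^\ell$. So bound the maximum directly by the common bound, as the paper does.

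Your ``main obstacle'' is a real defect, and the paper's proof shares it, since it simply asserts the decoupled first-order condition. Besides the coupling you mention, the closed form cannot hold for a single pair in both orientations. The reason is that $\sigma^{-1}(p)=-\sigma^{-1}(q)$ forces $p+q=1$, while $\frac{N^\ell(x,a\succ b)+N^\ell(x,b\succ a)}{N^\ell(x,a,b)+2}<1$. Underneath, $\lambda^\ell(x,a,b)$ and $\lambda^\ell(x,b,a)$ are both positive rather than opposite.

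Your proposal nonetheless uses this closed form in Step 1, and the repair is only sketched. As sketched, it does not give the stated bound, for three reasons.
\begin{itemize}
\item $\abs{(\log\sigma)''(z)}=\sigma(z)\sigma(-z)$ is of order $e^{-4\RMAX}$ at the ends of $[-4\RMAX,4\RMAX]$, not $e^{-\RMAX}$. (Even the Lipschitz route gives $e^{2\RMAX}$ once $e^{\RMAX}$ is squared, which your Step 2 and the paper both drop.)
\item The data-dependent shifts $\lambda^\ell$ are infinite whenever one response wins every comparison of a pair. They make the objective a misspecified likelihood whose bias must be controlled separately.
\item An aggregate empirical $\ell_2$ bound does not plug into your per-triplet Step 3. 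That step would have to be redone with averaged errors and $\max_\ell e_\ell^2\le\sum_\ell e_\ell^2$.
\end{itemize}

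Relatedly, the conditional independence you invoke for $\zeta^\ell$ needs a data-independent split. Under the count-balanced split assumed in the analysis, the held-out counts $N(x,a,b)-N^\ell(x,a,b)$ lie in $[(L-1)N^\ell(x,a,b),(L-1)N^\ell(x,a,b)+L]$, so they are essentially determined by $\cD^\ell$. A deterministic bound avoids this at the cost of an extra factor $\abs{\A}$ on a lower-order term:
\[
\abs{\beta\zeta^\ell(x)-\sum_a\pidata(a|x)\beta\log\frac{\tilde{\pi}^\ell(a|x)}{\piref(a|x)}}\le\RMAX\norm{\hat{\pi}^{-\ell}(\cdot|x)-\pidata(\cdot|x)}_1,
\]
with $\hat{\pi}^{-\ell}$ the held-out empirical response distribution.

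Net: once the $L$ accounting is fixed, your argument proves the lemma to exactly the extent the paper's does, namely conditional on the per-triplet closed form.
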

At this point, the proof of \Cref{thm:pepomain} clearly follows by combining Lemmas \ref{lemma:pessmain},\ref{lemma:conversion} and \ref{lemma:concentration_main}. 
The detailed analysis is presented in \Cref{app:analysis}.

\paragraph{On the optimal ensemble size}
Our analysis provides precise guidelines for choosing $L$. In particular, Lemma \ref{lemma:concentration_main} shows that the generalization error increases with $L$. Therefore, we would like to choose it as small as possible. However, Lemma \ref{lemma:pessmain} imposes a lower bound on $L$ for the pessimism to hold. Hence, the conclusion that $L$ should be the smallest value which ensures that the estimator $\widehat{\Delta}$ is pessimistic with high enough probability.

\subsection{Detailed comparison with  $\chi^2$\texttt{PO} and \texttt{DPO} + \texttt{SFT} in the tabular setting.}
\label{sec:priorwork}
Looking at the bound in \cite{huang2024correcting} Theorem 3.1, we notice that $\chi^2 PO$ is upper bounded by $\mathcal{O}(e^{2R_{\max}}\sqrt{\frac{C^\star \log (|\Pi|/\delta)}{N}})$. When we are in the tabular case, we need to consider $|\Pi|$ to be a $1/N$-covering set of all Markov policies, i.e. all mappings $\mathcal{X} \rightarrow \Delta_{\mathcal{A}}$. The size of this set can be computed to be $N^{|\mathcal{X}||\mathcal{A}|}$. Therefore plugging into the $\chi^2 PO$ bound we obtain $\mathcal{O}(e^{2R_{\max}}\sqrt{\frac{C^\star |\mathcal{X}||\mathcal{A}| \log (N/\delta)}{N}})$ which also includes polynomial dependence on the number of states and action as our bound does. The $\chi^2 PO$ bound has a better action dependence, but it requires knowledge of $C^\star$. When unavailable, the $\chi^2PO$ bound becomes $$\mathcal{O}\br{e^{2R_{\max}}\sqrt{\frac{(C^\star)^2 |\mathcal{X}||\mathcal{A}| \log (N/\delta)}{N}}},$$ which would lead to a sample complexity of order $$\widetilde{\mathcal{O}} \left(e^{2R_{\max}} \frac{(C^\star)^2 |\mathcal{X}||\mathcal{A}| \log (1/\delta)}{\epsilon^2}\right)$$ which has a worse dependence on $C^\star$ compared to our analysis.
Moreover, the $\chi^2 PO$ bound requires knowledge of $\pi_{\mathrm{data}}$.

Looking at the bound in \cite{liu2024provably} Theorem 5.3, we have that in the tabular setting the covering number for the reward class scales as $N^{|\mathcal{X}||\mathcal{A}|}$. Therefore, in the tabular setting $\iota = \mathcal{O}(\sqrt{|\mathcal{X}||\mathcal{A}| \log (N/\delta)})$. Plugging this into the main bound of (\cite{liu2024provably} Theorem 5.3), assuming that $\pi_{\mathrm{ref}}$ (or $\pi_{\mathrm{base}}$ in their notation) equals $\pi_{\mathrm{data}}$  and neglecting the lower order terms we obtain an upper bound of order $\frac{(C^\star)^2 e^{2R_{\max} \sqrt{|\mathcal{X}||\mathcal{A}| \log (N/\delta)}} }{\sqrt{N}}$ which leads to a sample complexity of order $$\widetilde{\mathcal{O}} \left(e^{4R_{\max}} \frac{(C^\star)^4 |\mathcal{X}||\mathcal{A}| \log (1/\delta)}{\epsilon^2}\right).$$
Therefore, again the action dependence is better but the dependence on $C^\star$ is worse than the one attained by \texttt{PEPO}, even without knowledge of $\pi_{\mathrm{data}}$. Moreover, our analysis achieves a fast rate in the large $\beta$ regime when $C^\star$ is comparable to $C^\star_\infty$. It is unclear if $\chi^2$PO and \texttt{DPO+SFT} can attain the same.
\section{Experiments}
\label{sec:experiments}
In this section, we show that \texttt{PEPO} avoids over-optimization in LLMs post-training experiments.

\paragraph{\texttt{PEPO} avoids over-optimization}
Our experiments  in \Cref{fig:winrate-initial} clearly  show that DPO over-optimizes: the win rate increases during the first few epochs but then drops off  significantly. This effect is evident for all base models we experimented with.  Moreover, we point out in \Cref{app:earlystopfails} that early stopping DPO at best evaluation loss leads to the selection of a suboptimal checkpoint. Therefore, early stopping can not be considered an effective way of circumventing this negative DPO behaviour.  In contrast, token-level \texttt{PEPO} win rates increase and then sstabilize during training once $L$ is set to a large enough values ($L=3$ in Llama and $L=2$ for the 7B models.). 
These results show that avoiding the $C^{\mathrm{all}}$ dependence in the theoretical analysis translates into avoiding over-optimization in practice. In Zephyr and Mistral, we also observe that increasing $L$ further leads to deteriorated performance because each ensemble members leverage a smaller dataset. The deterioration with an excessively large value of $L$ is consistent with our Lemma \ref{lemma:concentration_main}.  

\begin{figure*}[t]
    \centering
\includegraphics[width=\textwidth]{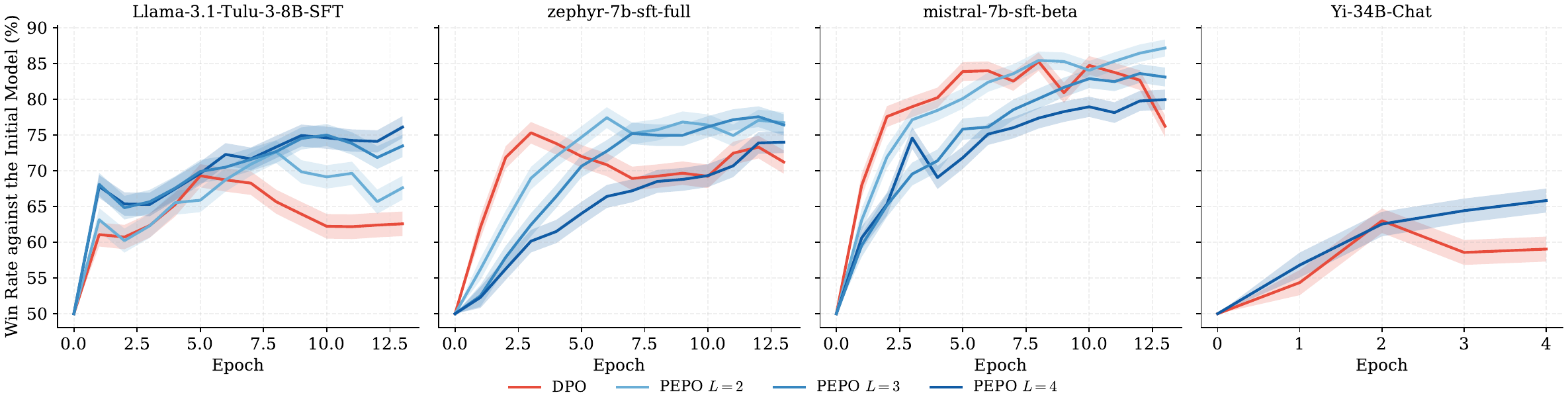}
    \caption{Win rate (\%) against the initial model on AlpacaEval2 across training epochs. Shaded regions indicate standard error of the mean across per-instruction preferences. We compare DPO and PEPO with varying ensemble sizes $L$. 
    }
    \label{fig:winrate-initial}
\end{figure*}

\paragraph{Comparison to $\chi^2$PO and SFT+DPO}
We compare \texttt{PEPO} against two methods specifically designed to mitigate overoptimization: $\chi^2$PO and SFT+DPO. As shown in \Cref{tab:best-winrates}, \texttt{PEPO} consistently outperforms both baselines across the models we evaluate. Crucially, our experimental setup uses the UltraFeedback dataset, where the responses are generated by a diverse pool of LLMs (including proprietary models like GPT-4), meaning that $\pidata \neq \piref$.  In this setting, $\chi^2$PO and SFT+DPO lose their theoretical guarantees, as both methods rely on exact or approximate knowledge of $\pidata$.  \texttt{PEPO}, by contrast, uses a valid pessimism mechanism even when $\pidata$ is unknown, hence achieves higher winrates. The performance \begin{wrapfigure}[19]{r}{0.5\linewidth}
    \begin{tcolorbox}[
        colback=blue!06!white,
        colframe=blue!80!black,
        width=\linewidth,
        before skip=0pt, 
        after skip=0pt,
        left=2pt, right=2pt, 
        top=0.5pt, bottom=0.5pt
    ]
    \centering
    \begin{tikzpicture}
    \begin{scope}[xshift=0cm]
        \begin{axis}[
            scale=0.4,
            xlabel={generation time (ms/token)},
            ylabel={Win Rate (\%)},
            grid=major,
            xmin=0, xmax=80,
            ymin=55, ymax=78,
            legend style={at={(0.98,0.02)}, anchor=south east, font=\tiny},
            every axis title/.style={font=\normalsize\bfseries},
        ]
            \addplot[only marks, mark=triangle*, mark size=2pt, color=gray, thick] coordinates {(6.571, 61.00)};
            \addplot[only marks, mark=square*, mark size=2pt, color=teal, thick] coordinates {(23.382, 67.80)};
            \addplot[only marks, mark=*, mark size=2pt, color=red!70!black, thick] coordinates {(46.422, 72.66)};
            \legend{DPO, \texttt{PEPO} token, \texttt{PEPO} rejection}
        \end{axis}
    \end{scope}
\end{tikzpicture}
    \caption{Comparison of win rate vs generation time in Llama after $1$ epoch. \texttt{PEPO} token-level attains the best tradeoff. \label{tab:refsamp_vs_token}}
    \resizebox{\linewidth}{!}{
            \begin{tabular}{lccc}
                \toprule
                & Token-level \texttt{PEPO} & $\chi^2$PO &  SFT+DPO \\
                \midrule
                Llama-3-8B & \textbf{76.1 $\pm$ 1.5} & 70.7 $\pm$ 1.6 &  69.7 $\pm$ 1.6 \\
                Mistral-7B & \textbf{87.2 $\pm$ 1.2} & 82.8 $\pm$ 1.3 &  72.4 $\pm$ 1.6 \\
                \bottomrule
            \end{tabular}}
\caption{Win rates (\%) at the respective best epochs against the initial model on AlpacaEval2. \label{tab:best-winrates}}
\end{tcolorbox}
\end{wrapfigure} gap narrows when the response dataset is generated directly from $\piref$ (i.e., $\pidata = \piref$), as this restores the conditions under which the guarantees of $\chi^2$PO and SFT+DPO hold. To verify this, we set up in \Cref{app:toy} a synthetic bandit experiment. 
\paragraph{Rejection sampling vs token-level \texttt{PEPO}}

In \Cref{tab:refsamp_vs_token}, we show the winrate generation time tradeoff for DPO and our two pessimistic aggregation rules: the rejection sampling routine in \Cref{alg:rejsampl} and the token-level implementation of \texttt{PEPO}. The results in \Cref{tab:refsamp_vs_token} show that rejection sampling leads to the best winrate, DPO to the fastest generation time, while token-level \texttt{PEPO} achieves the best tradeoff. Thus, we use the latter for our large-scale experiments above.
Finally we point out that, DPO is faster at generation times , but slower in training compared to PEPO by a factor $L$. That is because each ensemble member $\tilde{\pi}^\ell$ is trained in parallel with a dataset size which is equal to a $1/L$ fraction of the total data. 
\section{Related Work}

Some heuristic techniques have been proposed to mitigate over-optimization: imposing a trust region \citep{gorbatovski2024learn}, using length normalization \citep{meng2024simpo}, or using a loss function with a bounded minimizer, or better curvature \citep{azar2024general,agarwal2025design}. These methods can sometimes be effective in practice, but unfortunately, none of these approaches can provably fight back the risk of over-optimization. 

The literature is also rich of works affected by the opposite issue, indeed pessimistic RLHF approaches such as \citep{zhu2023principled,zhan2023provable,schlaginhaufen2025efficient,li2023reinforcement} enjoy strong bounds but are rarely applicable. The common idea is first to learn a reward function and its uncertainty from the preference dataset and, then, to optimize the reward minus the uncertainty. This procedure is provably robust to over-optimization but there are two main downsides: ($i$) relying on a two-step procedure and ($ii$) the fact that the uncertainty estimate is usually computed leveraging elliptical bonuses \citep{Abbasi-Yadkori:2011}, which is a technique that does not offer guidance towards a practical approach for modern LLMs\footnote{\citep{tuyls2025representation} is a notable exception which, however, uses an elliptical bonuses inspired technique for optimism in online algorithms and not for pessimistic purposes in the offline setting. }.

Moreover, there are previous works using ensembles in alignment \citep{coste2023reward,fisch2024robust} but they also rely on a two-step recipe. 
Both works \citep{coste2023reward,fisch2024robust} empirically verified that the technique helps fight over-optimization.

Existing theoretically grounded DPO variants that achieve single policy concentrability guarantees require either that the responses in the dataset are generated according to the SFT checkpoint $\piref$ \citep{huang2024correcting} or according to a policy that covers well a known policy, denoted as $\pi_{\mathrm{base}}$ in \citep{liu2024provably}. 
 \citep{liu2024provably} suggested to use the distribution of the chosen responses as $\pi_{\mathrm{base}}$. Therefore, in this case, imposing their assumption implies knowledge of $\pidata$.
Other pessimistic analysis of DPO \citep{cen2024value} still incurs in a all policy concentrability term, see the factor $\kappa_{\mathcal{D}}$ in \citep{cen2024value}[Theorem 2]. 
Another way to attain single policy concentrability bounds is to allow regeneration according to the policy sequence generated during learning \citep{ji2024self}, this process makes the implementation more costly compared to the original DPO algorithm.

Finally, we remark that the idea of using ensembles for uncertain aware decision making has been previously used in reinforcement and imitation learning, both at a theory-only \citep{osband2014near,osband2016generalization,o2021variational,o2023efficient} or also at a practical level \citep{osband2016deep, chen2017ucb, osband2018randomized,osband2023approximate,osband2023epistemic,tarbouriech2024probabilistic,ishfaq2021randomized,ishfaq2023provable,ishfaq2024more,ishfaq2025langevin}. Finally, at a technical level, the most inspiring works for our analysis are \citep{cassel2025batch}, which studied ensemble-based exploration in bandits and \citep{viel2025soar}, which studies Imitation Learning (IL) with access to the environment. In particular, \citep{viel2025soar} used an ensemble-based exploration scheme to make previous theoretical insights on the importance of exploration in IL from states or reward features-only \citep{Shani:2021,viano2024imitation,moulin2025optimistically} applicable to scenarios where neural network function approximation is required. 

\section{Conclusions}
\label{sec:conclu}

In this paper, we introduced \texttt{PEPO}, the first DPO-like algorithm that provably and practically avoids over-optimization without knowing the data distribution of the responses recorded in the dataset. This situation is very common in practice when post-training models by leveraging responses that are either human or proprietary models generated.  This contribution opens several exciting directions, the most important of all being to establish the same result beyond the tabular setup. Indeed, while the good experimental results suggest that the tabular setting might be an accurate enough model for practical post-training, there are theoretical results \citep{huang2024correcting,liu2024provably} which go beyond it but at the cost of knowing $\pidata$. We consider getting the best of both worlds an extremely exciting open problem. 

Another interesting direction inspired by \citep{yadkori2024believe} is to investigate if the notion of ensemble disagreement can be used to detect the risk of hallucinations and intervene, forcing the model to abstain from answering. In particular, we could force the model to respond ``\emph{I do not know}.'' if \cref{alg:rejsampl} does not terminate within a maximum number of attempts.

Several interesting open questions are left open by our work. We highlight the most exciting ones in the following.

\paragraph{Pessimism in general function classes with unknown $\boldsymbol{\pidata}$}
As already mentioned in the main text, we think that the most interesting open question is to develop a DPO-like method applicable to general function approximation, that achieves sample complexity bounds depending only on $C^\star$. We highlight this open question in \Cref{tab:main_open_q}.
In other words, \Cref{tab:main_open_q} shows that \texttt{PEPO}, DPO+SFT and $\chi^2$PO all applies beyond the easiest case of tabular setting with known $\pidata$ but \emph{along different axes}. Achieving those two improvements simultaneously, i.e. designing an approach which applies for general policy classes without knowledge about $\pidata$, is in our opinion the most important open question rising from our work. Moreover, we notice that the analysis in \cite{huang2024correcting,liu2024provably} still requires the policy class to be very expressive since it assumes that it realizes the solution of the objective in \eqref{eq:goal} over the domain of all tabular policies. Bypassing such an assumption is another interesting open direction.
\begin{table}[ht]
    \centering
    \caption{Comparison of offline DPO-like algorithms with sample complexity scaling only with $C^\star$. \label{tab:main_open_q} }
    \renewcommand{\arraystretch}{1.5} 
    \begin{tabular}{l|cc}
        \toprule
        & \textbf{Tabular Setting} & \textbf{General Policy Class} \\
        \midrule
        \textbf{Known $\pi_{\text{data}}$} & 
        \makecell[l]{\texttt{PEPO} (Ours) \\ DPO+SFT \cite{liu2024provably} \\ $\chi^2$PO \cite{huang2024correcting}} & 
        \makecell[l]{\cite{liu2024provably} (Convex only) \\ $\chi^2$PO \cite{huang2024correcting}} \\
        \midrule
        \textbf{Unknown $\pi_{\text{data}}$} & 
        \texttt{PEPO} (Ours) & 
        \textbf{\textit{Open Question}} \\
        \bottomrule
    \end{tabular}
\end{table}
\paragraph{Optimism via Ensemble} In the context of LLM post-training there are applications in which at training time it is possible to generate sentences from the model which is currently learning and getting feedback for this response under the form of a reward signal or a preference bit. Examples of the first kind are the emerging family of algorithms for reasoning such as GRPO \cite{shao2024deepseekmath,deepseekai2025deepseekr1}, GRPO Done Right \cite{liu2025understanding} and TBRM \cite{yuan2025trajectory} while for online algorithms learning from preference signals we highlight Online DPO \cite{guo2024direct} and Nash Learning from Human Feedback \cite{munos2024nash}.
While promising at an empirical level, they all miss an exploration mechanism that should be included to ensure that the learning model queries the reward or preference oracle for response which are uncertain, i.e. for which new information should really be acquired. Such lack of exploration has been also observed in certain experiments \cite{bamba2025xrpo,ding2025multi}. 
While recently some exploration mechanism have been proposed for improving GRPO, Best of N and Online DPO \cite{xie2024exploratory,tuyls2025representation} none of this works leverages the ensemble mechanism which we show to be effective for pessimism. Therefore, it remains open to investigate optimistic ensemble based versions of the above algorithms and see if they can lead to any theoretical and practical advantage over the existing exploration methods.

\paragraph{Outputting a parametric policy}
Our work successfully avoids over-optimization without knowledge of $\pidata$ at the cost of outputting a non-parametric policy from which we can sample efficiently but that we can not compute in closed form. Therefore, our method is ideal for being the last step of an LLM training pipeline (which is expected to be most often the case for DPO-like algorithms). However, \texttt{PEPO} is not suitable as an intermediate step of the training pipeline because it does not output a parametric policy, i.e. a training checkpoint in a more applied jargon. It is therefore interesting for future work to solve the exact same question this work solves but using parametric policies.

\section*{Acknowledgements}

Luca Viano is thankful to Adrian Müller for his insightful feedback about this work.  
This work was supported by Swiss Data Science Center under fellowship number P22\_03. Research was sponsored by the Army Research Office and was accomplished under Grant Number W911NF-24-1-0048. This work was funded  by the Swiss National Science Foundation (SNSF) under grant number 2000-1-240094. 
 This work was supported under project ID $\#$ 37 as part of the Swiss AI Initiative, through a grant from the ETH Domain and computational resources provided by the Swiss National Supercomputing Centre (CSCS) under the Alps infrastructure.

\newpage
\bibliographystyle{plainnat}
\bibliography{references}
\newpage
\appendix
\section{Additional Experiments}
\label{app:exp}
In this section, we report some experiments omitted from main text and provide additional details about our setup. 
\subsection{Ablation for the ensemble size $L$ in a controlled setting.}
We run an ablation for the ensemble size $L$ in a simple and controlled setting. The environment under consideration, we consider an empty prompt set and $20$ possible answers. The preferences are generated via the Bradley-Terry model with rewards which are all gaussian distributed. In particular, all rewards distribution but the one of the optimal arm have mean equal to $1$ and variance equal to $1$. On the other hand, the optimal arm's distribution has mean equal to $2.9$ and variance equal to $1/2$. We generate preference datasets of different sizes $\mathcal{D}$ and run \texttt{PEPO} for different ensemble sizes $L$. We report the results in \Cref{fig:Lablation} where we see that for all the values we tried PEPO finds the optimal action for all the $5$ considered seeds. However, the strongest performance is obtained for $L=3$. Increasing $L$, above $3$ slows down learning. This is consistent with the theoretical guideline of setting $L$ to the smallest value which is large enough to ensure pessimism.
\begin{figure}[h]
    \centering
    \includegraphics[width=0.45\textwidth]{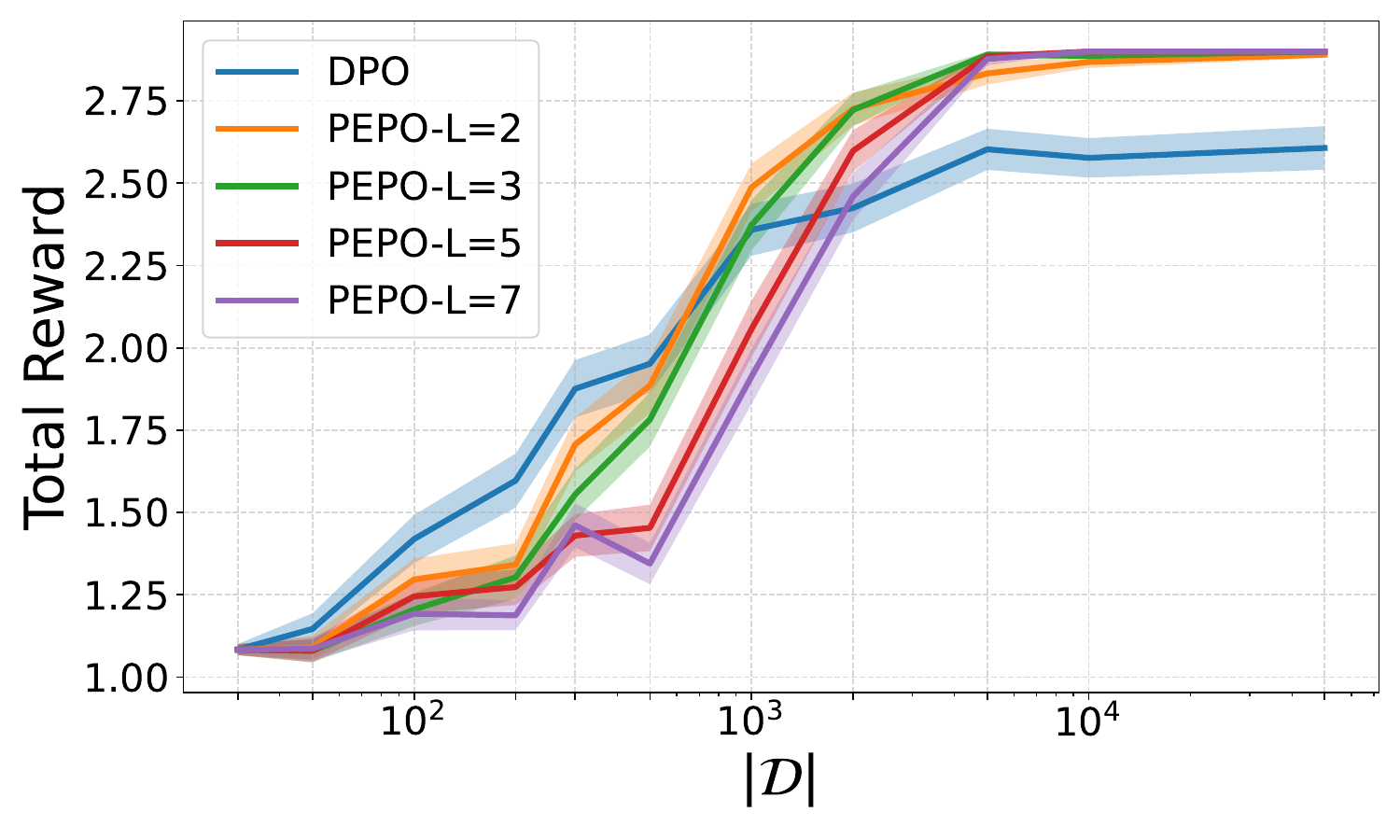}
    \caption{Ablation for $L$ in a bandit setting.}
    \label{fig:Lablation}
\end{figure}
\subsection{Known vs unknown $\boldsymbol{\pidata}$ in a controlled setting.}
\label{app:toy} 
We consider an empty prompt and $3$ possible actions. The preferences dataset is generated via the Bradley-Terry model with Gaussian rewards with means $[1,1.5,1]$ and variances $[1.5, 0.5, 1.5]$, i.e., the second action is the optimal one and also has lower variance. The other two actions are identical and suboptimal. 
\begin{figure}[t]
\centering
\begin{tabular}{cc} 
\subfloat[Known $\pidata$ \label{fig:known}]{
        \includegraphics[width=0.45\linewidth]{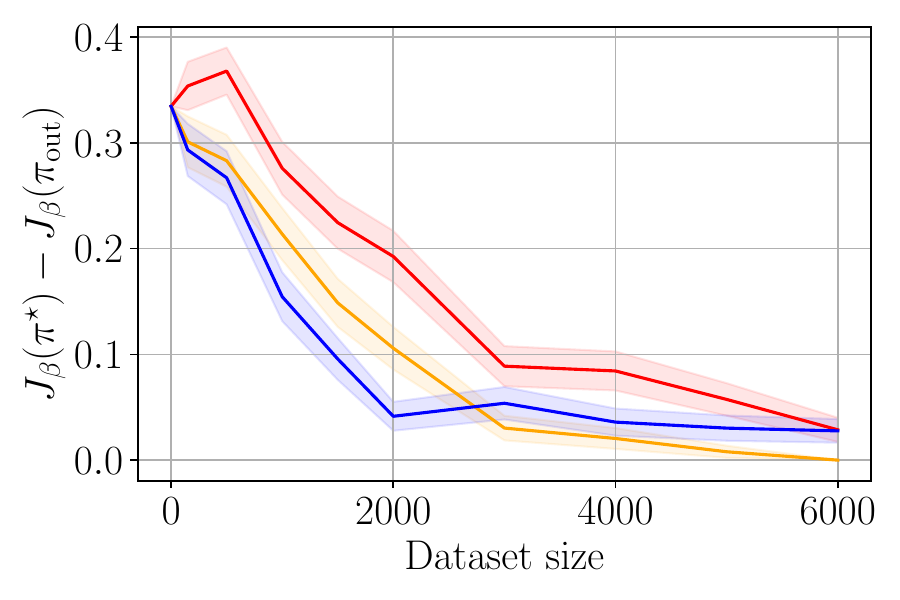}
    }\hfill
    \subfloat[\label{fig:unknown} Unknown $\pidata$]{
        \includegraphics[width=0.45\linewidth]{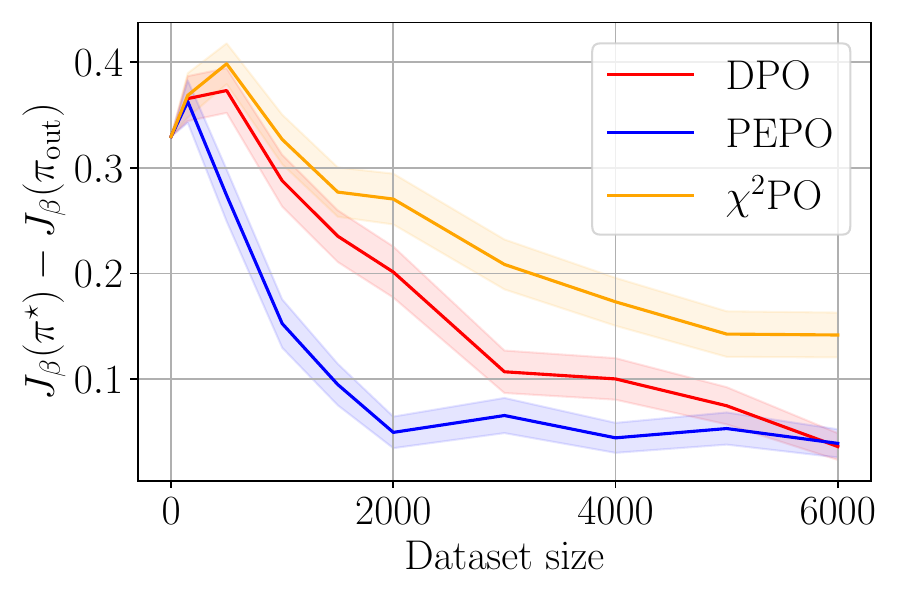}
    }
\end{tabular}
\caption{Experiment in $3$ arms bandit setting for the situation of known and unknown $\pidata$.}
\vskip -10pt
\end{figure}
In our experiment in \Cref{fig:known}, we assume knowledge of $\pidata$ and we run all the algorithms with $\piref=\pidata$. Consistently with the theoretical results, both $\chi^2$PO and \texttt{PEPO} outperforms DPO in this setting. 

However, when considering an unknown $\pidata$ ( see \Cref{fig:unknown} ), we can not set $\piref = \pidata$. 
For this experiment, we set $\piref$ to put most of the mass on a suboptimal action while $\pidata$ puts the largest weight on the optimal one. In this situation, $\texttt{PEPO}$ still works reliably while $\chi^2$PO fails because it interprets the high mean estimate for the optimal arm as affected by high uncertainty, as the optimal action has a small probability under $\piref$.
For \Cref{fig:known}, we used $\pidata=\piref=[ 0.04, 0.93, 0.03]$. That is, the largest mass is on the central action, which is the optimal one. This makes the optimal action well-covered, i.e. small $C^\star$, therefore $\chi^2$PO and \texttt{PEPO} can work reliably.  At the same time, the all-policy concentrability is large because the two suboptimal actions are poorly covered. This explains why DPO is outperformed in \Cref{fig:known}.
For the unknown case, we consider instead $\pidata = [ 0.04, 0.93, 0.03]$ and $\piref=[0.00025, 0.00025, 0.9995]$ and $\beta=0.001$. In this case, we have two distinct failure modes for DPO and $\chi^2$PO while \texttt{PEPO} still works reliably. DPO fails because the all-policy concentrability is still very large. This implies that the reward of the suboptimal actions can not be estimated accurately, and, therefore, there is the risk of playing a suboptimal action whose reward is wrongly overestimated. The failure mode of $\chi^2$PO is instead different. The mismatch between $\pidata$ and $\piref$ leads $\chi^2$PO to perceive the suboptimal action with high probability under $\piref$ as accurately estimated. On the contrary, since the optimal action has low mass under $\piref$, this action is perceived as quite uncertain. All in all, $\chi^2$PO concludes that the high reward estimate for the optimal action can not be trusted and plays a suboptimal action. \texttt{PEPO} still succeeds in this setting because under $\pidata$, we have that the optimal action is actually estimated more accurately than the suboptimal ones. \texttt{PEPO} can use this fact to successfully identify and play the optimal action if the dataset size is large enough. This is shown in \Cref{fig:unknown}.

To develop intuition and see this mechanism in an even more evident manner, we consider the same task with $\beta=0$. In this case, we can not derive DPO-like algorithms but we compute the maximum likelihood estimator $\hat{r}$ for the reward function from the preference dataset and then consider three approaches: (i) RL outputs the action with highest estimated reward, (ii) a method that we name $\chi^2$RL which outputs $\piout = \argmax_{\pi \in \Pi} \innerprod{\pi}{\hat{r}} - \gamma D_{\chi^2}(\pi, \piref)$ where we set $\gamma=40$ and (iii) a \texttt{PEPO}-like approach, dubbed PERL that instead of a single reward estimate computes $L$ of them, i.e. $\bcc{\hat{r}_\ell}^L_{\ell=1}$  and then outputs the action with highest \emph{robust reward} defined as $\min_{\ell\in[L]}\hat{r}_\ell$. We report the results of this experiment in \Cref{fig:betazero}.

\begin{figure*}[t] 
\centering
\begin{tabular}{cc} 
\subfloat[Known $\pidata$]{
        \includegraphics[width=0.45\textwidth]{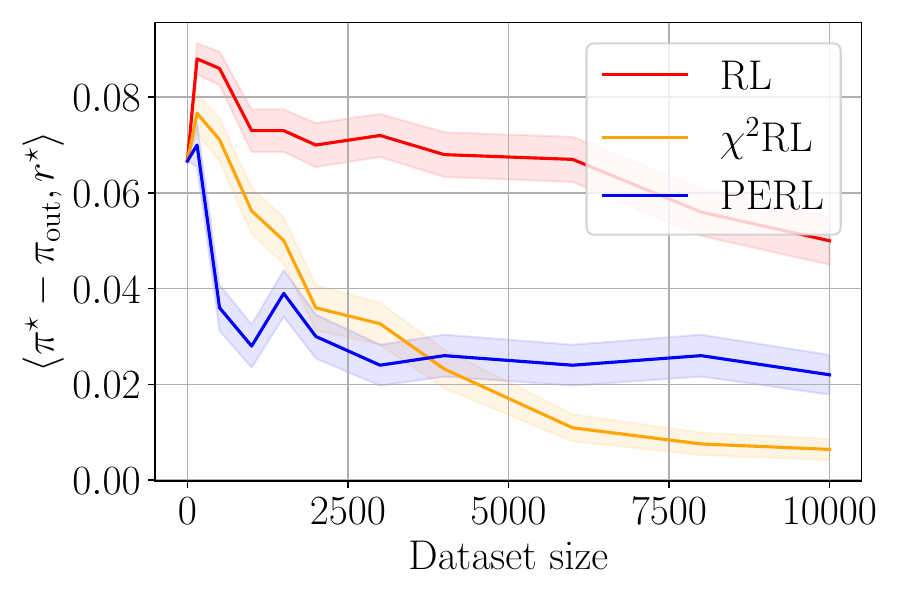}
    }
    &
    \subfloat[Unknown $\pidata$]{
        \includegraphics[width=0.45\textwidth]{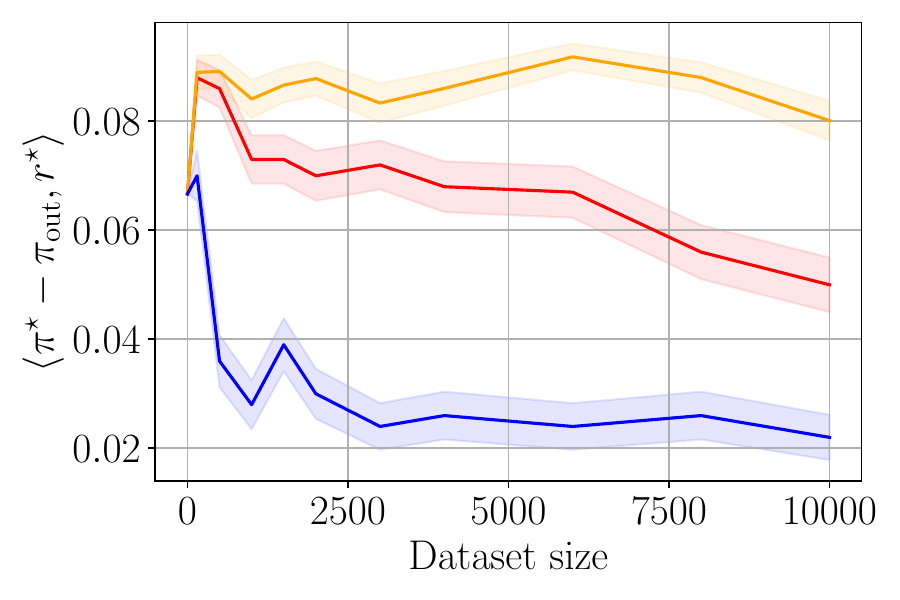}
    }
\\
\\
\end{tabular}
\caption{Result in the controlled setting without regularization, i.e. with $\beta=0$.}
\label{fig:betazero}
\end{figure*}
\subsection{Other worst-case aggregation rules for the ensemble. }
The rejection sampling rule with the minimum in the numerator as defined in \Cref{alg:rejsampl} can be over conservative in practice and leads to a very high rejection rate. One possible alternative to reduce the generation time of the model is to consider this alternative numerator,
$f_{\mathrm{out}}(x,a) = \bs{\frac{\sum^L_{\ell=1}\tilde{\pi}^{\ell}(a|x)}{L}
    - \eta \cdot \mathrm{std}(\bcc{\tilde{\pi}^{\ell}(a|x)}^L_{\ell=1})}\exp\br{-\zeta^\ell(x)}$
where $\eta$ is a hyperparameter that can be used to tune the level of pessimism. With the same technique used in \cite{viel2025soar}, we can still establish the sample complexity guarantees for this variant if we set $\eta = \sqrt{L-1}$. However, in practice, we observe a more permissive value of $\eta$, such as $0.1$, working better. We use this variant for the comparison with the token-level \texttt{PEPO} in \Cref{tab:refsamp_vs_token}.

\subsection{Details for the large scale experiments}
\label{app:large_experiments_details}
Here, we report details on our LLM experimental setup.
\subsubsection{Early-stopping in DPO selects a suboptimal checkpoint.}
\label{app:earlystopfails}
We point out that early-stopping is not an option because at training time it is impossible to monitor the generation quality, i.e. the win rate. One could monitor the DPO evaluation loss but this is a poor proxy for the win rate. In \Cref{fig:early_stop} we observe that higher values of evaluation loss correspond to better win rates and that in general evaluation loss-based early stopping leads to select a suboptimal checkpoint. Therefore, over-optimization can definitely not be bypassed by evaluation loss-based early stopping.

\begin{figure}[ht]
    \centering
    \large
    \begin{minipage}{0.3\textwidth}
        \centering
        \begin{tikzpicture}[scale=0.5]
            \begin{axis}[title={Llama-3.1-Tulu}, ylabel={\textcolor{blue}{Eval Loss}}, xlabel={Epoch}, grid=major]
                \addplot[blue, mark=*] coordinates {(0,0.6932) (1,0.4815) (2,0.4790) (3,0.5131) (4,0.6186) (5,0.7802) (6,0.8876) (7,0.9754) (8,1.0534) (9,1.1115) (10,1.1927)};
                \draw[blue, dashed, ultra thick] (axis cs:2,\pgfkeysvalueof{/pgfplots/ymin}) -- (axis cs:2,\pgfkeysvalueof{/pgfplots/ymax});
            \end{axis}
            \begin{axis}[axis y line*=right, axis x line=none, ymin=45, ymax=75]
                \addplot[red, mark=square*] coordinates {(0,50.00) (1,61.08) (2,60.73) (3,62.34) (4,65.25) (5,69.30) (6,68.73) (7,68.27) (8,65.68) (9,63.94) (10,62.23)};
                \draw[red, dotted, ultra thick] (axis cs:5,45) -- (axis cs:5,75);
            \end{axis}
        \end{tikzpicture}
    \end{minipage}
    \hfill
    \begin{minipage}{0.3\textwidth}
        \centering
        \begin{tikzpicture}[scale=0.5]
            \begin{axis}[title={zephyr-7b-sft}, xlabel={Epoch}, grid=major]
                \addplot[blue, mark=*] coordinates {(0,9.0628) (1,6.1304) (2,5.1855) (3,4.6557) (4,4.3897) (5,4.2137) (6,4.3103) (7,4.5515) (8,4.8146) (9,4.9365) (10,4.8717)};
                \draw[blue, dashed, ultra thick] (axis cs:5,\pgfkeysvalueof{/pgfplots/ymin}) -- (axis cs:5,\pgfkeysvalueof{/pgfplots/ymax});
            \end{axis}
            \begin{axis}[axis y line*=right, axis x line=none, ymin=45, ymax=80]
                \addplot[red, mark=square*] coordinates {(0,50.00) (1,62.08) (2,71.87) (3,75.32) (4,73.82) (5,72.01) (6,70.84) (7,68.93) (8,69.26) (9,69.67) (10,69.23)};
                \draw[red, dotted, ultra thick] (axis cs:3,45) -- (axis cs:3,80);
            \end{axis}
        \end{tikzpicture}
    \end{minipage}
    \hfill
    \begin{minipage}{0.3\textwidth}
        \centering
        \begin{tikzpicture}[scale=0.5]
            \begin{axis}[title={Yi-34B-Chat}, xlabel={Epoch}, grid=major]
                \addplot[blue, mark=*] coordinates {(0,0.6931) (1,0.4875) (2,0.5812) (3,0.7862) (4,0.9546)};
                \draw[blue, dashed, ultra thick] (axis cs:1,\pgfkeysvalueof{/pgfplots/ymin}) -- (axis cs:1,\pgfkeysvalueof{/pgfplots/ymax});
            \end{axis}
            \begin{axis}[axis y line*=right, axis x line=none, ymin=48, ymax=65, ylabel={Winrate}, ylabel style={color=red}]
                \addplot[red, mark=square*] coordinates {(0,50.00) (1,54.36) (2,63.01) (3,58.57) (4,59.05)};
                \draw[red, dotted, ultra thick] (axis cs:2,48) -- (axis cs:2,65);
            \end{axis}
        \end{tikzpicture}
    \end{minipage}
    \caption{DPO Training Curves. Blue dashed: min eval loss; Red dotted: max winrate. As noticeable, early stopping according to the evaluation loss leads to selecting a suboptimal checkpoint.\label{fig:early_stop}}
\end{figure}
\subsubsection{Models and Fine-Tuning Setup}

We evaluate \texttt{PEPO} on four instruction-tuned language models spanning a range of sizes. At the 7--8B scale, we consider Llama-3.1-Tulu-3-8B-SFT, which serves as a strong open-source baseline, along with Zephyr-7B-SFT and Mistral-7B-SFT. To demonstrate scalability, we also include Yi-34B-Chat, a 34B parameter model. All backbone models have undergone supervised fine-tuning (SFT) prior to preference optimization.

Rather than full fine-tuning, we employ Low-Rank Adaptation (LoRA)~\citep{hu2022lora} for parameter-efficient training. All LoRA adapters use rank $r=16$ with scaling factor $\alpha=16$ (the low-rank updates are scaled by $\alpha/r$), a dropout rate of 5\%, and are applied to all linear layers in the model. This adds approximately 0.5-1.5\% (depending on the model) parameters which are trainable while the backbone stays frozen.

\paragraph{Memory Efficiency of the Ensemble.} A crucial practical advantage of \texttt{PEPO} is that maintaining an ensemble of $L$ models does \emph{not} require $L$ times more GPU memory compared to a single model. Since all ensemble members share the same frozen backbone and only differ in their lightweight LoRA adapters, the total memory footprint is approximately $M_{\text{base}} + L \cdot M_{\text{adapter}}$ rather than $L \cdot M_{\text{base}}$. Given that each adapter constitutes only small fraction of the base model size, the memory overhead of the ensemble is modest even for moderate values of $L$.
The schematic implementation of this process is shown in \cref{fig:parallel_inference}.

\begin{figure}
\centering
\begin{tabular}{c} 
\subfloat[Parallel Training \label{fig:parallel_training}]{
    \begin{tikzpicture}[scale=0.55, transform shape]
        \draw[thin] (0, 0) -- (0, 4.2);
        \foreach \x in {5, 10} \draw[dotted, gray] (\x, 0) -- (\x, 4.2);

        \node[anchor=east] at (-0.5, 3.4) {\textbf{GPU 1}};
        \node[anchor=east] at (-0.5, 2.2) {\textbf{GPU 2}};
        \node[anchor=east] at (-0.5, 0.5) {\textbf{CPU}};
        
        \draw[dotted, gray] (-6.0, 2.8) -- (18.5, 2.8); 
        \draw[solid, thin] (-6.0, 1.4) -- (18.5, 1.4);  

        \node at (2.5, 4.3) {\textbf{Step 1}};
        \fill[blue!20] (0.2, 3.1) rectangle (3.2, 3.7); \fill[pink] (3.2, 3.1) rectangle (3.5, 3.7);
        \node at (1.7, 3.4) {Backbone}; \node[rotate=90, font=\tiny] at (3.35, 3.4) {1};
        \node[font=\scriptsize] at (1.7, 2.8) {Epoch 1};
        \fill[blue!20] (0.2, 1.9) rectangle (3.2, 2.5); \fill[pink] (3.2, 1.9) rectangle (3.5, 2.5);
        \node at (1.7, 2.2) {Backbone}; \node[rotate=90, font=\tiny] at (3.35, 2.2) {2};
        \node[font=\scriptsize] at (1.7, 1.6) {Epoch 1};
        \fill[blue!10] (0.7, 0.2) rectangle (3.7, 0.8); \fill[pink] (3.7, 0.2) rectangle (4.0, 0.8);
        \node[font=\tiny] at (2.2, 0.5) {M3 (Idle)};

        \node at (7.5, 4.3) {\textbf{Step 2}};
        \draw[->, dashed, blue!50!black, thick] (4.0, 0.8) .. controls (4.4, 1.2) .. (5.0, 3.1);
        \node[blue!50!black, font=\tiny, rotate=35] at (4.4, 1.8) {Acquire GPU};
        
        \fill[blue!20] (5.2, 3.1) rectangle (8.2, 3.7); \fill[pink] (8.2, 3.1) rectangle (8.5, 3.7);
        \node at (6.7, 3.4) {Backbone}; \node[rotate=90, font=\tiny] at (8.35, 3.4) {3};
        \node[font=\scriptsize] at (6.7, 2.8) {Epoch 1};
        \fill[blue!20] (5.2, 1.9) rectangle (8.2, 2.5); \fill[pink] (8.2, 1.9) rectangle (8.5, 2.5);
        \node at (6.7, 2.2) {Backbone}; \node[rotate=90, font=\tiny] at (8.35, 2.2) {1};
        \node[font=\scriptsize] at (6.7, 1.6) {Epoch 2};
        \fill[blue!10] (5.7, 0.2) rectangle (8.7, 0.8); \fill[pink] (8.7, 0.2) rectangle (9.0, 0.8);
        \node[font=\tiny] at (7.2, 0.5) {M2 (Idle)};

    \end{tikzpicture}} \\
    \subfloat[]{\begin{tikzpicture}[scale=0.55, transform shape]
        \draw[thin] (0, 0.5) -- (0, 4.3);
        \foreach \x in {5, 10, 15} \draw[dotted, gray] (\x, 0.5) -- (\x, 4.3);
        
        \draw[dotted, gray] (-6.0, 2.8) -- (18.5, 2.8);

        \def\u{0.2} \def\bw{0.8}

        \node[anchor=south] at (-5.1, 3.4) {\textbf{Input}};
        \draw[thick, gray] (-5.5, 2.2) rectangle (-5.5+\bw, 3.4);
        \foreach \i in {1,...,5} { \draw[gray, thin] (-5.5, 2.2 + \i*\u) -- (-5.5+\bw, 2.2 + \i*\u); }
        \foreach \j in {1,...,3} { \draw[gray, thin] (-5.5 + \j*\u, 2.2) -- (-5.5 + \j*\u, 3.4); }
        
        \draw[thick] (-4.7, 2.8) -- (-4.2, 2.8);
        \draw[thick] (-4.2, 1.8) -- (-4.2, 3.8);
        \draw[->, thick] (-4.2, 3.8) -- (-3.5, 3.8);
        \draw[->, thick] (-4.2, 1.8) -- (-3.5, 1.8);

        \draw[thick, gray] (-3.5, 3.5) rectangle (-3.5+\bw, 4.1);
        \foreach \i in {1,2} { \draw[gray, thin] (-3.5, 3.5+\i*\u) -- (-3.5+\bw, 3.5+\i*\u); }
        \foreach \j in {1,...,3} { \draw[gray, thin] (-3.5+\j*\u, 3.5) -- (-3.5+\j*\u, 4.1); }
        \draw[thick, gray] (-3.5, 1.5) rectangle (-3.5+\bw, 2.1);
        \foreach \i in {1,2} { \draw[gray, thin] (-3.5, 1.5+\i*\u) -- (-3.5+\bw, 1.5+\i*\u); }
        \foreach \j in {1,...,3} { \draw[gray, thin] (-3.5+\j*\u, 1.5) -- (-3.5+\j*\u, 2.1); }

        \node[anchor=east] at (-0.5, 3.8) {\textbf{GPU 1}};
        \node[anchor=east] at (-0.5, 1.8) {\textbf{GPU 2}};

        \newcommand{\drawInferUnit}[3]{
            \fill[blue!20] (#1+0.2, #2) rectangle (#1+3.2, #2+0.6);
            \node[font=\scriptsize] at (#1+1.7, #2+0.3) {Backbone};
            \foreach \i in {1,2,3} {
                \pgfmathsetmacro{\xoff}{3.2 + (\i-1)*0.3}
                \ifnum\i=#3
                    \fill[pink] (#1+\xoff, #2) rectangle (#1+\xoff+0.3, #2+0.6);
                \else
                    \draw[dotted, pink, thick] (#1+\xoff, #2) rectangle (#1+\xoff+0.3, #2+0.6);
                \fi
                \node[rotate=90, font=\tiny] at (#1+\xoff+0.15, #2+0.3) {\i};
            }
            \draw[->, black] (#1+1.7, #2) -- (#1+1.7, #2-0.3);
            \fill[gray!10] (#1+1.6, #2-0.9) rectangle (#1+1.8, #2-0.3); 
            \draw[gray!50] (#1+1.6, #2-0.9) rectangle (#1+1.8, #2-0.3);
            \foreach \i in {1,2} { \draw[gray!50, thin] (#1+1.6, #2-0.9+\i*\u) -- (#1+1.8, #2-0.9+\i*\u); }
        }

        \node at (2.5, 4.8) {\textbf{Step 1}}; \drawInferUnit{0}{3.5}{1} \drawInferUnit{0}{1.5}{1}
        \node at (7.5, 4.8) {\textbf{Step 2}}; \drawInferUnit{5}{3.5}{2} \drawInferUnit{5}{1.5}{2}
        \node at (12.5, 4.8) {\textbf{Step 3}}; \drawInferUnit{10}{3.5}{3} \drawInferUnit{10}{1.5}{3}

        \node at (16.5, 4.8) {\textbf{Agg \& Append}};
        \foreach \rowy/\targety in {2.6/3.5, 0.6/1.5} {
            \foreach \i in {0,1,2} {
                \fill[gray!10] (15.3 + \i*0.3, \rowy) rectangle (15.5 + \i*0.3, \rowy+0.6);
                \draw[gray!50] (15.3 + \i*0.3, \rowy) rectangle (15.5 + \i*0.3, \rowy+0.6);
                \foreach \j in {1,2} { \draw[gray!50, thin] (15.3+\i*0.3, \rowy+\j*\u) -- (15.5+\i*0.3, \rowy+\j*\u); }
            }
            \draw[->, thick] (16.3, \rowy+0.3) -- (17.1, \rowy+0.3);
            \fill[gray!70] (17.1, \rowy) rectangle (17.3, \rowy+0.6); \draw[black] (17.1, \rowy) rectangle (17.3, \rowy+0.6);
            \foreach \j in {1,2} { \draw[black, thin] (17.1, \rowy+\j*\u) -- (17.3, \rowy+\j*\u); }
            \draw[->, thick] (17.2, \rowy+0.6) -- (17.2, \targety);
            \draw[thick, gray] (16.4, \targety) rectangle (16.4+\bw, \targety+0.6); 
            \foreach \i in {1,2} { \draw[gray, thin] (16.4, \targety+\i*\u) -- (16.4+\bw, \targety+\i*\u); }
            \foreach \j in {1,...,3} { \draw[gray, thin] (16.4+\j*\u, \targety) -- (16.4+\j*\u, \targety+0.6); }
            \fill[gray!70] (16.4+0.8, \targety) rectangle (16.4+1.0, \targety+0.6); \draw[black] (16.4+0.8, \targety) rectangle (16.4+1.0, \targety+0.6);
            \foreach \i in {1,2} { \draw[black, thin] (16.4+0.8, \targety+\i*\u) -- (16.4+1.0, \targety+\i*\u); }
        }

        \draw[->, blue!30, thick, dashed, rounded corners=15pt] (17.4, 3.8) -- (18.2, 5.3) -- (-3.1, 5.3) -- (-3.1, 4.1);
        \node[blue!40, font=\normalfont, fill=white, inner sep=1pt] at (7.5, 5.3) {Next Iteration};

    \end{tikzpicture}}
\end{tabular}
\caption{Parallel Inference (Sequential Adapter Switching) \label{fig:parallel_inference}}
\end{figure}

\subsubsection{Dataset and Preprocessing}

For all large-scale experiments, we utilize the UltraFeedback Binarized dataset~\citep{cui2023ultrafeedback}. This version is derived from the original UltraFeedback corpus by "binarizing" multi-model outputs into discrete preference pairs. Specifically, for each prompt, four responses were generated by diverse language models and scored by GPT-4 across multiple dimensions (e.g., helpfulness and honesty). The binarized version identifies the highest-scoring response as the \textit{chosen} sample and selects one of the remaining lower-scoring responses as the \textit{rejected} sample. We use the standard \texttt{train\_sft} split for training and \texttt{test\_sft} for validation. However, we note that the validation loss is a poor proxy for downstream evaluation performance (e.g., AlpacaEval2 win rate), making early stopping based on this metric unreliable. This further underscores the need for methods like PEPO that are inherently robust to overoptimization.

During preprocessing, we truncate sequences to a maximum length of 1024 tokens, with prompts limited to 512 tokens to ensure sufficient space for generated responses. 

\paragraph{Data Partitioning.} For \texttt{PEPO}, we partition the training data into $L$ disjoint subsets of approximately equal size, where $L$ is the ensemble size. This partitioning ensures statistical independence across ensemble members, as required by our theoretical analysis (Lemma~\ref{lemma:pessmain}). We note that in practice, data efficiency could potentially be improved by using overlapping subsets, though we leave this exploration to future work.

\subsubsection{Training Procedure}

We train all models using the AdamW optimizer with \texttt{bfloat16} mixed precision for computational efficiency. Training is carried out with a learning rate of $1\times 10^{-5}$ and a weight decay of $0.01$. We maintain a consistent effective batch size of 64 across all experiments, using gradient accumulation where necessary. To accelerate training, we pre-compute and cache the reference model's log-probabilities for all training examples, avoiding redundant forward passes through the frozen backbone.

For \texttt{PEPO}, the $\beta$ parameter controlling KL regularization is set to 0.1, matching the DPO baseline. We set the pessimism parameter $\lambda$ (denoted as $\alpha$ in our codebase) to 0.1. While our theory suggests this term should be state action-dependent, we treat it as a constant prompt-response independent hyperparameter in practice for simplicity.
We choose to take the $\lambda$ as a constant because we see the minimum over the ensemble as the main pessimistic mechanism.
We also conducted an ablation for $\lambda$, but observed no substantial difference in downstream performance. We ablate the ensemble size $L \in \{2, 3, 4\}$ to understand the trade-off between pessimism strength and data efficiency. Note that $L=1$ corresponds to a single model trained with the pessimistic DPO loss, while $L=1$ with the pessimism parameter $\alpha$ set to 0 recovers the standard DPO.

For the baselines, $\chi^2$PO uses $\beta=0.1$, $\gamma=1.0$, and $R_{\max}=10.0$ with internal gradient clipping, following the original paper. SFT+DPO combines the DPO objective with a supervised fine-tuning regularizer weighted by $\lambda_{\text{SFT}}=0.005$. In SFT+DPO we use chosen as well as rejected answers from the UltraFeedback as an estimate for $\pidata$. All experiments use seed 42 for reproducibility.

\subsubsection{Ensemble Training and Inference}

The \texttt{PEPO} ensemble is trained by assigning each LoRA adapter to its corresponding data partition. Ensemble members can be trained sequentially or in parallel across multiple GPUs \textbf{without any required communication} between the members of the ensemble. 

While we introduced a rejection sampling scheme in \Cref{sec:algo} that samples exactly from $\piout$, it can be computationally expensive at inference time. For our large-scale experiments, we instead employ the token-level approximation detailed in \Cref{app:tokenpepo}. This variant applies the pessimistic aggregation at each decoding step, enabling efficient autoregressive generation. We use greedy decoding (temperature $T=0$) with this token-level aggregation for all reported results unless stated otherwise.

\paragraph{Shared backbone inference} During inference of \texttt{PEPO}, we load multiple adapters together with one shared backbone. This allows us to efficiently switch the adapters on and off to run the forward pass with different members of the ensemble without reloading the model to and from device each time. We make use of multiple GPUS by distributing multiple copies of this shared backbone model on each available device.

\subsubsection{Evaluation Protocol}

We evaluate all models on AlpacaEval2~\citep{alpaca_eval}, a benchmark that measures pairwise win rate against a reference model across 805 diverse prompts spanning open-ended generation, reasoning, and instruction-following tasks. We report the win rate against the initial (epoch 0) SFT model, which measures the improvement from preference optimization. To ensure reproducibility and manage costs, we use Llama-3-70B-Instruct as the preference judge (run locally), utilizing the \texttt{alpaca\_eval\_llama3\_70b\_fn} configuration.

All responses are generated using a maximum of 1024 new tokens. We evaluate model checkpoints at every training epoch to capture learning dynamics and potential over-optimization. The win rates reported in the \Cref{tab:best-winrates} correspond to the epoch checkpoint generated by each algorithm.

\subsubsection{Computational Resources}

Experiments were conducted on NVIDIA GH200 120GB and NVIDIA A100-SXM4-80GB GPUs. We allocated one GPU per ensemble member for training (e.g., 4 GPUs for a \texttt{PEPO} model with $L=4$, trained in parallel) and two GPUs for evaluation and generation (mainly to accommodate the Llama-3-70B-Instruct judge). Since we parallelize the training of the ensemble across devices and partition the dataset such that each member sees only $1/L$ of the data, the wall-clock time required for training is significantly reduced (roughly proportional to $1/L$) compared to training a single model on the full dataset. 

For training, we observed that the average wall-clock time per epoch for the Llama-3-8B model dropped from 3.61 hours ($L=1$) to 0.90 hours ($L=4$). Similarly, for Yi-34B, training took approximately 4.0 hours per epoch with $L=4$. 

For evaluation on AlpacaEval2, generating responses and computing win rates using the token-level sampling detailed in \Cref{app:tokenpepo} for the 8B model takes approximately 30 minutes on a single NVIDIA GH200 GPU and approximately 1.25 hours on an NVIDIA A100 GPU. In contrast, full rejection sampling using the minimum probability criterion is significantly slower. We observe that the acceptance probability decreases towards $0$ as the ensemble is trained for more epochs. This causes the generator to saturate the trial limit ($\sim 16$ attempts per answer), increasing the evaluation time to approximately 80 hours on an NVIDIA A100 GPU. However, an optimized variant using mean and standard deviation criteria mitigates this collapse, maintaining a high acceptance rate ($\alpha \approx 0.8$) and low overhead ($\sim 1.2$ attempts per answer). While this optimized variant reduces the runtime to around 4 hours, it remains significantly slower ($8\times$) than the greedy approach. 
\subsubsection{Generation time win rate tradeoff for different ensembles}
We report the same analysis reported in \Cref{tab:refsamp_vs_token} for epochs $2$ and $3$ of the model Llama-8B. The same pattern observed for epoch $1$  in \Cref{tab:refsamp_vs_token} repeats here.
\begin{figure}[ht]
    \centering
    \begin{tikzpicture}
        \begin{scope}[xshift=0cm]
            \begin{axis}[
                scale=0.5,
                xlabel={ms/token},
                ylabel={Win Rate (\%)},
                grid=major,
                xmin=0, xmax=180,
                ymin=45, ymax=78,
                legend style={at={(0.98,0.02)}, anchor=south east, font=\scriptsize},
            ]
                \addplot[only marks, mark=triangle*, mark size=4pt, color=gray, thick] coordinates {(6.221, 50.00)};
                \addplot[only marks, mark=square*, mark size=4pt, color=teal, thick] coordinates {(24.089, 65.35)};
                \addplot[only marks, mark=*, mark size=4pt, color=red!70!black, thick] coordinates {(101.012, 68.08)};
                \legend{DPO, \texttt{PEPO} token, \texttt{PEPO} rejection}
            \end{axis}
        \end{scope}
        \begin{scope}[xshift=5.5cm]
            \begin{axis}[
                scale=0.5,
                xlabel={ms/token},
                grid=major,
                xmin=0, xmax=180,
                ymin=45, ymax=78,
            ]
                \addplot[only marks, mark=triangle*, mark size=4pt, color=gray, thick] coordinates {(6.380, 50.00)};
                \addplot[only marks, mark=square*, mark size=4pt, color=teal, thick] coordinates {(23.812, 65.29)};
                \addplot[only marks, mark=*, mark size=4pt, color=red!70!black, thick] coordinates {(162.416, 69.11)};
            \end{axis}
        \end{scope}
    \end{tikzpicture}
    \caption{Win rate vs.\ generation latency trade-off for epoch~2 (left) and epoch~3 (right). Token-level \texttt{PEPO} achieves win rates close to rejection sampling at a fraction of the computational cost.\label{fig:tradeoff}}
\end{figure}
\newpage
\section{Guarantees for the token level implementation of \texttt{PEPO}}
\label{app:tokenpepo}
As mentioned in the main text, we also introduce a version for \texttt{PEPO} which implements the pessimistic aggregation rule given by \Cref{lemma:closed_form} at the token level rather than the trajectory level. This approach is faster at generation time because it does not reject generations, while \Cref{alg:rejsampl} does. In practice, this translates to a 3x speed at generation time, therefore we preferred this version for most of our large-scale evaluations.

We can notice that the token level aggregation rule is even more pessimistic than the trajectory level version and hence avoids dependence on $C^{\mathrm{all}}$. 

For this result, we consider the following token-level MDP. We consider that it has horizon (sequence length) equal to $H \in \mathbb{N}$.
 As in the main paper we denote as $x$ the initial user prompt and as $a$ the model response. We denote the individual tokens of the response as $\bcc{a_h}^H_{h=1}$ and use the squared brackets to denote concatenation $a = [a_1, \dots, a_H]$. At each stage $h\in[H]$, we consider the state, $x_h = [x,a_1, \dots, a_{h-1}]$, which is the concatenation of the initial prompt and the tokens the model produced so far.
 We consider that $\piref$ and $\tilde{\pi}^\ell$ are autoregressive models. That is, $\piref(a|x) = \prod^{H-1}_{h=1} \piref(a_{h+1} |x, a_1, \dots, a_{h})$ and $\tilde{\pi}^\ell(a|x) = \prod^{H-1}_{h=1} \tilde{\pi}^\ell(a_{h+1} |x, a_1, \dots, a_{h})$. 
 
  At this point, we introduce the token-estimated gap $$\widehat{\Delta}^{\mathrm{token}}(x,a,b):= \beta \sum^H_{h=1}\br{\min_{\ell\in[L]} \log \frac{\tilde{\pi}^\ell(a_h|x_h)}{\piref(a_h|x_h)} - \max_{\ell\in[L]} \log \frac{\tilde{\pi}^\ell(b_h|x_h)}{\piref(b_h|x_h)}},$$ which will be crucial in proving the following result.
\begin{lemma}
\label{lem:pepotoken}
    Let, $\piout^{\mathrm{token}}(a_h|x_h) = \frac{\min_{\ell\in[L]}\tilde{\pi}^{\ell}(\cdot|x, a_1,\dots,a_{h-1})}{\sum_{a\in\A_h}\min_{\ell\in[L]}\tilde{\pi}^{\ell}(a|x, a_1,\dots,a_{h-1})}$, then it holds that with probability $1-\delta$, for any $\beta>0$, it holds that
    $J_\beta(\pi^\star) - J_\beta(\piout^{\mathrm{token}}) \leq \sqrt{C^\star \innerprod{\pidata}{\mathrm{Err}^2_{\mathrm{token}}}}$,
where $\mathrm{Err}_{\mathrm{token}}(x,a) :=  \widehat{\Delta}^{\mathrm{token}}(x,a,\pidata) - \Delta_{r^\star}(x,a,\pidata)$.
\end{lemma}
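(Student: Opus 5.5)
The proof mirrors the trajectory-level argument behind \Cref{lemma:conversion}, combined with the pessimism established in \Cref{lemma:pessmain}. The plan has three steps: express the token-level output as the solution of a KL-regularized problem with a pessimistic reward, check that this reward's gap is pessimistic, and run the standard pessimism-plus-Cauchy--Schwarz chain.

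\textbf{Step 1 (variational characterization).} Define the token-level worst-case reward
\[
r^-_{\mathrm{token}}(x,a) := \beta\sum_{h=1}^H \min_{\ell\in[L]}\log\frac{\tilde{\pi}^\ell(a_h|x_h)}{\piref(a_h|x_h)} .
\]
Consider the KL-regularized problem over autoregressive policies with this reward, which decomposes additively over tokens. The soft Bellman recursion for it gives the per-token optimal policy
\[
\pi(a_h|x_h) \propto \piref(a_h|x_h)\exp\bigl(\min_\ell \log \tfrac{\tilde{\pi}^\ell}{\piref}(a_h|x_h) + V_{h+1}(x_{h+1})/\beta\bigr).
\]
The continuation values $V_{h+1}$ do not cancel in general. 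So instead of claiming exact optimality, I would work with a comparison inequality. Take the product policy $\piout^{\mathrm{token}}$ and use the fact that the per-token normalizers satisfy $\sum_{a_h}\min_\ell\tilde{\pi}^\ell(a_h|x_h)\le 1$. This yields, for every comparator $\pi^\star$,
\[
\innerprod{\pi^\star}{r^-_{\mathrm{token}}}-\beta KL(\pi^\star,\piref) \le \innerprod{\piout^{\mathrm{token}}}{r^-_{\mathrm{token}}}-\beta KL(\piout^{\mathrm{token}},\piref) + \text{(normalizer terms)},
\]
where the normalizer terms, $-\beta\sum_h\log\sum_{a_h}\min_\ell \tilde\pi^\ell$, are nonnegative. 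They need to be absorbed or shown to cancel when we pass to the gap-based argument. This is the step I expect to be the main obstacle. My first attempt would be to argue directly through the Gibbs variational principle at each token: the normalizers are $x_h$-dependent constants, so they can be treated like the $\zeta^\ell(x)$ shifts, which disappear in gaps $\Delta(x,a,b)$.

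\textbf{Step 2 (pessimism).} The gap $\widehat{\Delta}^{\mathrm{token}}$ is a min over $\ell$ per token for $a$ minus a max per token for $b$. Since the min of a sum is at least the sum of the mins, and symmetrically for the max, we get
\[
\widehat{\Delta}^{\mathrm{token}}(x,a,b)\le \widehat{\Delta}(x,a,b)
\]
for the trajectory-level quantity, ignoring the $\zeta$ shifts, which cancel in differences. By \Cref{lemma:pessmain}, with probability $1-\delta$ we then have $\widehat{\Delta}^{\mathrm{token}}(x,a,\pidata)\le\Delta_{r^\star}(x,a,\pidata)$, i.e.\ $\mathrm{Err}_{\mathrm{token}}\le 0$.

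\textbf{Step 3 (decomposition).} Normalize $r^\star$ so that $\Delta_{r^\star}(x,a,\pidata)=r^\star(x,a)$ up to an $x$-dependent constant, which does not affect policy comparisons. Then split
\[
J_\beta(\pi^\star)-J_\beta(\piout^{\mathrm{token}})
\]
into
\[
\innerprod{\pi^\star}{\Delta_{r^\star}(\cdot,\cdot,\pidata)-\widehat{\Delta}^{\mathrm{token}}(\cdot,\cdot,\pidata)}
\]
plus the difference of regularized objectives under $\widehat{\Delta}^{\mathrm{token}}$, plus
\[
\innerprod{\piout^{\mathrm{token}}}{\widehat{\Delta}^{\mathrm{token}}-\Delta_{r^\star}}.
\]
The middle term is $\le 0$ by Step 1. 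The last term is $\le 0$ by Step 2. The first term equals $\innerprod{\pi^\star}{-\mathrm{Err}_{\mathrm{token}}}$. Writing it as $\sum_x\initial(x)\sum_a \frac{\pi^\star}{\sqrt{\pidata}}\sqrt{\pidata}\,|\mathrm{Err}_{\mathrm{token}}|$ and applying Cauchy--Schwarz gives
\[
\sqrt{C^\star\innerprod{\pidata}{\mathrm{Err}_{\mathrm{token}}^2}},
\]
which is the claimed bound.
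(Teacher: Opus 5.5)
There is a genuine gap in your Step 1, and the fix you suggest does not close it. As a result, the claim in Step 3 that the middle term is $\le 0$ is unproven. You were right not to claim that $\piout^{\mathrm{token}}$ is exactly optimal. Your Steps 2 and 3 otherwise coincide with the paper's proof: the min-of-a-sum comparison, Lemma~\ref{lemma:pessmain}, and the decomposition plus Cauchy--Schwarz of Lemma~\ref{lemma:conversion}.

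Write $Z_h(x_h):=\sum_{a'}\min_{\ell}\tilde{\pi}^\ell(a'|x_h)\le 1$. Then $r^-_{\mathrm{token}}(x,a)=\beta\log\frac{\piout^{\mathrm{token}}(a|x)}{\piref(a|x)}+\beta\sum_h\log Z_h(x_h)$. Hence for every policy $\pi$, with $\mathbb{E}_\pi$ taken over $x\sim\initial$ and $a\sim\pi(\cdot|x)$,
\[
\innerprod{\pi}{r^-_{\mathrm{token}}}-\beta KL(\pi,\piref)=-\beta KL(\pi,\piout^{\mathrm{token}})+\beta\,\mathbb{E}_{\pi}\Bigl[\sum_{h=1}^{H}\log Z_h(x_h)\Bigr].
\]
So your middle term equals $-\beta KL(\pi^\star,\piout^{\mathrm{token}})+\beta\,\mathbb{E}_{\pi^\star}[\sum_h\log Z_h(x_h)]+\beta\,\mathbb{E}_{\piout^{\mathrm{token}}}[\sum_h\log(1/Z_h(x_h))]$, and the last piece is nonnegative.

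The normalizers depend on $x_h=[x,a_1,\dots,a_{h-1}]$, i.e. on the response prefix, not on the prompt alone. Unlike $\zeta^\ell(x)$, they are not constant across responses. They therefore do not cancel in $\Delta(x,a,b)$, nor between $\pi^\star$ and $\piout^{\mathrm{token}}$, which visit different prefixes. What your argument actually gives is the claimed bound plus $\beta\,\mathbb{E}_{\piout^{\mathrm{token}}}[\sum_h\log(1/Z_h(x_h))]$. This is a token-level disagreement penalty on the output policy's own prefixes, and coverage of $\pidata$ does not control it.

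The paper's proof does not supply the missing step either. It asserts, ``with the same steps as'' Lemma~\ref{lemma:closed_form}, that $\piout^{\mathrm{token}}$ is the exact maximizer over $\Pi_{\mathrm{AR}}$ of the KL-regularized problem with the token reward. It then applies Lemma~\ref{lemma:conversion} with $\widehat{\Delta}^{\mathrm{token}}$. Your remark on continuation values is exactly why that assertion fails. For reward $r^-_{\mathrm{token}}$ the maximizer is the globally normalized policy $\propto\prod_h\min_\ell\tilde\pi^\ell(a_h|x_h)$. This equals the locally normalized $\piout^{\mathrm{token}}$ only if $\prod_h Z_h(x_h)$ is the same for all responses.

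The implication genuinely fails, as a small example shows:
\begin{itemize}
\item Take one prompt, $H=2$, binary tokens, and $\piref$ uniform.
\item Take $L=2$ members that are both uniform at the root and after $a_1=1$, but equal $(1-\epsilon,\epsilon)$ and $(\epsilon,1-\epsilon)$ after $a_1=0$.
\item Let $\pidata=\pi^\star$ be uniform on $\{(1,0),(1,1)\}$, so $C^\star=1$.
\item Let $r^\star=r^-_{\mathrm{token}}+\beta\log\frac{1}{2\epsilon}$, which is admissible if $\RMAX\ge\beta\log\frac{1}{2\epsilon}$.
\end{itemize}
Then $\mathrm{Err}_{\mathrm{token}}\equiv 0$, so pessimism holds with equality and the claimed bound is $0$. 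Yet $\piout^{\mathrm{token}}=\piref$, and $J_\beta(\pi^\star)-J_\beta(\piout^{\mathrm{token}})=\frac{\beta}{2}\log\frac{1}{8\epsilon}>0$ for $\epsilon<1/8$.

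Hence the inequality cannot be obtained from the pessimism event alone. You must either keep the extra term above, or output the globally normalized policy instead. The latter can be sampled by proposing from $\piout^{\mathrm{token}}$ and accepting with probability $\prod_h Z_h(x_h)$, in the spirit of Algorithm~\ref{alg:rejsampl}.

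A smaller point: the $\zeta^\ell$ shifts do not cancel inside a min/max over $\ell$. Step 2 is cleaner if you go directly from the pointwise bound $\beta\min_\ell\Delta_{\log\tilde\pi^\ell/\piref}(x,a,b)\le\Delta_{r^\star}(x,a,b)$ in the proof of Lemma~\ref{lemma:pessmain}. You then only need $\widehat{\Delta}^{\mathrm{token}}(x,a,b)\le\beta\Delta_{\log\tilde\pi^{\ell}/\piref}(x,a,b)$ for every $\ell$.
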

\begin{proof}
The proof strategy is to prove that the token-level approach introduces even more pessimism. That is, 
\[
\widehat{\Delta}^{\mathrm{token}}(x,a,b) \leq \widehat{\Delta}(x,a,b) \quad \forall x,a,b \in \X\times\A\times\A.
\]
Towards this end, let us recall that 
\[
\widehat{\Delta}(x,a,b) = \beta \min_{\ell \in [L]} \log \frac{\tilde{\pi}^\ell(a|x)}{\piref(a|x)} - \beta \max_{\ell \in [L]} \log \frac{\tilde{\pi}^\ell(b|x)}{\piref(b|x)},
\]
and we start lower bounding the first term as follows.
\begin{align*}
\beta \min_{\ell \in [L]} \log \frac{\tilde{\pi}^\ell(a|x)}{\piref(a|x)}
&= \beta \min_{\ell \in [L]} \log \prod^{H-1}_{h=1} \frac{\tilde{\pi}^\ell(a_{h+1}|x, a_1, \dots, a_h)}{\piref(a_{h+1}|x, a_1, \dots, a_h)} \\
&= \beta \min_{\ell \in [L]} \sum^{H-1}_{h=1} \log  \frac{\tilde{\pi}^\ell(a_{h+1}|x, a_1, \dots, a_h)}{\piref(a_{h+1}|x, a_1, \dots, a_h)} \\
&\geq  \sum^{H-1}_{h=1} \beta \min_{\ell \in [L]} \log  \frac{\tilde{\pi}^\ell(a_{h+1}|x, a_1, \dots, a_h)}{\piref(a_{h+1}|x, a_1, \dots, a_h)}  \\
\end{align*}
Then, we can lower bound the second term as follows.
\begin{align*}
- \beta \max_{\ell \in [L]} \log \frac{\tilde{\pi}^\ell(b|x)}{\piref(b|x)}
&= -\beta \max_{\ell \in [L]} \log \prod^{H-1}_{h=1} \frac{\tilde{\pi}^\ell(b_{h+1}|x, b_1, \dots, b_h)}{\piref(b_{h+1}|x, b_1, \dots, b_h)} \\
&= -\beta \max_{\ell \in [L]} \sum^{H-1}_{h=1} \log  \frac{\tilde{\pi}^\ell(b_{h+1}|x, b_1, \dots, b_h)}{\piref(b_{h+1}|x, b_1, \dots, b_h)} \\
&\geq  -\sum^{H-1}_{h=1} \beta \max_{\ell \in [L]} \log  \frac{\tilde{\pi}^\ell(b_{h+1}|x, b_1, \dots, b_h)}{\piref(b_{h+1}|x, b_1, \dots, b_h)}  \\
\end{align*}
We can conclude that 
\[
\widehat{\Delta}^{\mathrm{token}}(x,a,b) \leq \widehat{\Delta}(x,a,b) \quad \forall x,a,b \in \X\times\A\times\A.
\]
At this point, let $r^{\mathrm{token}}(x, a) = \abs{\mathcal{D}^\ell}^{-1}\sum_{A^- \in \cup_{\ell\in[L]}\mathcal{D}^\ell} \hat{\Delta}^{\mathrm{token}}(x,a,A^{-})$, with the same steps as in \Cref{lemma:closed_form}, we can prove that the closed solution of $$\max_{\pi \in \Pi_{\mathrm{AR}} } \innerprod{\pi}{r^{\mathrm{token}}} - \beta KL(\pi,\piref)$$ over a the class of autoregressive models  $\Pi_{\mathrm{AR}}$ is given by $$\piout^{\mathrm{token}}(a_{h+1}|x_{h+1}) = \frac{\min_{\ell\in[L]}\tilde{\pi}^{\ell}(a_{h+1}|x_{h+1})}{\sum_{a\in\A}\min_{\ell\in[L]}\tilde{\pi}^{\ell}(a_{h+1}|x_{h+1})}.$$ The policy $\piout^{\mathrm{token}}$ can now be computed in closed form as the denominator only involves a sum over tokens and not responses. 

Finally, since $$
\widehat{\Delta}^{\mathrm{token}}(x,a,b) \leq \widehat{\Delta}(x,a,b) \quad \forall x,a,b \in \X\times\A\times\A,$$ and that $ \widehat{\Delta}(x,a,b) \leq \Delta_{r^\star}(x,a,b) \quad \forall x,a,b \in \X\times\A\times\A$ with probability at least $1-\delta$ according to Lemma~\ref{lemma:pessmain}. We have that $ \widehat{\Delta}(x,a,b) \leq \Delta_{r^\star}(x,a,b)$ with probability $1-\delta$. Then, invoking Lemma~\ref{lemma:conversion} with $\widehat{\Delta}^{\mathrm{token}}$ instead of $\widehat{\Delta}$, we have that with probability $1-\delta$,
\[
J_\beta(\pi^\star) - J_\beta(\piout^{\mathrm{token}}) \leq \sqrt{C^\star \innerprod{\pidata}{\mathrm{Err}^2_{\mathrm{token}}}}.
\]
\end{proof}
\newpage
\section{Analysis of \texttt{PEPO}}
\label{app:analysis}
This section presents the analysis of \texttt{PEPO}.
\subsection{Proof of \Cref{lemma:closed_form}: closed form solution of \Cref{eq:outputDPO} }
\begin{proof}
    The first order optimality conditions for \Cref{eq:outputDPO} give
$$
r^-(x,a) = \beta \log \br{\frac{\piout(a|x)}{\piref(a|x)}} + \beta \log Z(x)
$$
where $Z(x)$ needs to be chosen to guarantee normalization of $\piout$.
Plugging in, the definition of $r^-(x,a)$, we obtain that 
$$
\beta \min_{\ell \in [L]} \br{ \log \frac{\tilde{\pi}^{\ell}(a|x)}{\piref(a|x)} - \zeta^\ell(x)} = \beta \log \br{\frac{\piout(a|x)}{\piref(a|x)}} + \beta \log Z(x)  
$$
$$
\implies \min_{\ell \in [L]} \frac{\tilde{\pi}^{\ell}(a|x) \exp\br{-\zeta^\ell(x)}}{\piref(a|x)} = \frac{\piout(a|x) Z(x)}{\piref(a|x) }.
$$
Then, choosing $Z(x)$ to ensure that $\sum_{a\in \A}\piout(a|x)=1$ for all $x\in \X$, we obtain that 
$$
\piout(a|x) = \frac{\min_{\ell\in[L]}\tilde{\pi}^{\ell}(a|x) \exp\br{-\zeta^\ell(x)}}{\sum_{a\in\A}\min_{\ell\in[L]}\tilde{\pi}^{\ell}(a|x) \exp\br{-\zeta^\ell(x)}}.
$$
\end{proof}

\subsection{Proof of Lemma~\ref{lemma:pessmain}: Establishing pessimism}
\pessmain*
\begin{proof}
    Let us denote $\widehat{\mathbb{P}}^\ell(x, a \succ b; \pi) = \sigma\br{\beta \Delta_{\log \pi /\piref}(x,a,b) + \lambda(x,a,b)}$ and let us notice that we can rewrite the DPO objective (see \Cref{eq:RegDPO}) for each $\ell \in [L]$ as
    \begin{align*}
        &J_{\mathrm{pessDPO}}(\pi;\cD^\ell) = \sum_{X,A^+,A^-\in \cD^\ell} \log \sigma \br{\widehat{\mathbb{P}}^\ell(X, A^+ \succ A^-; \pi)} \\
        &= \sum_{x,a\in \X\times\A} \sum_{b > a} \log \sigma \br{\widehat{\mathbb{P}}^\ell(x,a \succ b; \pi)} \mathds{1}_{\bcc{X,A^+,A^-=x,a,b}} + \log \sigma \br{\widehat{\mathbb{P}}^\ell(x,b \succ a; \pi)} \mathds{1}_{\bcc{X,A^+,A^-=x,b,a}} \\
        &= \sum_{x,a\in \X\times\A} \sum_{b > a} \log \sigma \br{\widehat{\mathbb{P}}^\ell(x,a \succ b; \pi)} \mathds{1}_{\bcc{X,A^+,A^-=x,a,b}} \\&\phantom{=}+ \log \sigma \br{1 -\widehat{\mathbb{P}}^\ell(x,a \succ b;  \pi)} \mathds{1}_{\bcc{X,A^+,A^-=x,b,a}},
    \end{align*}
    where we use the notation $b > a$ to indicate all actions $b$ that follow action $a$ for some canonical ordering of the action space.
    Then, taking the first derivative and equating it to zero, we obtain
    \begin{equation*}
\frac{N^\ell(x, a\succ b)}{\widehat{\mathbb{P}}^\ell(x,a \succ b;  \tilde{\pi}^\ell)} = \frac{N^\ell(x, b\succ a)}{1- \widehat{\mathbb{P}}^\ell(x,a \succ b;  \tilde{\pi}^\ell)} \quad \forall ~~x,a,b \in \X\times \A \times \A,
    \end{equation*}
where we defined the counts $N^\ell(x,a,b) = \sum_{X,A^+,A^- \in \mathcal{D}^\ell} \mathds{1}_{\bcc{X,A^+,A^-=x,a,b}} + \mathds{1}_{\bcc{X,A^+,A^-=x,b,a}} $ and $N^\ell(x,a \succ b) = \sum_{X,A^+,A^- \in \mathcal{D}^\ell} \mathds{1}_{\bcc{X,A^+,A^-=x,a,b}}$. Then, rearranging we obtain,
\[
\widehat{\mathbb{P}}^\ell(x,a \succ b;  \tilde{\pi}^\ell) = \frac{N^\ell(x,a\succ b)}{N^\ell(x,a,b)} \quad \text{if} ~~N^\ell(x,a,b) > 0,
\]
and arbitrary otherwise. Then, using that $\widehat{\mathbb{P}}^\ell(x,a \succ b;  \pi) : = \sigma\br{\beta \Delta_{\log \pi /\piref}(x,a,b) + \lambda(x,a,b)} = $ and that $\lambda(x,a,b) = \sigma^{-1}(\nicefrac{N^\ell(x,a\succ b)}{N^\ell(x,a,b)}) - \sigma^{-1}(\nicefrac{N^\ell(x,a\succ b)}{(N^\ell(x,a,b)+2)} ),$ we obtain that
\begin{equation*}
    \beta \Delta_{\log \tilde{\pi}^\ell /\piref}(x,a,b) = \sigma^{-1}\br{\frac{N^\ell(x,a\succ b)}{N^\ell(x,a,b)}} - \lambda(x,a,b) = \sigma^{-1}\br{\frac{N^\ell(x,a\succ b)}{N^\ell(x,a,b)+2}}.
\end{equation*}
At this point, invoking \citep[Lemma 3]{cassel2025batch} we have that for any fixed state $x \in \X$, actions $a,b \in \A\times\A$ and any possible value of $N^\ell(x,a,b)$, it  holds that
    $$
    \mathbb{P}\bs{\min_{\ell \in [L]} \frac{N^\ell(x,a\succ b)}{N^\ell(x,a,b)+2} \geq \mathbb{P}^{\mathrm{true}}_{r^\star}(a\succ b| x) } \leq  e^{-2L/7}.
    $$
Therefore, choosing $L \geq \frac{7 \log(\abs{\X}\abs{\A}^2/\delta)}{2}$, we obtain that with probability at least $1-\delta$ it holds that
$$
    \mathbb{P}\bs{\exists x,a,b \in \X\times\A\times\A ~~\min_{\ell \in [L]} \frac{N^\ell(x,a\succ b)}{N^\ell(x,a,b)+2} \geq \mathbb{P}^{\mathrm{true}}_{r^\star}(a\succ b| x) } \leq \delta.
    $$
Then, passing to the complementary event and using that $\sigma^{-1}(\cdot)$ is an increasing function, we obtain that
$$
    \mathbb{P}\bs{\beta \min_{\ell \in [L]} \Delta_{\log \tilde{\pi}^\ell/\piref}(x,a,b)  \leq \Delta_{r^\star}(x,a,b) ~~~ \forall x,a,b \in \X\times\A\times\A ~~ } \leq \delta.
    $$
    Finally, notice that for each $\ell \in [L]$
    \begin{align*}
        \Delta_{\log \tilde{\pi}^\ell/\piref}(x,a,b) &= \beta \log \frac{\tilde{\pi}^\ell(a|x)}{\piref(a|x)} - \beta \log \frac{\tilde{\pi}^\ell(b|x)}{\piref(b|x)} \\
        &= \beta \log \frac{\tilde{\pi}^\ell(a|x)}{\piref(a|x)} - \zeta^\ell(x) - \br{\beta \log \frac{\tilde{\pi}^\ell(b|x)}{\piref(b|x)} - \zeta^\ell(x)}  \\
         &\geq \min_{\ell\in [L]}\br{\beta \log \frac{\tilde{\pi}^\ell(a|x)}{\piref(a|x)} - \zeta^\ell(x)} - \max_{\ell'\in[L]}\br{\beta \log \frac{\tilde{\pi}^{\ell'}(b|x)}{\piref(b|x)} - \zeta^{\ell'}(x)}  \\
         &= \widehat{\Delta}(x,a,b).
    \end{align*}
    Then, by monotonicity of expectation we have that for any $x,a \in \X\times\A$,
    \begin{align*}\Delta_{\log \tilde{\pi}^\ell/\piref}(x,a,\pidata) &= \sum_{b \in \A}\pidata(b|x) \Delta_{\log \tilde{\pi}^\ell/\piref}(x,a,b) \\&\geq \sum_{b \in \A}\pidata(b|x) \widehat{\Delta}(x,a,b) = \widehat{\Delta}(x,a,\pidata).
    \end{align*}
    This concludes the proof.\end{proof}
\subsection{Proof of Lemma~\ref{lemma:conversion}: from suboptimality to estimation error.}
At this point, we can present the upper bound on the suboptimality, i.e. $J_\beta(\pi^\star) - J_\beta(\piout)$, which depends only on the single policy concentrability and the generalization error of the estimated gap $\widehat{\Delta}$. Such upper bound holds under the pessimistic event defined as follows:
$$
\mathcal{E}_{\mathrm{pessimism}} = \bcc{  \sum_{b}\pidata(b|x)  \widehat{\Delta}(x,a,b) \leq \sum_b \pidata(b|x) \Delta_{r^\star}(x,a,b)  ~~~~\forall x,a \in \X\times\A}.
$$
The result is proven formally in the following after restating the Lemma for convenience.
\conversion*
\begin{proof}
The proof follows from these calculations.
\begin{align}
\innerprod{\pi^\star - \piout}{r^\star} &= \sum_{x}\initial(x) \sum_a (\pi^\star(a|x) - \piout(a|x)) r^\star(x,a) \nonumber \\
&= \sum_{x}\initial(x) \sum_{a,b} (\pi^\star(a|x) - \piout(a|x)) \pidata(b|x) (r^\star(x,a) - r^\star(x,b))\nonumber\\
&= \sum_{x}\initial(x) \sum_{a,b} (\pi^\star(a|x) - \piout(a|x)) \pidata(b|x) (\Delta_{r^\star}(x,a,b))\nonumber\\
&= \sum_{x}\initial(x) \sum_{a,b} (\pi^\star(a|x) - \piout(a|x)) \pidata(b|x) (\Delta_{r^\star}(x,a,b) - \widehat{\Delta}(x,a,b))\nonumber\\
&\phantom{=}+ \sum_{x}\initial(x) \sum_{a,b} (\pi^\star(a|x) - \piout(a|x)) \pidata(b|x) \widehat{\Delta}(x,a,b) \label{eq:second_term}
\end{align}
Looking at the first term, we obtain that
\begin{align*}
    \sum_{x}\initial(x) &\sum_{a,b} (\pi^\star(a|x) - \piout(a|x)) \pidata(b|x) (\Delta_{r^\star}(x,a,b) - \widehat{\Delta}(x,a,b)) \\&= \sum_{x}\initial(x) \sum_{a,b}\pi^\star(a|x) \pidata(b|x) (\Delta_{r^\star}(x,a,b) - \widehat{\Delta}(x,a,b)) \\
    &\phantom{=} + \sum_{x}\initial(x) \sum_{a,b}\piout(a|x) \pidata(b|x) (\widehat{\Delta}(x,a,b) -\Delta_{r^\star}(x,a,b) ) \\
    &\leq \sum_{x}\initial(x) \sum_{a,b}\pi^\star(a|x) \pidata(b|x) (\Delta_{r^\star}(x,a,b) - \widehat{\Delta}(x,a,b))  
    \quad \text{(Using $\mathcal{E}_{\mathrm{pessimism}}$)}\\
    &=\sum_{x}\initial(x) \sum_{a,b}\frac{\pi^\star(a|x)}{\sqrt{\pidata(a|x)}} \sqrt{\pidata(a|x)}\pidata(b|x) (\Delta_{r^\star}(x,a,b) - \widehat{\Delta}(x,a,b)) \\
    &\leq \sqrt{\br{\sum_{x}\initial(x) \sum_{a}\frac{(\pi^\star(a|x))^2}{\pidata(a|x)}} \cdot \sum_{a}\pidata(a|x) (\sum_b \pidata(b|x) (\Delta_{r^\star}(x,a,b) - \widehat{\Delta}(x,a,b)))^2} 
\end{align*}
Finally, we upper bound \Cref{eq:second_term} by the fact that
\begin{align*}
\piout = \argmax_{\pi\in\Pi} \sum_x \initial(x)\sum_a \pi(a|x) r^-(x,a) - \beta D_{\mathrm{KL}}(\pi, \piref)
\end{align*}
where $D_{\mathrm{KL}}(\pi,\pi') = \sum_x \initial(x) \sum_{a} \pi(a|x) \log \frac{\pi(a|x)}{\pi'(a|x)}$ and
\begin{align*}r^-(x,a) &:= \widehat{\Delta}(x,a,b)  + \beta \max_{\ell \in [L]}\br{ \log \frac{\tilde{\pi}^\ell(b|x)}{\piref(b|x)} - \zeta^\ell(x)} \\
& = \beta \min_{\ell \in [L]}\br{ \log \frac{\tilde{\pi}^\ell(a|x)}{\piref(a|x)} - \zeta^\ell(x)}.
\end{align*}

Indeed, notice that
\begin{align*}
&\sum_{x}\initial(x) \sum_{a,b} (\pi^\star(a|x) - \piout(a|x)) \pidata(b|x)\widehat{\Delta}(x,a,b) =\\
    &\sum_{x}\initial(x) \sum_{a,b} (\pi^\star(a|x) - \piout(a|x)) \pidata(b|x) r^-(x,a) \\
    &\phantom{=}-\underbrace{\sum_{x}\initial(x) \sum_{a,b} (\pi^\star(a|x) - \piout(a|x)) \pidata(b|x) \beta \max_{\ell\in[L]} \br{ \log \frac{\tilde{\pi}^\ell(b|x)}{\piref(b|x)} - \zeta^\ell(x)} }_{=0}\\
    & = \sum_{x}\initial(x) \sum_{a} (\pi^\star(a|x) - \piout(a|x)) r^-(x,a) \\
    &\leq \beta D_{\mathrm{KL}}(\pi^\star,\piref) - \beta D_{\mathrm{KL}}(\piout,\piref)
\end{align*}
Where the last inequality follows from the optimality of $\piout$.
Then, putting all together 
we get
\begin{align*}
\innerprod{\pi^\star - \piout}{r^\star} &+ \beta D_{\mathrm{KL}}(\piout,\piref) - \beta D_{\mathrm{KL}}(\pi^\star,\piref) \leq \sqrt{\sum_{x}\initial(x) \sum_{a}\frac{(\pi^\star(a|x))^2}{\pidata(a|x)}} \\&\cdot \sqrt{\sum_{a}\pidata(a|x)(\sum_b \pidata(b|x)  (\Delta_{r^\star}(x,a,b) - \widehat{\Delta}(x,a,b)))^2},
\end{align*}
which implies the statement.

\textbf{Conversion with fast rates}
Now, we aim to prove the fast rate , i.e. $1/N$ type, result. This is enable by a different suboptimality to estimation error conversion. 
To this end, let us notice that 
\begin{itemize}
    \item $\piout$ is optimal for the RLHF problem (see \eqref{eq:goal})  with reward function $\widehat{\Delta}(x,a,\pidata)$.
    \item $\pi^\star$ is optimal for the RLHF problem  with reward function $\Delta_{r^\star}(x,a,\pidata)$.
    \item Thanks, to Lemma~\ref{lemma:pessmain}, we have that with high probability $\Delta_{r^\star}(x,a,\pidata) \geq \widehat{\Delta}(x,a,\pidata)$ for any $x,a \in \X\times\A$.
\end{itemize}
Under the conditions above, we can invoke \citep[Lemma C.2]{ji2026optimal} to guarantee that
\begin{align*}
    &J_\beta(\pi^\star) - J_\beta(\piout) \leq
    \beta^{-1}\sum_{x\in\X} \sum_{a\in \A} \initial(x) \pi^\star(a|x) \br{\sum_{b \in \A} \pidata(b|x)(\Delta_{r^\star}(x,a, b) - \widehat{\Delta}(x,a,b))^2} \\
    &=
    \beta^{-1} \sum_{x\in\X} \sum_{a\in \A} \initial(x) \frac{\pi^\star(a|x)}{\pidata(a|x)} \pidata(a|x) \br{\sum_{b \in \A} \pidata(b|x) (\Delta_{r^\star}(x,a, b) - \widehat{\Delta}(x,a,b))^2} \\
    &\leq \beta^{-1} C^{\star}_{\infty}  \sum_{x\in\X} \sum_{a\in \A} \initial(x) \pidata(a|x) \br{\sum_{b \in \A} \pidata(b|x) (\Delta_{r^\star}(x,a, b) - \widehat{\Delta}(x,a,b))}^2  \\
    &= \beta^{-1} C_{\infty}^\star \innerprod{\pidata}{\mathrm{Err}^2}.
\end{align*}
\end{proof}

\subsection{Proof of Lemma~\ref{lemma:concentration_main}: bounding the estimation error}
The goal of this subsection is to provide a high probability bound on the following generalization error 
$$
\innerprod{\pidata}{\mathrm{Err}^2} := \sum_{x\in \X} \initial(x) \sum_{a}\pidata(a|x)(\sum_b \pidata(b|x)  (\Delta_{r^\star}(x,a,b) - \widehat{\Delta}(x,a,b)))^2.
$$
The formal results are presented in Lemma~\ref{lemma:concentration_main}.
\concentrationlem*
\begin{proof}
    Let us notice that denoting for a fixed $x,a, b\in \X\times\A^2$, $$\bar{\ell} = \argmax_{\ell \in [L]} \beta \log \frac{\tilde{\pi}^\ell(b|x)}{\piref(b|x)} - \beta \zeta^\ell(x)$$ and $$\underline{\ell} = \argmin_{\ell \in [L]} \beta \log \frac{\tilde{\pi}^\ell(a|x)}{\piref(a|x)} - \beta\zeta^\ell(x).$$ Then, we have that 
    \begin{align*}
    \abs{\widehat{\Delta}(x,a,b) - \Delta_{r^\star}(x,a,b)} &\leq \abs{\log \frac{\tilde{\pi}^{\underline{\ell}}(a|x)}{\piref(a|x)} - \beta\zeta^{\underline{\ell}}(x) - r^\star(x,a)}
    \\&\phantom{=}+ \abs{\beta \log \frac{\tilde{\pi}^{\bar{\ell}}(b|x)}{\piref(b|x)} - \beta\zeta^{\bar{\ell}}(x) - r^\star(x,b)} \\
    & \leq \sum_{\mathfrak{a} \in \bcc{a,b}}\max_{\ell \in [L]}\abs{\beta \log \frac{\tilde{\pi}^{\ell}(\mathfrak{a}|x)}{\piref(\mathfrak{a}|x)} - \beta\zeta^{\ell}(x) - r^\star(x,\mathfrak{a})}.
    \end{align*}

Then, using the fact that $r^\star(x,\pidata) = 0$ without loss of generality and that by Azuma-Hoeffding inequality, we have that for any action $a \in \A$ and state $x \in \X$
\begin{align*}
&\abs{ \log(\tilde{\pi}^\ell(a|x)/\piref(a|x)) - \zeta^\ell(x) - \underbrace{\br{ \log(\tilde{\pi}^\ell(a|x)/\piref(a|x)) - \sum_{\mathfrak{a} \in \A} \pidata(\mathfrak{a}|x) \log \frac{\tilde{\pi}^\ell(\mathfrak{a}|x)}{\piref(\mathfrak{a}|x)} }}_{:= g(x,a)}} \\&= \abs{\frac{N^{-\ell}(x) (\log(\tilde{\pi}^\ell(a|x)/\piref(a|x)) - \zeta^\ell(x) -  g(x,a))}{N^{-\ell}(x) + 2} - \frac{2(g(x,a)-\log(\tilde{\pi}^\ell(a|x)/\piref(a|x)) + \zeta^\ell(x))}{N^{-\ell}(x) + 2} } \\
&\leq \mathcal{O}\br{\sqrt{\frac{\RMAX^2/\beta^2\log(N \abs{\X}\abs{\A}/\delta)}{N^{-\ell}(x)+2}} + \frac{\RMAX/\beta}{N^{-\ell}(x)+2}},
\end{align*}
where the last inequality follows with probability $1-\delta$ by the Azuma-Hoeffding inequality, 
which can be applied because 
\begin{align*}
   N^{-\ell}(x) &(\log(\tilde{\pi}^\ell(a|x)/\piref(a|x)) - \zeta^\ell(x) -  g(x,a)) \\&= \sum_{X,A \in \cD\setminus\cD^\ell} (\log(\tilde{\pi}^\ell(a|x)/\piref(a|x)) -  g(x,a)) - \log \frac{\tilde{\pi}^\ell(A|X)}{\piref(A|X)} \mathds{1}_{\bcc{X=x}} \\
   &= \sum_{X,A \in \cD\setminus\cD^\ell} \br{\sum_{a\in \A} \pidata(a|x) \log \frac{\tilde{\pi}^\ell(a|x)}{\piref(a|x)}  - \log \frac{\tilde{\pi}^\ell(A|X)}{\piref(A|X)}} \mathds{1}_{\bcc{X=x}},
\end{align*}
is the sum of a martingale difference sequence which is almost surely bounded by $\RMAX/\beta$ because of the constraints imposed on $\tilde{\pi}^\ell$ in \Cref{eq:RegDPO}. For the same reason, we also have that
\[
\abs{g(x,a)-\log(\tilde{\pi}^\ell(a|x)/\piref(a|x)) + \zeta^\ell(x)} \leq \frac{\RMAX}{\beta},
\]
which allows to bound the second term as $\frac{\RMAX/\beta}{N^{-\ell}(x) + 2}$.
Then, noticing that $g(x,a) = \Delta_{\log \tilde{\pi}^\ell/\piref} (x,\mathfrak{a},\pidata)$, we obtain that
\[
\abs{\log(\tilde{\pi}^\ell(a|x)/\piref(a|x)) - \zeta^\ell(x) - \Delta_{\log \tilde{\pi}^\ell/\piref} (x,\mathfrak{a},\pidata)} \leq \mathcal{O}\br{\sqrt{\frac{\RMAX^2/\beta^2\log(N \abs{\X}\abs{\A}/\delta)}{N^{-\ell}(x)+2}} + \frac{\RMAX/\beta}{N^{-\ell}(x)+2}}.
\]
Therefore, using this fact,
\begin{align*}
    &\abs{\widehat{\Delta}(x,a,b) - \Delta_{r^\star}(x,a,b)} \\&\leq  \sum_{\mathfrak{a}\in \bcc{a,b}} \max_{\ell \in [L]} \bigg(\abs{\beta \Delta_{\log \tilde{\pi}^\ell/\piref} (x,\mathfrak{a},\pidata) -  \Delta_{r^\star}(x,\mathfrak{a},\pidata)} \\&\phantom{=} + \abs{\log(\tilde{\pi}^\ell(a|x)/\piref(a|x)) - \zeta^\ell(x) - \Delta_{\log \tilde{\pi}^\ell/\piref} (x,\mathfrak{a},\pidata)}\bigg)\\&\leq 
    \sum_{\mathfrak{a}\in \bcc{a,b}} \max_{\ell \in [L]} \abs{\beta \Delta_{\log \tilde{\pi}^\ell/\piref} (x,\mathfrak{a},\pidata) -  \Delta_{r^\star}(x,\mathfrak{a},\pidata)} \\&+ \mathcal{O}\br{\sqrt{\frac{\RMAX^2\log(N \abs{\X}\abs{\A}/\delta)}{N^{-\ell}(x)+2}} + \frac{\RMAX}{N^{-\ell}(x)+2}}.
\end{align*}
Rearranging the first term, we obtain,
\begin{align*}
    &\sum_{\mathfrak{a}\in \bcc{a,b}} \max_{\ell \in [L]} \abs{\beta \Delta_{\log \tilde{\pi}^\ell/\piref} (x,\mathfrak{a},\pidata) - \Delta_{r^\star}(x,\mathfrak{a},\pidata)} \\&= \sum_{\mathfrak{a}\in \bcc{a,b}} \max_{\ell \in [L]} \abs{ \sum_{a'\in \A} \pidata(a'|x)\bs{\sigma^{-1}\br{\frac{N^\ell(x,\mathfrak{a}\succ a')}{N^\ell(x,\mathfrak{a},a')+2}} - \sigma^{-1}(\mathbb{P}^{\mathrm{true}}_{r^\star}(x,\mathfrak{a}\succ a'))}} \\
    &\leq \sum_{\mathfrak{a}\in \bcc{a,b}} e^{\RMAX} \max_{\ell \in [L]} \sum_{a'\in \A} \pidata(a'|x) \abs{\frac{N^\ell(x,\mathfrak{a}\succ a')}{N^\ell(x,\mathfrak{a},a')+2} - \mathbb{P}^{\mathrm{true}}_{r^\star}(x,\mathfrak{a}\succ a') }
    \\
    &\leq \widetilde{\mathcal{O}}\br{\sum_{\mathfrak{a}\in \bcc{a,b}} e^{\RMAX} \max_{\ell \in [L]} \sum_{a'\in \A} \pidata(a'|x) \sqrt{\frac{\log(\abs{\A}\abs{\X} N /\delta)}{N^\ell(x,\mathfrak{a},a')+2}} }
    \\
    &\leq \widetilde{\mathcal{O}}\br{\sum_{\mathfrak{a}\in \bcc{a,b}} e^{\RMAX} \sum_{a'\in \A} \pidata(a'|x) \sqrt{\frac{L \log(\abs{\A}\abs{\X} N /\delta)}{N(x,\mathfrak{a},a')+L}} },
\end{align*}
where the second last inequality holds with probability $1-\delta$ thanks to the Azuma-Hoeffding inequality and a union bound over $[N]\times\A\times\X$, whereas the last one follows from $N^\ell(x,\mathfrak{a},a') \geq N(x,\mathfrak{a},a')/L -1$. Moreover, we also used that the Lipschitz constant of $\sigma(\cdot)$ in $[-\RMAX, \RMAX]$ is $\mathcal{O}\br{e^{\RMAX}}$, see e.g. \citep[Lemma F.5]{huang2024correcting}.
Hence, putting all together, we obtain that with probability $1-2\delta$,
\begin{align*}
    \abs{\widehat{\Delta}(x,a,b) - \Delta_{r^\star}(x,a,b)} &\leq 
\widetilde{\mathcal{O}}\br{\sum_{\mathfrak{a}\in \bcc{a,b}} e^{\RMAX} \sum_{a'\in \A} \pidata(a'|x) \sqrt{\frac{L \log(\abs{\A}\abs{\X} N /\delta)}{N(x,\mathfrak{a},a')+L}} }
     \\&+ \widetilde{\mathcal{O}}\br{\sqrt{\frac{\RMAX^2\log(N \abs{\X}\abs{\A}/\delta)}{N(x)+1}}},
\end{align*}
where we have also used that $N^{-\ell}(x) \geq N(x) - 1$.
Then, towards controlling 
\begin{align*} 
\innerprod{\pidata}{\mathrm{Err}^2} &\leq \sum_{x\in\X}\sum_{a,b \in \A\times\A} \initial(x) \pidata(a|x) \pidata(b|x)\abs{\widehat{\Delta}(x,a,b) - \Delta_{r^\star}(x,a,b)}^2 \\
&\leq \widetilde{\mathcal{O}}\br{e^{\RMAX} \sum_{x\in\X}\sum_{a,b \in \A\times\A}\initial(x) \pidata(a|x)  \pidata(b|x)
\frac{L \log(\abs{\A}\abs{\X} N /\delta)}{N(x,a,b)+L} 
},
\end{align*}
where we also used that $N(x) \geq N(x,a,b)$. Therefore, the terms which decays as $1/N(x) $ are lower order terms absorbed into the $\widetilde{\mathcal{O}}$ notation.
Now, using \citep[Lemma D.4]{rosenberg2020near} we have that for any sequence of positive i.i.d.  random variables $Z_1, \dots, Z_n \in [0, B]$ with mean $\mu$ it holds that
$$
N \mu \leq 2 \sum^N_{i=1} Z_i + 4 B \log ( N \delta^{-1}).
$$
In our case, the bounds does not apply directly because $N(\cdot,\cdot,\cdot) : \X\times\A\times\A \rightarrow [0, N]$ is a random function. However, for any $\mathfrak{f} : \X\times\A\times\A \rightarrow [0, N]$, we can apply \citep[Lemma D.4]{rosenberg2020near} for $Z_i = (\mathfrak{f}(X_{i}, A_{i}, A'_{i} ))$, where we recall that $\cD = \bcc{X_{i}, A_{i}, A'_{i}}^N_{i=1}$, and with failure probability $\frac{\delta}{N^{\abs{\X}\abs{\A}^2}}$, we obtain that
\begin{align*}
\sum_{x\in\X}\sum_{a,b \in \A\times\A}
\frac{\initial(x) \pidata(a|x)  \pidata(b|x)}{\mathfrak{f}(x,a,b)+L}  \leq 2 \sum^N_{i=1} \frac{1}{\mathfrak{f}(X_{i}, A_{i}, A'_{i} ) + L} + 4 \abs{\X}\abs{\A}^2 L \log (N^2\delta^{-1}).
\end{align*}
Therefore, choosing specifically $\mathfrak{f}(\cdot,\cdot,\cdot) = N(\cdot,\cdot,\cdot)$ we obtain
\begin{align*}
N \sum_{x\in\X}&\sum_{a,b \in \A\times\A}
\frac{\initial(x) \pidata(a|x)  \pidata(b|x)}{N(x,a,b)+L}  \leq 2 \sum^N_{i=1} \frac{1}{N(X_{i}, A_{i}, A'_{i} ) + L} + 4 \abs{\X}\abs{\A}^2 L \log (N^2\delta^{-1}) \\
&= 2 \sum_{x,a,b \in \X\times\A\times\A} \frac{\sum^N_{i=1}\mathds{1}_{\bcc{X_{i}, A_{i}, A'_{i} = x,a,b }}}{N(x,a,b) + L} + 4 \abs{\X}\abs{\A}^2 L \log (N^2\delta^{-1}) \\
&\leq 2 \sum_{x,a,b \in \X\times\A\times\A} \frac{N(x,a,b)}{N(x,a,b) + L} + 4 \abs{\X}\abs{\A}^2 L \log (N^2\delta^{-1}) \\
&= 2 \abs{\X}\abs{\A}^2 + 4 \abs{\X}\abs{\A}^2 L \log (N^2\delta^{-1})
\\
&= \mathcal{O}\br{ \abs{\X}\abs{\A}^2 L \log (N^2\delta^{-1})}.
\end{align*}
Therefore, we can conclude that
with probability $1-\delta$, it holds that
\begin{align*}
    \innerprod{\pidata}{\mathrm{Err}^2} \leq \widetilde{\mathcal{O}}\br{ \frac{e^{\RMAX} \abs{\X}\abs{\A}^2 L^2 \log (N^2\delta^{-1})}{N}}.
\end{align*}
This concludes the proof.
\end{proof}

\subsection{Technical Results}
In this appendix, we provide the proofs of the auxiliary results we used throughout our analysis. We start by presenting some known results from the literature that we used in our analysis.
\begin{lemma}
    \textbf{Azuma-Hoeffding inequality} Let $\bcc{X_k}^K_{k=1}$ be a martingale difference sequence adapted to the filtration $\mathcal{F}_k$ which is almost surely bounded by $B$, i.e. $\abs{X_k} \leq B$ for all $k \in [K]$.
    Then, we have that
    $$
    \mathbb{P}\bs{\abs{\sum^K_{k=1} X_k} \geq \sqrt{2 K B^2 \log(2/\delta)}} \leq \delta.
    $$
\end{lemma}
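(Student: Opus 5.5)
The statement is the Azuma--Hoeffding inequality for a bounded martingale difference sequence. I would prove it by the Chernoff method, using Hoeffding's lemma conditionally at each step.

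\textbf{Step 1 (conditional Hoeffding lemma).} For each $k$, since $\mathbb{E}[X_k\mid\mathcal{F}_{k-1}]=0$ and $X_k\in[-B,B]$ almost surely, Hoeffding's lemma applied conditionally gives
\[
\mathbb{E}\big[e^{sX_k}\mid\mathcal{F}_{k-1}\big]\le e^{s^2(2B)^2/8}=e^{s^2B^2/2}
\quad\text{for all } s\in\mathbb{R}.
\]
The lemma itself follows from convexity: bound $e^{sx}$ on $[-B,B]$ by the chord through its endpoints, take the conditional expectation using mean zero, and show that the log of the result, viewed as a function of $s$, has second derivative at most $B^2/4\cdot 4=B^2$.

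\textbf{Step 2 (iterating the bound).} Let $S_K=\sum_{k=1}^K X_k$. By the tower property,
\[
\mathbb{E}\big[e^{sS_K}\big]=\mathbb{E}\Big[e^{sS_{K-1}}\,\mathbb{E}\big[e^{sX_K}\mid\mathcal{F}_{K-1}\big]\Big]\le e^{s^2B^2/2}\,\mathbb{E}\big[e^{sS_{K-1}}\big].
\]
Inducting down to $S_0=0$ gives $\mathbb{E}[e^{sS_K}]\le e^{Ks^2B^2/2}$.

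\textbf{Step 3 (Markov and optimization).} Markov's inequality gives $\mathbb{P}[S_K\ge t]\le e^{-st+Ks^2B^2/2}$. Choosing $s=t/(KB^2)$ yields
\[
\mathbb{P}[S_K\ge t]\le e^{-t^2/(2KB^2)}.
\]
The same argument applied to $-X_k$ bounds the lower tail identically. A union bound then gives $\mathbb{P}[|S_K|\ge t]\le 2e^{-t^2/(2KB^2)}$. Setting $t=\sqrt{2KB^2\log(2/\delta)}$ makes the right-hand side equal to $\delta$, which is the stated bound.

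The only step that needs real care is the conditional Hoeffding lemma in Step 1, specifically verifying the curvature bound on the log-moment generating function. Since this is a standard result, I would cite it rather than reprove it. Everything else is mechanical.
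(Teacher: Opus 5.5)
Your proof is correct. It is the standard Chernoff argument: the conditional Hoeffding lemma gives $\mathbb{E}[e^{sX_k}\mid\mathcal{F}_{k-1}]\le e^{s^2B^2/2}$, you iterate with the tower property, apply Markov with $s=t/(KB^2)$, and a two-sided union bound gives exactly $\delta$ at $t=\sqrt{2KB^2\log(2/\delta)}$. The paper gives no proof of its own and simply lists this as a known result from the literature, so there is no alternative route to compare. Your choice of $s$ implicitly needs $B>0$, but the statement also tacitly requires this: for $B=0$ the threshold is $0$ and the event $\{|\sum_k X_k|\ge 0\}$ has probability one.
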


\begin{lemma}
    \textbf{Variant of the Freedman's inequality \citep[Lemma D.4]{rosenberg2020near}} Let $\bcc{X_k}^K_{k=1}$ be a martingale adapted to the filtration $\mathcal{F}_k$ which is almost surely bounded by $B$, i.e. $\abs{X_k} \leq B$ for all $k \in [K]$.
    Then, we have that with probability at least $1-\delta$
    $$
    \sum^K_{k=1} \mathbb{E}\bs{X_k|\mathcal{F}_k} \leq 2\sum^K_{k=1} X_k + 4 B\log(2K/\delta).
    $$
\end{lemma}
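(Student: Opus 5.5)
The plan is the standard exponential-supermartingale (Chernoff/Freedman) argument for nonnegative increments, which gives a constant at least as good as the stated one. I will read the statement in the form in which it is used in the proof of Lemma~\ref{lemma:concentration_main}. There, each $X_k$ is a nonnegative random variable with $0\le X_k\le B$, and the conditional expectation $\mathbb{E}[X_k\mid\mathcal{F}_k]$ is taken with respect to the $\sigma$-field generated by the past, i.e.\ $X_k$ is $\mathcal{F}_{k+1}$-measurable. I write $\mu_k:=\mathbb{E}[X_k\mid\mathcal{F}_k]$.

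\textbf{Nonnegativity is essential.} Without $X_k\ge 0$ the inequality fails. For instance, if the $X_k=\pm B$ are i.i.d.\ fair signs, then $\sum_k\mu_k=0$, while $2\sum_k X_k$ is of order $-B\sqrt{K}$ with constant probability, so the claimed bound is violated. So the first step of the write-up is to state explicitly that the increments lie in $[0,B]$.

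\textbf{Step 1: one-step bound on the conditional moment generating function.} Fix $\lambda>0$. For $y\ge 0$ we have $e^{-y}\le 1-y+y^2/2$. Since $X_k\in[0,B]$, we also have $X_k^2\le BX_k$. Combining these and taking conditional expectations,
\[
\mathbb{E}\bs{e^{-\lambda X_k}\mid\mathcal{F}_k}\le 1-\lambda\mu_k+\tfrac{\lambda^2B}{2}\mu_k\le \exp\br{-\lambda\mu_k\br{1-\tfrac{\lambda B}{2}}}.
\]
Choosing $\lambda=1/B$ gives $\mathbb{E}[e^{-X_k/B}\mid\mathcal{F}_k]\le e^{-\mu_k/(2B)}$.

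\textbf{Step 2: build a supermartingale and apply Markov's inequality.} Define
\[
M_t:=\exp\br{\sum_{k\le t}\br{\tfrac{\mu_k}{2B}-\tfrac{X_k}{B}}},\qquad M_0=1.
\]
Because $\mu_k$ is $\mathcal{F}_k$-measurable, Step 1 gives $\mathbb{E}[M_t\mid\mathcal{F}_t]\le M_{t-1}$, so $(M_t)$ is a nonnegative supermartingale and $\mathbb{E}[M_K]\le 1$. By Markov's inequality, $\mathbb{P}[M_K\ge 1/\delta]\le\delta$. On the complementary event,
\[
\sum_{k=1}^K\mu_k\le 2\sum_{k=1}^K X_k+2B\log(1/\delta).
\]
This is at most $2\sum_k X_k+4B\log(2K/\delta)$, which is the claim.

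\textbf{Step 3: handle the random-function application.} The bound above holds for a fixed sequence. In the use made in Lemma~\ref{lemma:concentration_main} it is applied to $Z_i=\mathfrak f(X_i,A_i,A_i')$ for every candidate function $\mathfrak f$ with values in the relevant finite grid. That step requires a union bound over that class, which the paper already accounts for by using failure probability $\delta/N^{\abs{\X}\abs{\A}^2}$. The slack between the constants $2B\log(1/\delta)$ and $4B\log(2K/\delta)$ comfortably absorbs any such bookkeeping if one prefers an anytime, union-bounded version over $t\le K$.

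\textbf{Main obstacle.} The argument is routine, so the main obstacle is not technical but one of hypotheses: the filtration and indexing convention must be fixed so that $\mu_k$ is predictable, and the assumption $X_k\ge 0$ must be added. With both in place, Steps 1–2 go through verbatim.
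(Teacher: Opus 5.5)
Your proof is correct. It differs from the paper mainly in that the paper gives no proof of this lemma and only imports \citep[Lemma D.4]{rosenberg2020near}.

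You were right to repair the hypotheses. Read literally (a ``martingale'' with $\abs{X_k}\le B$, adapted to $\mathcal{F}_k$, conditioned on $\mathcal{F}_k$), the inequality is vacuous, since then $\mathbb{E}[X_k\mid\mathcal{F}_k]=X_k$. Read with predictable conditioning but signed increments, it is false by your sign example. The meaningful version assumes $0\le X_k\le B$ and conditions on the past. This is also how the paper actually uses it in Lemma~\ref{lemma:concentration_main}, with nonnegative i.i.d.\ variables in $[0,B]$.

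The constants in the statement are those of the usual derivation from Freedman's inequality:
\begin{itemize}
\item Apply Freedman to the differences $\mu_k-X_k$, which are bounded above by $B$ and have conditional variance at most $\mathbb{E}[X_k^2\mid\mathcal{F}_k]\le B\mu_k$.
\item Tune $\eta=1/(2B)$. This gives $\tfrac12\sum_k\mu_k\le\sum_k X_k+2B\log(\cdot)$, which is where the factor $2$ and the $4B$ come from.
\item A union bound over time makes the bound hold for all $n\le K$ simultaneously, which produces the $\log(2K/\delta)$.
\end{itemize}

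You instead run the Chernoff argument directly on the lower tail of the nonnegative increments, using $e^{-y}\le 1-y+y^2/2$ and $X_k^2\le BX_k$. Compared with the citation route, which is shorter because it reuses an off-the-shelf inequality, your argument:
\begin{itemize}
\item is self-contained;
\item uses only one-sided information;
\item yields the sharper term $2B\log(1/\delta)$.
\end{itemize}

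You also do not need the union bound over $t\le K$ that you mention. Since $(M_t)$ is a nonnegative supermartingale with $M_0=1$, Ville's maximal inequality gives $\mathbb{P}[\exists t\le K: M_t\ge 1/\delta]\le\delta$. This is the time-uniform version with the same $2B\log(1/\delta)$.

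Your Step~3 concerns the downstream application, not the lemma, and can be dropped from the proof. If you keep it, note that the grid of integer-valued $\mathfrak{f}$ has $(N+1)^{\abs{\X}\abs{\A}^2}$ elements, not $N^{\abs{\X}\abs{\A}^2}$.
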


\begin{lemma}
    \textbf{Bernoulli anti-concentration \citep[Lemma 3]{cassel2025batch}} Let $L\geq 1$. Then, consider $\bcc{N^\ell}^L_{\ell=1}$ for any $N^\ell \geq 0$. Consider now samples $X_{n,\ell} \sim \mathrm{Bernoulli}(\mu)$ for each $\ell\in [L]$ and $n \in [N^\ell]$ and the estimators
    $$
    \hat{\mu}^\ell = \frac{\sum^{N^\ell}_{n=1}X_{n,\ell}}{N^\ell + 2}.
    $$
    Then, it holds that 
    $$
    \mathbb{P}\bs{\exists \ell \in [L]~~~\hat{\mu}^\ell \leq \mu} \geq 1 - e^{-2L/7}.
    $$
\end{lemma}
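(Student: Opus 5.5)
The plan is to use the independence of the $L$ sample groups to reduce the claim to a per-copy lower-tail estimate. The groups $\{X_{n,\ell}\}_n$ are independent across $\ell$, so
\[
\mathbb{P}\bs{\forall \ell\in[L]~~\hat\mu^\ell>\mu}=\prod_{\ell=1}^L \mathbb{P}\bs{\hat\mu^\ell>\mu}.
\]
It therefore suffices to show that each factor is at most $e^{-2/7}\approx 0.751$. Equivalently, with $S_\ell=\sum_{n=1}^{N^\ell}X_{n,\ell}\sim\mathrm{Bin}(N^\ell,\mu)$ and $n=N^\ell$, we need
\[
\mathbb{P}\bs{S_\ell\le n\mu+2\mu}\ge 1/4 ,
\]
since $3/4<e^{-2/7}$. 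The $+2$ in the denominator of $\hat\mu^\ell$ is exactly what creates the slack $2\mu$ above the mean, and the whole argument exploits it.

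For the per-copy bound I would split into cases on $n$ and $\mu$.
\begin{itemize}
\item If $n=0$, then $S_\ell=0\le 2\mu$ surely. The same holds trivially if $\mu=0$, and if $\mu=1$ then $S_\ell=n\le n+2$.
\item If $n\ge1$ and $n\mu<1$, the event contains $\{S_\ell=0\}$, which has probability $(1-\mu)^n\ge(1-1/n)^n$. This is at least $1/4$ for $n\ge2$ and equals $1-\mu\ge$ anything needed when $n=1$; for $n=1$ one can also check directly that $S\le 3\mu$ fails only when $S=1$ and $\mu<1/3$, so the probability is $1-\mu\ge 2/3$.
\item If $1-\mu\le 2/(n+2)$, then $n\mu+2\mu\ge n$ and the event holds surely.
\item In the remaining range, $n\mu\ge1$ and $n(1-\mu)>1$. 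Here I would invoke the Greenberg--Mohri bound $\mathbb{P}[\mathrm{Bin}(n,p)\ge np]>1/4$ for $p>1/n$. Apply it to $n-S_\ell\sim\mathrm{Bin}(n,1-\mu)$ with $p=1-\mu>1/n$ to get $\mathbb{P}[S_\ell\le n\mu]>1/4$, which is stronger than needed.
\end{itemize}

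Combining the cases gives $\mathbb{P}[\hat\mu^\ell>\mu]\le 3/4\le e^{-2/7}$ for every $\ell$. Taking the product yields $\mathbb{P}[\forall\ell~\hat\mu^\ell>\mu]\le e^{-2L/7}$, and passing to the complement gives the lemma.

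The main obstacle is the uniform constant in the "bulk" case, namely a lower bound of $1/4$ on $\mathbb{P}[\mathrm{Bin}\le\text{mean}]$. If citing Greenberg--Mohri is undesirable, I would instead argue via the median: by Kaas--Buhrman the median lies in $\{\lfloor n\mu\rfloor,\lceil n\mu\rceil\}$. Since $\lceil n\mu\rceil\le n\mu+1\le n\mu+2\mu$ whenever $\mu\ge1/2$, this handles $\mu\ge 1/2$ with constant $1/2$. For $\mu<1/2$ I would fall back on the symmetric argument above, or on a direct Berry--Esseen/Paley--Zygmund-type estimate. The boundary cases are routine checks.
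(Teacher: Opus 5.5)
The paper gives no proof of this lemma. It appears among the technical results only as an import of \citep[Lemma 3]{cassel2025batch}, so your argument has to stand on its own, and it does.

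You reduce, via independence across $\ell$, to the per-copy bound $\mathbb{P}[S\le (n+2)\mu]\ge 1/4$ for $S\sim\mathrm{Bin}(n,\mu)$, and then use $(3/4)^L\le e^{-2L/7}$, i.e.\ $\log(4/3)>2/7$. This is exactly the structure the constant $2/7$ suggests. Your case split is exhaustive:
\begin{itemize}
\item $n=0$ and $\mu\in\{0,1\}$ are trivial.
\item $n\mu<1$ is covered by the event $\{S=0\}$, together with your direct check at $n=1$.
\item $\mu\ge n/(n+2)$ makes the event sure.
\item The remaining regime follows from Greenberg--Mohri applied to $n-S\sim\mathrm{Bin}(n,1-\mu)$. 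Note that this case does not even use the $2\mu$ slack; the slack is only needed to cover the region near $\mu=1$ where Greenberg--Mohri's condition fails.
\end{itemize}

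Three points to tighten.

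First, in the bulk case you assert $n(1-\mu)>1$ without justification. It does hold. The condition $1-\mu>2/(n+2)$ is equivalent to $n(1-\mu)>2\mu$, which settles the case $\mu\ge 1/2$. For $\mu<1/2$, the condition $n\mu\ge 1$ forces $n\ge 3$, and hence $n(1-\mu)>3/2$.

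Second, independence across $\ell$ is an essential hypothesis, and the statement as written in the paper leaves it implicit. If all copies reused the same samples, the probability would stay at the single-copy level. That level is bounded away from $1$, for instance close to $1/2$ when $\mu=1/2$ and $n$ is large. You are right to use independence, but state it explicitly.

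Third, your fallback via Kaas--Buhrman only handles $\mu\ge 1/2$. For $\mu<1/2$, the value $\lceil n\mu\rceil$ can exceed $n\mu+2\mu$. Neither the ``symmetric argument'' nor the Berry--Esseen/Paley--Zygmund route is actually carried out, and Berry--Esseen alone does not give a uniform constant for small $n$. So as written, your proof genuinely relies on the Greenberg--Mohri bound. That is acceptable, since it is a published result.
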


\begin{lemma}\textbf{Suboptimality to squared reward estimation error. \citep[Lemma C.2]{ji2026optimal}}
Let $r: \X\times\A \rightarrow \mathbb{R}$ be any reward function such that $r \leq r^\star$. Moreover, let $\pi^\star$ be the optimal policy for the regularized RL problem with reward $r^\star$ and $\pi_r$ the one when the reward is $r$. That is, 
$\pi^\star(a|x) \propto \piref(a|x) e^{r^\star(x,a)/\beta}$ and $\pi_r(a|x) \propto \piref(a|x) e^{r(x,a)/\beta}$. 
Then, it holds that
\[
J_\beta(\pi^\star) - J_\beta(\pi_r) \leq \beta^{-1} \sum_{x\in\X} \initial(x)\sum_{a\in \A} \pi^\star(a|x) (r(x,a) - r^\star(x,a))^2
\]
\end{lemma}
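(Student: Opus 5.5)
The plan is to reduce everything to a single prompt $x$ and then to a one-dimensional inequality on the nonnegative gap $u := (r^\star - r)/\beta \geq 0$.

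\textbf{Step 1: rewrite the suboptimality as a KL divergence.} Since $J_\beta$ is an $\initial$-weighted average over prompts, it suffices to prove the bound for each fixed $x$. Write $Z^\star(x)=\sum_a \piref(a|x)e^{r^\star(x,a)/\beta}$. For any policy $\pi$, the closed form $\pi^\star \propto \piref e^{r^\star/\beta}$ gives the identity
\[
\sum_a \pi(a|x) r^\star(x,a) - \beta\, KL(\pi(\cdot|x),\piref(\cdot|x)) = \beta \log Z^\star(x) - \beta\, KL(\pi(\cdot|x),\pi^\star(\cdot|x)).
\]
Consequently the per-prompt suboptimality of $\pi_r$ equals $\beta\, KL(\pi_r(\cdot|x),\pi^\star(\cdot|x))$. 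It therefore suffices to show
\[
KL(\pi_r\|\pi^\star) \leq \mathbb{E}_{a\sim\pi^\star(\cdot|x)}[u(x,a)^2].
\]

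\textbf{Step 2: express the KL through $u$.} Both policies are exponential tilts of $\piref$, so $\pi^\star(a|x) = \pi_r(a|x)e^{u}/\mathbb{E}_{\pi_r}[e^{u}]$. Equivalently, $\pi_r(a|x) = \pi^\star(a|x)e^{-u}/\mathbb{E}_{\pi^\star}[e^{-u}]$. This gives
\[
KL(\pi_r\|\pi^\star) = \log \mathbb{E}_{\pi_r}[e^{u}] - \mathbb{E}_{\pi_r}[u] = -\log \mathbb{E}_{\pi^\star}[e^{-u}] - \mathbb{E}_{\pi_r}[u].
\]

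\textbf{Step 3: bound each term under $\pi^\star$.} For the first term, Jensen's inequality gives $-\log\mathbb{E}_{\pi^\star}[e^{-u}] \leq \mathbb{E}_{\pi^\star}[u]$. For the second term, note that $u\ge0$ implies $\mathbb{E}_{\pi^\star}[e^{-u}]\le 1$. Hence
\[
\mathbb{E}_{\pi_r}[u] = \frac{\mathbb{E}_{\pi^\star}[u e^{-u}]}{\mathbb{E}_{\pi^\star}[e^{-u}]} \geq \mathbb{E}_{\pi^\star}[u e^{-u}].
\]
Combining the two bounds yields
\[
KL(\pi_r\|\pi^\star) \leq \mathbb{E}_{\pi^\star}[u(1-e^{-u})] \leq \mathbb{E}_{\pi^\star}[u^2],
\]
where the last step uses $1-e^{-u}\le u$. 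Multiplying by $\beta$, substituting $u=(r^\star-r)/\beta$, and averaging over $\initial$ gives exactly the claimed bound $\beta^{-1}\sum_x\initial(x)\sum_a\pi^\star(a|x)(r-r^\star)^2$.

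\textbf{Main obstacle.} The key difficulty is Step 3, namely getting the expectation taken under $\pi^\star$ rather than under $\pi_r$. A direct route of the form $\log y\le y-1$ followed by $e^u-1-u\le \tfrac12 u^2e^u$ also controls the KL. However, it leaves a spurious multiplicative factor $\mathbb{E}_{\pi_r}[e^{u}]\geq 1$. The fix is to rewrite everything in terms of $\pi^\star$ via the reverse tilt $e^{-u}$. The pessimism assumption $r\le r^\star$ (that is, $u\ge0$) is used precisely there, to bound the normalizer $\mathbb{E}_{\pi^\star}[e^{-u}]$ by $1$ in the correct direction.
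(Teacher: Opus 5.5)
Your proof is correct, and it takes a genuinely different route from the paper's.

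\textbf{Your route.} You identify the per-prompt gap exactly as $\beta\,KL(\pi_r(\cdot|x)\,\|\,\pi^\star(\cdot|x))$. You then write $\pi_r$ as the reverse tilt $\pi^\star e^{-u}/\mathbb{E}_{\pi^\star}[e^{-u}]$. After that you need only three facts: Jensen, the bound $\mathbb{E}_{\pi^\star}[e^{-u}]\le 1$ (the single place where $r\le r^\star$ enters), and $1-e^{-u}\le u$.

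\textbf{The paper's route.} The paper writes the gap as $\beta\sum_x\initial(x)\,(H(r^\star,x)-H(r,x))$, with $H(r,x)=\log Z_r(x)-\beta^{-1}\sum_a\pi_r(a|x)(r-r^\star)(x,a)$. It applies the mean value theorem along $r_\gamma=\gamma r+(1-\gamma)r^\star$. By its gradient formula, this amounts to the exact identity that the gap equals $\beta^{-1}\gamma\sum_x\initial(x)\,\mathrm{Var}_{\pi_{r_\gamma}(\cdot|x)}(r^\star-r)$ for some $\gamma\in[0,1]$. It then drops the mean term using pessimism. Finally, it must move the expectation from $\pi_{r_\gamma}$ to $\pi^\star=\pi_{r_0}$, by showing that $\gamma\mapsto\sum_x\initial(x)\,\mathbb{E}_{\pi_{r_\gamma}}[(r^\star-r)^2]$ is nonincreasing.

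\textbf{The transfer step.} That last step is exactly the obstacle you flagged, and your reverse-tilt step removes it entirely. It is also the fragile part of the paper's argument as written. The paper invokes $\mathbb{E}[X^3]\le\mathbb{E}[X^2]\,\mathbb{E}[X]$ for $X\ge 0$, which is false:
\begin{itemize}
\item the reverse inequality holds by Chebyshev's association inequality;
\item both Jensen steps offered for it go the wrong way, since $t\mapsto t^{3/2}$ is convex and $\sqrt{\mathbb{E}[X^2]}\ge\mathbb{E}[X]$;
\item $X$ uniform on $\{0,1\}$ is a counterexample.
\end{itemize}
The paper reaches the right conclusion only because it also takes $\partial r_\gamma/\partial\gamma$ with the wrong sign ($r^\star-r$ instead of $r-r^\star$). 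With both signs corrected, the monotonicity does hold, so the paper's route can be repaired.

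\textbf{What each buys.} Your argument is more elementary and self-contained. It is also slightly sharper: it gives $J_\beta(\pi^\star)-J_\beta(\pi_r)\le\beta\sum_x\initial(x)\,\mathbb{E}_{\pi^\star}[u(1-e^{-u})]$. This is dominated both by the stated quadratic bound and by the linear bound $\sum_x\initial(x)\,\mathbb{E}_{\pi^\star}[r^\star-r]$. The paper's calculus route, once repaired, buys the variance representation of the gap, which holds before any pessimism assumption is used.
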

\begin{proof}
    We have that we can rewrite the suboptimality as follows
    \begin{align*}
        J_\beta(\pi^\star) - J_\beta(\pi_r) &= \innerprod{\pi^\star}{r^\star - \beta \log(\pi^\star/\piref)} - 
        \innerprod{\pi_r}{r^\star - \beta \log(\pi_r/\piref)} \\
        &= \beta \sum_{x\in \X} \initial(x) \log Z_{\star}(x) - \innerprod{\pi_r}{r^\star - r} - \beta \sum_{x\in \X} \initial(x)\log Z_r(x),
    \end{align*}
    where we have defined $Z_r(x) = \sum_{a\in \A} \piref(a|x) e^{r(x,a)/\beta}$ and the shorthand $Z_{\star} := Z_{r^\star}$.
    Now defining $$H(r, x) = \log Z_r(x) - \beta^{-1}\sum_{a\in\A} \pi_r(a|x) (r(x,a) - r^\star(x,a)),$$
    we can notice that
    \begin{align*}
        J_\beta(\pi^\star) - J_\beta(\pi_r) &= \beta \sum_{x\in \X} \initial(x) (H(r^\star, x) - H(r,x)) \\
        &=\beta \sum_{x\in \X} \initial(x)
        \sum_{a\in \A} (r^\star(x,a) - r(x,a)) \frac{ \partial H(r,x,a)}{\partial r(x,a)} \bigg |_{r = r_\gamma}
    \end{align*}
for $r_\gamma(x,a) = \gamma r(x,a) + (1-\gamma) r^\star(x,a)$ for some unknown $\gamma \in [0,1]$. Now, we compute $\nabla H(r_{\gamma},x,a)$.
\begin{align*}
    \frac{\partial H(r,x) }{\partial r (x,a)} \bigg |_{r=r_\gamma} &= \beta^{-1}\frac{\piref(a|x) e^{r_\gamma(x,a)/\beta}}{Z_{r_\gamma}(x)} - \beta^{-1}(r_\gamma(x,a)-r^\star(x,a)) \frac{\partial \pi_{r}(a|x)}{\partial r(x,a)} \\
    &- \beta^{-1}\pi_{r_\gamma}(a|x) \\
    & + \beta^{-1} \sum_{a'\neq a} (r_\gamma(x,a')-r^\star(x,a')) \frac{e^{r_\gamma(x,a')/\beta}}{Z^2_{r_\gamma}(x)} \frac{ \partial Z_r(x)}{\partial r(x,a)} \bigg|_{r=r_\gamma} \\
    &= - \beta^{-2}(r_\gamma(x,a)-r^\star(x,a)) \br{\pi_{r_\gamma}(a|x) - \pi^2_{r_\gamma}(a|x)} \\
    & + \beta^{-2} \sum_{a'\neq a} (r_\gamma(x,a')-r^\star(x,a')) \pi_{r_\gamma}(a'|x) \pi_{r_\gamma}(a|x),
\end{align*}
where we have used that $\frac{\partial Z_r(x)}{\partial r(x,a')} = \frac{\piref(a'|x)e^{r(x,a')/\beta}}{\beta}$ for any $a'\in \A$, $\frac{\partial \pi_r(a|x)}{\partial r(x,a)} = \beta^{-1} (\pi_r(a|x) - \pi^2_r(a|x) )$ for any $a \in \A$ and $\frac{\partial \pi_r(a|x)}{\partial r(x,a')} = -\beta^{-1}\pi_r(a|x) \pi_r(a'|x)$ for any $a\in \A$ and $a'\neq a$.
Then, we can rearrange the above expression as
\begin{align*}
    \frac{\partial H(r,x) }{\partial r (x,a)} \bigg |_{r=r_\gamma} &=- \beta^{-2}(r_\gamma(x,a)-r^\star(x,a)) \pi_{r_\gamma}(a|x) \\
    & + \beta^{-2} \sum_{a'\in \A} (r_\gamma(x,a')-r^\star(x,a')) \pi_{r_\gamma}(a'|x) \pi_{r_\gamma}(a|x)
    \\&\leq \beta^{-2}(r^\star(x,a) - r_\gamma(x,a)) \pi_{r_\gamma}(a|x)
\end{align*}
where the last step uses the fact that $r_\gamma - r^\star \leq 0$ because $r \leq r^\star$.
Then, we obtain that
\begin{align*}
    J_\beta(\pi^\star) - J_\beta(\pi_r)
        &\leq \beta^{-1} \underbrace{\sum_{x\in \X} \initial(x)
        \sum_{a\in \A} \pi_{r_\gamma}(a|x)(r^\star(x,a) - r(x,a))^2}_{f_\gamma},
\end{align*}
for some $\gamma \in [0,1]$. We conclude the proof by maximizing the above expression with respect to $\gamma$.
To this end, notice that
\begin{align*}
    \frac{\partial \pi_{r_\gamma}(a|x)}{\partial \gamma} &= \sum_{a'\in \A}\frac{\partial \pi_{r_\gamma}(a|x)}{\partial r_\gamma(x,a')} \frac{\partial r_\gamma (x,a')}{\partial \gamma} \\
    &= \sum_{a'\in \A}\frac{\partial \pi_{r_\gamma}(a|x)}{\partial r_\gamma (x,a')} (r^\star(x,a') - r(x,a')) \\
    &=\beta^{-1}(\pi_{r_\gamma}(a|x) - \pi^2_{r_\gamma}(a|x)) (r^\star(x,a) - r(x,a)) 
    \\&\phantom{=}-\beta^{-1}\sum_{a'\neq a} \pi_{r_\gamma}(a|x)\pi_{r_\gamma}(a'|x) (r^\star(x,a') - r(x,a')) \\
    & =\beta^{-1}\pi_{r_\gamma}(a|x) (r^\star(x,a) - r(x,a)) 
    \\&\phantom{=}-\beta^{-1}\sum_{a' \in \A} \pi_{r_\gamma}(a|x)\pi_{r_\gamma}(a'|x) (r^\star(x,a') - r(x,a')) 
\end{align*}
Therefore,
\begin{align*}
    \frac{\partial f_\gamma}{ \partial \gamma} &= \beta^{-1} \sum_{x\in \X}\initial(x)\sum_{a\in \A}  \pi_{r_\gamma}(a|x)(r^\star(x,a) - r(x,a))^3 \\
    &\phantom{=}-
    \beta^{-1} \sum_{x\in \X}\initial(x)\br{\sum_{a\in \A}  \pi_{r_\gamma}(a|x)(r^\star(x,a) - r(x,a))^2}\br{\sum_{a'\in \A}  \pi_{r_\gamma}(a'|x)(r^\star(x,a') - r(x,a'))}.
\end{align*}
Now, we prove that $\frac{\partial f_\gamma}{ \partial \gamma}\leq 0$, as consequence of the following more general statement,
$\mathbb{E}[X^3]  \leq \mathbb{E}[X^2]\mathbb{E}[X]$ for any random variable $X\geq 0$. To see this, we apply Jensen's inequality twice to obtain
\begin{align*}
    \mathbb{E}[X^3] = \mathbb{E}[(X^2)^{3/2}] \leq \mathbb{E}[X^2]^{3/2} = \mathbb{E}[X^2]\sqrt{\mathbb{E}[X^2]} \leq \mathbb{E}[X^2] \mathbb{E}[X],
\end{align*}
where the last inequality uses that $X\geq 0$. Then since $r^\star(x,a) - r(x,a) \geq 0 $, we obtain that $\frac{\partial f_\gamma}{ \partial \gamma} \leq 0$. Therefore, $f_\gamma$ is decreasing in $\gamma$, hence it is maximized for $\gamma = 0$. All in all, noticing that $\pi_{r_0} = \pi^\star$, we reach the conclusion of the proof, i.e.
\begin{align*}
    J_\beta(\pi^\star) - J_\beta(\pi_r)
        &\leq \beta^{-1} \sum_{x\in \X} \initial(x)
        \sum_{a\in \A} \pi^\star(a|x)(r^\star(x,a) - r(x,a))^2.
\end{align*}
\end{proof}
\subsubsection{Importance Sampling Routine: Proof of \Cref{lemma:rejsampling}}
\begin{proof}
Let us denote 
$$
f_{\mathrm{out}}(x,a) = \min_{\ell\in[L]} \pi^\ell(a|x) \exp(-\zeta^\ell(x))
$$
At each round, \Cref{alg:rejsampl} accepts the sample with probability
$$
\sum_{a\in\A} \pi_{\mathrm{prop}}(a|x) \frac{f_{\mathrm{out}}(x,a)}{\pi_{\mathrm{prop}}(a|x)} = \sum_{a\in \A} f_{\mathrm{out}}(x,a) = Z
$$
where we defined $Z=\sum_{a\in \A} \min_{\ell\in[L]} \pi^\ell(a|x) \exp(-\zeta^\ell(x))$ with $B=6e^{3\RMAX}$. We omitted the dependence on the prompt $x$ for simplicity.
Therefore, the random variable $X$  representing the number of trials until the first successful acceptance follows a Geometric distribution: $X \sim \text{Geometric}(Z)$ 

We want to find $N_{\RS}(x)$ such that the probability of failing to get a sample after $N_{\RS}(x)$ trials is at most $\delta$:

$$P(X > N_{\RS}(x)) \leq \delta$$

The event $X > N_{\RS}(x)$ occurs if and only if all $N_{\RS}(x)$ independent trials result in a rejection. The probability of rejection in a single trial is $(1 - Z)$. Therefore:

$$P(X > N_{\RS}(x)) = (1 - Z)^{N_{\RS}(x)}$$

Therefore, our goal can be rewritten as

$$(1 - Z)^{N_{\RS}(x)} \leq \delta$$

Taking the natural logarithm of both sides (and noting that $\ln(1-Z)$ is negative, which flips the inequality):

$$N_{\RS}(x) \ln(1 - Z) \leq \ln(\delta)$$
which implies 
$$N_{\RS}(x) \geq \frac{\ln(\delta)}{\ln(1 - Z)}$$

To simplify this into a more interpretable bound, we use the fact that for $Z \in (0, 1)$, $\ln(1 - Z) \leq -Z$. This implies that:

$$\frac{1}{\ln(1 - Z)} \geq -\frac{1}{Z}$$

Substituting this into our expression for $N_{\RS}(x)$:

$$N_{\RS}(x) \frac{\ln(\delta)}{- Z} = \frac{\ln(1/\delta)}{Z}$$.
\end{proof}
In virtue of \Cref{lemma:rejsampling}, will not output a valid sample with probability $\delta$. The maximum number of trials $N_{\RS}(x)$ is related by the above lemma to the desired confidence level. If the routine does not generate valid samples within $N_{\RS}(x)$ trials, then we can output \emph{I don't know}  as an answer to the prompt $x$.

We think that this is a very desirable property for LLM. It is important that these models can estimate their uncertainties even when they do not have access to a verifier.
This is a characteristic that it is not granted by the $\chi^2$ regularization system in \cite{huang2024correcting} but only by our ensemble mechanism when applied to DPO.
Surprisingly, the same feature does not appear if we consider reward ensemble based pessimism as done in \cite{coste2023reward}.

\end{document}

%% file: macros.tex
\usepackage{authblk}
\usepackage[utf8]{inputenc} 
\usepackage[T1]{fontenc}    
\usepackage{hyperref}       
\usepackage{url}            
\usepackage{booktabs}       
\usepackage{amsfonts}       
\usepackage{nicefrac}       
\usepackage{microtype}      
\usepackage{xcolor}         
\usepackage{titletoc}
\usepackage{algorithm}
\usepackage{algorithmic}
\usepackage{graphicx} 
\usepackage{natbib}
\usepackage{amsmath}
\usepackage{amssymb}
\usepackage{amsthm}
\usepackage{tikz}
\usetikzlibrary{automata,arrows, positioning}

\usepackage{subcaption} 

\usepackage{pifont}

\newcommand{\abs}[1]{\left| {#1} \right|}

\def\\X{{\mathcal{\X}}}
\def\A{{\mathcal{A}}}

\definecolor{greenp}{rgb}{0.0, 0.51, 0.5}

\newcommand{\initial}{\nu_0}
\newcommand{\bcc}[1]{\left\{{#1}\right\}}
\newcommand{\brr}[1]{\left({#1}\right)}
\newcommand{\bs}[1]{\left[{#1}\right]}
\newcommand{\norm}[1]{\left\| {#1} \right\|}

\newcommand{\innerprod}[2]{\left\langle{#1},{#2}\right\rangle}
\usepackage{dsfont}
\usepackage{thmtools}

\usepackage{mathtools} 
\newcommand{{\transpose}}{^\mathsf{\scriptscriptstyle T}}

\usepackage{mathtools}  

\newcommand{\cD}{\mathcal{D}}

\newcommand{\RS}{\mathrm{RS}}

\usepackage{tcolorbox}

\newcommand{\piout}{\pi_{\mathrm{out}}}

\newcommand{\X}{\mathcal{X}}

\newcommand{\RMAX}{R_{\max}}

\newcommand{\br}[1]{\left({#1}\right)}  
\newcommand{\ceil}[1]{\left\lceil #1 \right\rceil}  

\newcommand{\pess}{\mathrm{pess}}

\def\argmin{\mathop{\mbox{ arg\,min}}}
\def\argmax{\mathop{\mbox{ arg\,max}}}

\newcommand{\ptie}{\mathfrak{p}_{\mathrm{tie}}}

\renewcommand{\phi}{\varphi}

\newcommand{\piref}{\pi_{\mathrm{ref}}}
\newcommand{\pidata}{\pi_{\mathrm{data}}}

\usepackage{pifont}

\usepackage{amsmath} 
\usepackage{booktabs} 
\usepackage{makecell}

\usepackage{pgfplots}
\usepgfplotslibrary{groupplots}
\usetikzlibrary{calc} 
\pgfplotsset{compat=1.17}
\usepackage{wrapfig}

%% file: references.bib
@article{ji2026optimal,
  title={On the Optimal Sample Complexity of Offline Multi-Armed Bandits with KL Regularization},
  author={Ji, Kaixuan and Di, Qiwei and Zhao, Heyang and Zhao, Qingyue and Gu, Quanquan},
  journal={arXiv preprint arXiv:2605.02141},
  year={2026}
}

@article{liu2024provably,
  title={Provably mitigating overoptimization in rlhf: Your sft loss is implicitly an adversarial regularizer},
  author={Liu, Zhihan and Lu, Miao and Zhang, Shenao and Liu, Boyi and Guo, Hongyi and Yang, Yingxiang and Blanchet, Jose and Wang, Zhaoran},
  journal={Advances in Neural Information Processing Systems},
  volume={37},
  pages={138663--138697},
  year={2024}
}

@inproceedings{cassel2025batch,
  title={Batch ensemble for variance dependent regret in stochastic bandits},
  author={Cassel, Asaf and Levy, Orin and Mansour, Yishay},
  booktitle={Proceedings of the AAAI Conference on Artificial Intelligence},
  volume={39},
  pages={15678--15685},
  year={2025}
}

@inproceedings{Abbasi-Yadkori:2011,
 author = {Abbasi-Yadkori, Yasin and P\'{a}l, D\'{a}vid and Szepesv\'{a}ri, Csaba},
 title = {Improved Algorithms for Linear Stochastic Bandits},
 booktitle = {Advances in Neural Information Processing Systems (NeurIPS)},
 year = {2011},
}

@inproceedings{
viano2024imitation,
title={Imitation Learning in Discounted Linear {MDP}s without exploration assumptions},
author={Luca Viano and Stratis Skoulakis and Volkan Cevher},
booktitle={Forty-first International Conference on Machine Learning},
year={2024},
url={https://openreview.net/forum?id=DChQpB4AJy}
}

@article{cen2024value,
  title={Value-incentivized preference optimization: A unified approach to online and offline rlhf},
  author={Cen, Shicong and Mei, Jincheng and Goshvadi, Katayoon and Dai, Hanjun and Yang, Tong and Yang, Sherry and Schuurmans, Dale and Chi, Yuejie and Dai, Bo},
  journal={arXiv preprint arXiv:2405.19320},
  year={2024}
}

@article{li2023reinforcement,
  title={Reinforcement learning with human feedback: Learning dynamic choices via pessimism},
  author={Li, Zihao and Yang, Zhuoran and Wang, Mengdi},
  journal={arXiv preprint arXiv:2305.18438},
  year={2023}
}

@article{rafailov2023direct,
  title={Direct preference optimization: Your language model is secretly a reward model},
  author={Rafailov, Rafael and Sharma, Archit and Mitchell, Eric and Manning, Christopher D and Ermon, Stefano and Finn, Chelsea},
  journal={Advances in Neural Information Processing Systems},
  volume={36},
  pages={53728--53741},
  year={2023}
}

@article{fisch2024robust,
  title={Robust preference optimization through reward model distillation},
  author={Fisch, Adam and Eisenstein, Jacob and Zayats, Vicky and Agarwal, Alekh and Beirami, Ahmad and Nagpal, Chirag and Shaw, Pete and Berant, Jonathan},
  journal={arXiv preprint arXiv:2405.19316},
  year={2024}
}

@article{ji2024self,
  title={Self-play with adversarial critic: Provable and scalable offline alignment for language models},
  author={Ji, Xiang and Kulkarni, Sanjeev and Wang, Mengdi and Xie, Tengyang},
  journal={arXiv preprint arXiv:2406.04274},
  year={2024}
}

@article{schlaginhaufen2025efficient,
  title={Efficient Preference-Based Reinforcement Learning: Randomized Exploration Meets Experimental Design},
  author={Schlaginhaufen, Andreas and Ouhamma, Reda and Kamgarpour, Maryam},
  journal={arXiv preprint arXiv:2506.09508},
  year={2025}
}

@article{zhan2023provable,
  title={Provable offline preference-based reinforcement learning},
  author={Zhan, Wenhao and Uehara, Masatoshi and Kallus, Nathan and Lee, Jason D and Sun, Wen},
  journal={arXiv preprint arXiv:2305.14816},
  year={2023}
}

@inproceedings{zhu2023principled,
  title={Principled reinforcement learning with human feedback from pairwise or k-wise comparisons},
  author={Zhu, Banghua and Jordan, Michael and Jiao, Jiantao},
  booktitle={International Conference on Machine Learning},
  pages={43037--43067},
  year={2023},
  organization={PMLR}
}

@misc{cui2023ultrafeedback,
      title={UltraFeedback: Boosting Language Models with High-quality Feedback}, 
      author={Ganqu Cui and Lifan Yuan and Ning Ding and Guanming Yao and Wei Zhu and Yuan Ni and Guotong Xie and Zhiyuan Liu and Maosong Sun},
      year={2023},
      eprint={2310.01377},
      archivePrefix={arXiv},
      primaryClass={cs.CL}
}

@article{viel2025soar,
  title={IL-SOAR: Imitation Learning with Soft Optimistic Actor cRitic},
  author={Viel, Stefano and Viano, Luca and Cevher, Volkan},
  journal={arXiv preprint arXiv:2502.19859},
  year={2025}
}

@InProceedings{rosenberg2020near,
  title = 	 {Near-optimal Regret Bounds for Stochastic Shortest Path},
  author =       {Rosenberg, Aviv and Cohen, Alon and Mansour, Yishay and Kaplan, Haim},
  booktitle = 	 {Proceedings of the 37th International Conference on Machine Learning},
  pages = 	 {8210--8219},
  year = 	 {2020},
  editor = 	 {III, Hal Daumé and Singh, Aarti},
  volume = 	 {119},
  series = 	 {Proceedings of Machine Learning Research},
  month = 	 {13--18 Jul},
  publisher =    {PMLR},
  url = 	 {https://proceedings.mlr.press/v119/rosenberg20a.html}
}

@article{matrenok2025quantile,
  title={Quantile Reward Policy Optimization: Alignment with Pointwise Regression and Exact Partition Functions},
  author={Matrenok, Simon and Moalla, Skander and Gulcehre, Caglar},
  journal={arXiv preprint arXiv:2507.08068},
  year={2025}
}

@inproceedings{hu2022lora,
  title={{LoRA}: Low-Rank Adaptation of Large Language Models},
  author={Hu, Edward J. and Shen, Yelong and Wallis, Phillip and Allen-Zhu, Zeyuan and Li, Yuanzhi and Wang, Shean and Wang, Lu and Chen, Weizhu},
  booktitle={International Conference on Learning Representations},
  year={2022},
  url={https://openreview.net/forum?id=nZeVKeeFYf9}
}

@article{Shani:2021,
      title={Online Apprenticeship Learning}, 
      author={Lior Shani and Tom Zahavy and Shie Mannor},
      year={2021},
      journal={arXiv:2102.06924},
}

@inproceedings{park2024disentangling,
  title={Disentangling Length from Quality in Direct Preference Optimization},
  author={Park, Ryan and Rafailov, Rafael and Ermon, Stefano and Finn, Chelsea},
  booktitle={Findings of the Association for Computational Linguistics: ACL 2024},
  pages={4998--5017},
  year={2024},
  publisher={Association for Computational Linguistics}
}

@inproceedings{gao2023scaling,
  title={Scaling Laws for Reward Model Overoptimization},
  author={Gao, Leo and Schulman, John and Hilton, Jacob},
  booktitle={International Conference on Machine Learning},
  pages={10835--10866},
  year={2023},
  organization={PMLR}
}

@article{gorbatovski2024learn,
  title={Learn your reference model for real good alignment},
  author={Gorbatovski, Alexey and Shaposhnikov, Boris and Malakhov, Alexey and Surnachev, Nikita and Aksenov, Yaroslav and Maksimov, Ian and Balagansky, Nikita and Gavrilov, Daniil},
  journal={arXiv preprint arXiv:2404.09656},
  year={2024}
}

@article{xu2024is,
  title={Is DPO Superior to PPO for LLM Alignment? A Comprehensive Study},
  author={Xu, Shusheng and Fu, Wei and Gao, Jiaxuan and Ye, Wenjie and Liu, Weiling and Mei, Zhiyu and Wang, Guangju and Yu, Chao and Wu, Yi},
  journal={arXiv preprint arXiv:2404.10719},
  year={2024}
}

@article{chen2025avoiding,
  title={Avoiding $\mathcal{O}(e^{R_{\max}})$ scaling in RLHF through Preference-based Exploration},
  author={Chen, Mingyu and Chen, Yiding and Sun, Wen and Zhang, Xuezhou},
  journal={arXiv preprint arXiv:2502.00666},
  year={2025}
}

@article{tuyls2025representation,
  title={Representation-Based Exploration for Language Models: From Test-Time to Post-Training},
  author={Tuyls, Jens and Foster, Dylan J and Krishnamurthy, Akshay and Ash, Jordan T},
  journal={arXiv preprint arXiv:2510.11686},
  year={2025}
}

@misc{deepseekai2025deepseekr1,
      title={DeepSeek-R1: Incentivizing Reasoning Capability in LLMs via Reinforcement Learning}, 
      author={DeepSeek-AI and Daya Guo and Dejian Yang and Haowei Zhang and Junxiao Song and Ruibin Yuan and Qihao Zhu and Wanghai Xu and Wentao Li and Y. K. Li and others},
      year={2025},
      eprint={2501.12948},
      archivePrefix={arXiv},
      primaryClass={cs.CL},
      url={https://arxiv.org/abs/2501.12948}, 
}

@article{bamba2025xrpo,
  title={XRPO: Pushing the limits of GRPO with Targeted Exploration and Exploitation},
  author={Bamba, Udbhav and Fang, Minghao and Yu, Yifan and Zheng, Haizhong and Lai, Fan},
  journal={arXiv preprint arXiv:2510.06672},
  year={2025}
}

@inproceedings{ishfaq2021randomized,
  title={Randomized exploration in reinforcement learning with general value function approximation},
  author={Ishfaq, Haque and Cui, Qiwen and Nguyen, Viet and Ayoub, Alex and Yang, Zhuoran and Wang, Zhaoran and Precup, Doina and Yang, Lin},
  booktitle={International Conference on Machine Learning},
  pages={4607--4616},
  year={2021},
  organization={PMLR}
}

@article{ishfaq2023provable,
  title={Provable and practical: Efficient exploration in reinforcement learning via langevin monte carlo},
  author={Ishfaq, Haque and Lan, Qingfeng and Xu, Pan and Mahmood, A Rupam and Precup, Doina and Anandkumar, Anima and Azizzadenesheli, Kamyar},
  journal={arXiv preprint arXiv:2305.18246},
  year={2023}
}

@article{ishfaq2025langevin,
  title={Langevin Soft Actor-Critic: Efficient Exploration through Uncertainty-Driven Critic Learning},
  author={Ishfaq, Haque and Wang, Guangyuan and Islam, Sami Nur and Precup, Doina},
  journal={arXiv preprint arXiv:2501.17827},
  year={2025}
}

@article{osband2014near,
  title={Near-optimal reinforcement learning in factored mdps},
  author={Osband, Ian and Van Roy, Benjamin},
  journal={Advances in Neural Information Processing Systems},
  volume={27},
  year={2014}
}

@inproceedings{osband2016generalization,
  title={Generalization and exploration via randomized value functions},
  author={Osband, Ian and Van Roy, Benjamin and Wen, Zheng},
  booktitle={International Conference on Machine Learning},
  pages={2377--2386},
  year={2016},
  organization={PMLR}
}

@article{osband2023epistemic,
  title={Epistemic neural networks},
  author={Osband, Ian and Wen, Zheng and Asghari, Seyed Mohammad and Dwaracherla, Vikranth and Ibrahimi, Morteza and Lu, Xiuyuan and Van Roy, Benjamin},
  journal={Advances in Neural Information Processing Systems},
  volume={36},
  pages={2795--2823},
  year={2023}
}

@inproceedings{osband2023approximate,
  title={Approximate thompson sampling via epistemic neural networks},
  author={Osband, Ian and Wen, Zheng and Asghari, Seyed Mohammad and Dwaracherla, Vikranth and Ibrahimi, Morteza and Lu, Xiuyuan and Van Roy, Benjamin},
  booktitle={Uncertainty in Artificial Intelligence},
  pages={1586--1595},
  year={2023},
  organization={PMLR}
}

@article{osband2018randomized,
  title={Randomized prior functions for deep reinforcement learning},
  author={Osband, Ian and Aslanides, John and Cassirer, Albin},
  journal={Advances in Neural Information Processing Systems},
  volume={31},
  year={2018}
}

@article{osband2016deep,
  title={Deep exploration via bootstrapped DQN},
  author={Osband, Ian and Blundell, Charles and Pritzel, Alexander and Van Roy, Benjamin},
  journal={Advances in neural information processing systems},
  volume={29},
  year={2016}
}

@article{o2021variational,
  title={Variational bayesian reinforcement learning with regret bounds},
  author={O'Donoghue, Brendan},
  journal={Advances in Neural Information Processing Systems},
  volume={34},
  pages={28208--28221},
  year={2021}
}

@article{tarbouriech2024probabilistic,
  title={Probabilistic inference in reinforcement learning done right},
  author={Tarbouriech, Jean and Lattimore, Tor and O'Donoghue, Brendan},
  journal={Advances in Neural Information Processing Systems},
  volume={36},
  year={2024}
}

@article{o2023efficient,
  title={Efficient exploration via epistemic-risk-seeking policy optimization},
  author={O'Donoghue, Brendan},
  journal={arXiv preprint arXiv:2302.09339},
  year={2023}
}

@article{chen2017ucb,
  title={Ucb exploration via q-ensembles},
  author={Chen, Richard Y and Sidor, Szymon and Abbeel, Pieter and Schulman, John},
  journal={arXiv preprint arXiv:1706.01502},
  year={2017}
}

@article{ishfaq2024more,
  title={More efficient randomized exploration for reinforcement learning via approximate sampling},
  author={Ishfaq, Haque and Tan, Yixin and Yang, Yu and Lan, Qingfeng and Lu, Jianfeng and Mahmood, A Rupam and Precup, Doina and Xu, Pan},
  journal={arXiv preprint arXiv:2406.12241},
  year={2024}
}

@article{ding2025multi,
  title={Multi-Layer GRPO: Enhancing Reasoning and Self-Correction in Large Language Models},
  author={Ding, Fei and Wang, Baiqiao and Zeng, Zijian and Wang, Youwei},
  journal={arXiv preprint arXiv:2506.04746},
  year={2025}
}

@inproceedings{munos2024nash,
  title={Nash Learning from Human Feedback},
  author={Munos, R{\'e}mi and Valko, Michal and Calandriello, Daniele and Gheshlaghi Azar, Mohammad and Rowland, Mark and Guo, Zhaohan Daniel and Tang, Yunhao and Geist, Matthieu and Mesnard, Thomas and Fiegel, C{\^o}me and Michi, Andrea and Selvi, Marco and Girgin, Sertan and Momchev, Nikola and Bachem, Olivier and Mankowitz, Daniel J and Precup, Doina and Piot, Bilal},
  booktitle={Proceedings of the 41st International Conference on Machine Learning (ICML)},
  year={2024},
  url={https://arxiv.org/abs/2312.00886}
}

@misc{guo2024direct,
      title={Direct Language Model Alignment from Online AI Feedback}, 
      author={Shangmin Guo and Biao Zhang and Tianlin Liu and Tianqi Liu and Misha Khalman and Felipe Llinares and Alexandre Rame and Thomas Mesnard and Yao Zhao and Bilal Piot and Johan Ferret and Mathieu Blondel},
      year={2024},
      eprint={2402.04792},
      archivePrefix={arXiv},
      primaryClass={cs.LG},
      url={https://arxiv.org/abs/2402.04792}, 
}

@article{yuan2025trajectory,
  title={Trajectory Bellman Residual Minimization: A Simple Value-Based Method for LLM Reasoning},
  author={Yuan, Yurun and Chen, Fan and Jia, Zeyu and Rakhlin, Alexander and Xie, Tengyang},
  journal={arXiv preprint arXiv:2505.15311},
  year={2025}
}

@misc{liu2025understanding,
      title={Understanding R1-Zero-Like Training: A Critical Perspective}, 
      author={Zichen Liu and others},
      year={2025},
      eprint={2503.20783},
      archivePrefix={arXiv},
      primaryClass={cs.LG},
      url={https://arxiv.org/abs/2503.20783},
}

@misc{shao2024deepseekmath,
      title={DeepSeekMath: Pushing the Limits of Mathematical Reasoning in Open Language Models}, 
      author={Zhihong Shao and Peiyi Wang and Qihao Zhu and Runxin Xu and Junxiao Song and Xiao Bi and Haowei Zhang and Mingchuan Zhang and Y.K. Li and Y. Wu and Daya Guo},
      year={2024},
      eprint={2402.03300},
      archivePrefix={arXiv},
      primaryClass={cs.CL},
      url={https://arxiv.org/abs/2402.03300}, 
}

@article{agarwal2025design,
  title={Design Considerations in Offline Preference-based RL},
  author={Agarwal, Alekh and Dann, Christoph and Marinov, Teodor V},
  journal={arXiv preprint arXiv:2502.06861},
  year={2025}
}

@article{huang2024correcting,
  title={Correcting the mythos of kl-regularization: Direct alignment without overoptimization via chi-squared preference optimization},
  author={Huang, Audrey and Zhan, Wenhao and Xie, Tengyang and Lee, Jason D and Sun, Wen and Krishnamurthy, Akshay and Foster, Dylan J},
  journal={arXiv preprint arXiv:2407.13399},
  year={2024}
}

@article{song2024importance,
  title={The importance of online data: Understanding preference fine-tuning via coverage},
  author={Song, Yuda and Swamy, Gokul and Singh, Aarti and Bagnell, J and Sun, Wen},
  journal={Advances in Neural Information Processing Systems},
  volume={37},
  pages={12243--12270},
  year={2024}
}

@inproceedings{azar2024general,
  title={A general theoretical paradigm to understand learning from human preferences},
  author={Azar, Mohammad Gheshlaghi and Guo, Zhaohan Daniel and Piot, Bilal and Munos, Remi and Rowland, Mark and Valko, Michal and Calandriello, Daniele},
  booktitle={International Conference on Artificial Intelligence and Statistics},
  pages={4447--4455},
  year={2024},
  organization={PMLR}
}

@article{coste2023reward,
  title={Reward model ensembles help mitigate overoptimization},
  author={Coste, Thomas and Anwar, Usman and Kirk, Robert and Krueger, David},
  journal={arXiv preprint arXiv:2310.02743},
  year={2023}
}

@article{meng2024simpo,
  title={Simpo: Simple preference optimization with a reference-free reward},
  author={Meng, Yu and Xia, Mengzhou and Chen, Danqi},
  journal={Advances in Neural Information Processing Systems},
  volume={37},
  pages={124198--124235},
  year={2024}
}

@misc{alpaca_eval,
  author = {Xuechen Li and Tianyi Zhang and Yann Dubois and Rohan Taori and Ishaan Gulrajani and Carlos Guestrin and Percy Liang and Tatsunori B. Hashimoto },
  title = {AlpacaEval: An Automatic Evaluator of Instruction-following Models},
  year = {2023},
  month = {5},
  publisher = {GitHub},
  journal = {GitHub repository},
  howpublished = {\url{https://github.com/tatsu-lab/alpaca_eval}}
}

@article{christiano2017deep,
  title={Deep reinforcement learning from human preferences},
  author={Christiano, Paul F and Leike, Jan and Brown, Tom and Martic, Miljan and Legg, Shane and Amodei, Dario},
  journal={Advances in neural information processing systems},
  volume={30},
  year={2017}
}

@article{ouyang2022training,
  title={Training language models to follow instructions with human feedback},
  author={Ouyang, Long and Wu, Jeffrey and Jiang, Xu and Almeida, Diogo and Wainwright, Carroll and Mishkin, Pamela and Zhang, Chong and Agarwal, Sandhini and Slama, Katarina and Ray, Alex and others},
  journal={Advances in neural information processing systems},
  volume={35},
  pages={27730--27744},
  year={2022}
}

@article{yadkori2024believe,
  title={To believe or not to believe your llm},
  author={Yadkori, Yasin Abbasi and Kuzborskij, Ilja and Gy{\"o}rgy, Andr{\'a}s and Szepesv{\'a}ri, Csaba},
  journal={arXiv preprint arXiv:2406.02543},
  year={2024}
}

@article{das2024active,
  title={Active preference optimization for sample efficient rlhf},
  author={Das, Nirjhar and Chakraborty, Souradip and Pacchiano, Aldo and Chowdhury, Sayak Ray},
  journal={arXiv preprint arXiv:2402.10500},
  year={2024}
}

@article{xie2024exploratory,
  title={Exploratory preference optimization: Harnessing implicit q*-approximation for sample-efficient rlhf},
  author={Xie, Tengyang and Foster, Dylan J and Krishnamurthy, Akshay and Rosset, Corby and Awadallah, Ahmed and Rakhlin, Alexander},
  journal={arXiv preprint arXiv:2405.21046},
  year={2024}
}

@article{moulin2025optimistically,
  title={Optimistically Optimistic Exploration for Provably Efficient Infinite-Horizon Reinforcement and Imitation Learning},
  author={Moulin, Antoine and Neu, Gergely and Viano, Luca},
  journal={arXiv preprint arXiv:2502.13900},
  year={2025}
}
